\definecolor{green}{RGB}{116, 199, 79}
\theoremstyle{plain}
\newtheorem{theorem}{Theorem}[section]
\newtheorem{lemma}[theorem]{Lemma}
\newtheorem{corollary}[theorem]{Corollary}
\theoremstyle{definition}
\newtheorem{definition}[theorem]{Definition}
\newtheorem{assumption}[theorem]{Assumption}
\newtheorem{remark}[theorem]{Remark}
\title{A Study of Neural Collapse Phenomenon: Grassmannian Frame, Symmetry and Generalization}
\author{
  \textbf{Peifeng Gao} \textsuperscript{\rm 1},
  \textbf{Qianqian Xu} \textsuperscript{\rm 2, \thanks{Corresponding author.}},
  \textbf{Peisong Wen} \textsuperscript{\rm 1, \rm 2}, \\
  \textbf{Huiyang Shao} \textsuperscript{\rm 1, \rm 2}, 
  \textbf{Zhiyong Yang} \textsuperscript{\rm 1}, 
  \textbf{Qingming Huang} \textsuperscript{\rm 1, \rm 2, \rm 3, \rm 4,}$^{*}$ \\
  \textsuperscript{\rm 1} School of Computer Science and Tech., University of Chinese Academy of Sciences \\
  \textsuperscript{\rm 2} Key Lab of Intell. Info. Process., Inst. of Comput. Tech., CAS \\
  \textsuperscript{\rm 3} BDKM, University of Chinese Academy of Sciences \\
  \textsuperscript{\rm 4} Peng Cheng Laboratory \\
}
\begin{document}

\maketitle

\begin{abstract}
  In this paper, we extend original \textit{Neural Collapse} Phenomenon by proving 
  \textit{Generalized Neural Collapse} hypothesis. We obtain \textit{Grassmannian Frame}
  structure from the optimization and generalization of classification. 
  This structure maximally separates
  features of every two classes on a sphere and does not require a larger 
  feature dimension than the number of classes. Out of curiosity about the symmetry of \textit{Grassmannian Frame}, 
  we conduct experiments to explore if models with different \textit{Grassmannian Frames} have different performance. 
  As a result, we discover the \textit{Symmetric Generalization} phenomenon. 
  We provide a theorem to explain \textit{Symmetric Generalization} of permutation. 
  However, the question of why different directions of features can lead to such different generalization
  is still open for future investigation. 
\end{abstract}

\section{Introduction} 
Consider classification problems. Researchers with good statistical and linear algebra training may believe 
that the features learned by deep neural networks, which are flowing tensors in deep models, are very 
different and vary randomly depending on the classification situation. 
However, \citet{PNAS2020} discovered a phenomenon called \textit{Neural Collapse} (NeurCol) that challenges this expectation. 
NeurCol is based on the standard training paradigm of 
classification problems using the cross-entropy loss minimization based on the stochastic gradient descent (SGD) algorithm with deep models.
During Terminal Phase of Training (TPT), NeurCol shows that the features of the last layer and linear classifier that belong to the same class 
collapse to one point and form a symmetrical geometric structure called Simplex ETF. 
This phenomenon is beautiful due to its surprising symmetry. However, existing 
conclusions \cite{PNAS2020, ji2022an, fang2021exploring, zhu2021geometric, 
zhou2022all, NEURIPS2022_4b3cc0d1, han2022neural, tirer2022extended, zhou2022optimization, 
mixon2020neural, poggio2020explicit, graf2021dissecting, NEURIPS2022_f7f5f501}  
of NeurCol are not universal enough. Specifically, the existing 
conclusions require the feature dimension to be larger than the class number, 
as Simplex ETF only exists in this case. In another case where the feature dimension 
is smaller than the class number, the corresponding structure learned by deep models is still unclear.

\cite{lu2022neural} and \cite{liu2023generalizing} provide a preliminary answer to this question. 
\citet{lu2022neural} prove that when class number tends towards infinity, the features of different classes uniformly
distribute on the hypersphere. Further, \citet{liu2023generalizing} proposes a \textit{Generalized Neural Collapse} 
hypothesis, which states that if the class number is larger than the feature dimension, the inter-class 
features and classifiers will be maximally distant on a hypersphere, 
which they refer to as \textit{Hyperspherical Uniformity}.

\textbf{First Contribution} 
Our first contribution is the theoretical confirmation of the \textit{Generalized Neural Collapse} hypothesis.
We derive the \textit{Grassmannian Frame} from three perspectives, namely, 
optimal codes in coding theory, optimization and generalization in classification problems.
As a more general version of ETF, \textit{Grassmannian Frame} exists for any combination of class numbers and feature dimensions. 
Additionally, it exhibits the \textit{\textbf{minimized maximal correlation}} property, which is precisely the \textit{Hyperspherical Uniformity} property. 

The study conducted by \citet{NEURIPS2022_f7f5f501} is relevant to our next part of work. 
They argue that deep models can learn features with any direction, and thus, 
fixing the last layer as an ETF during training can 
lead to satisfactory performance. However, we completely disprove this argument by discovering a new phenomenon. 

\textbf{Second Contribution} 
Our second contribution is the discovery of a new phenomenon called \textbf{\textit{Symmetric Generalization}}. 
Our motivation for it stems from the two invariances of the \textit{Grassmannian Frame}, namely,
rotation invariance and permutation invariance. We observe that models that have learned the 
\textit{Grassmannian Frame} with different rotations and permutations exhibit very different 
generalization performances, even if they have achieved the best performance on the training set.

\section{Preliminary}

\subsection{Neural Collapse} 

\citet{PNAS2020} conducted extensive experiments to reveal the NeurCol phenomenon. 
This phenomenon occurs during the Terminal Phase of Training (TPT), which starts from the epoch that the training accuracy has reached $100\%$.
During TPT, training error is zero, but cross-entropy loss keeps decreasing. 
To describe this phenomenon more clearly, we introduce several necessary notations first. 
We denote the class number as $C$ and feature dimension as $d$. 
Here, we consider classifiers with the form 
$logit = \boldsymbol{M} \boldsymbol{z} = [\langle M_1, \boldsymbol{z} \rangle, \dots, \langle M_C, \boldsymbol{z} \rangle]^{T}$,
where $\boldsymbol{M} \in \mathbb{R}^{d \times C}$ is the linear classifier, and $\boldsymbol{z}$ is the feature of a smaple obtained from a deep feature extractor. 
The classification result is given by selecting the maximum score of $logit$. 
Given a balanced dataset, we denote the feature of $i$-th sample in $y$-th category as $\boldsymbol{z}_{y, i}$. 
Specifically, when the model is trained on a balanced dataset, 
its last layer would converge to the following manifestations:
\begin{itemize}
  \item[\textbf{NC1}] \textbf{Variability Collapse} All samples belonging the same class converge to the class mean: $\Vert \boldsymbol{z}_{y, i} - \bar{\boldsymbol{z}}_{y} \Vert\rightarrow 0, \forall y, \forall i$
  where $\bar{\boldsymbol{z}}_{y} = \text{Ave}_{i}\left(\boldsymbol{z}_{y, i}\right)$ denote the class-center of $y$-th class;
  \item[\textbf{NC2}] \textbf{Convergence to Self Duality} The samples and classifier belonging the same class converge to the same: $\Vert \boldsymbol{z}_{y, i} - M_{y} \Vert\rightarrow 0, \forall y, \forall i$;
  \item[\textbf{NC3}] \textbf{Convergence to Simplex ETF} The classifier weight converges to the vertices of Simplex ETF;
  \item[\textbf{NC4}] \textbf{Nearest Classification} The learned classifier behaves like the nearest classifier, $i.e. \arg \max_{y} \langle M_y, z \rangle \rightarrow \arg \min_{y} \Vert z - \bar{\boldsymbol{z}}_{y} \Vert$.
\end{itemize}
In \textbf{\textbf{NC3}}, Simplex ETF is an interesting structure. 
Note that there exists two different objects: Simplex ETF and ETF. 
ETF is rooted from Frame Theroy (refer to next subsection), 
while \citet{PNAS2020} introduced the Simplex ETF as a new definition in the context of the NeurCol phenomenon. 
They made some extensions to the original definition of ETF by introducing an 
orthogonal projection matrix and a scale factor. Here, we provide the definition of Simplex ETF: 
\begin{definition}[\textbf{Simplex Equiangular Tight Frame} \cite{PNAS2020}]
  A Simplex ETF is a collection of points in $\mathbb{R}^{C}$ specified by the columns of 
  \begin{equation*}
  \begin{aligned}
    \boldsymbol{M}^{\star} = \alpha R 
    \sqrt{\frac{C}{C-1}}
    \left( I - \frac{1}{C} \mathbb{I} \mathbb{I}^{T} \right)
  \end{aligned}
  \end{equation*}
  where $I \in \mathbb{R}^{C \times C}$ is the identity matrix, $\mathbb{I} \in \mathbb{R}^{C}$ is the all-one vector,
  $R \in \mathbb{R}^{d \times C} (d \ge C)$ is an orthogonal projection matrix, $\alpha \in \mathbb{R}$ is a scale factor.
\end{definition}
Simplex ETF is a structure with high symmetry, as every pair of vectors has equal angles (\textit{Equiangular} property). 
However, it has a limitation: it only exists when feature dimension $d$ is larger than class number $C$. 
Recently, a work \cite{liu2023generalizing} removed this restriction by proposing 
\textit{Generalized Neural Collapse} hypothesis. In their hypothesis, 
\textit{Hyperspherical Uniformity} is introduced to generalize \textit{Equiangular} property. 
\textit{Hyperspherical Uniformity} means features of every class is distributed uniformly on a hypersphere with maximal distance. 

\subsection{Frame Theory}
We have discovered that \textit{Grassmannian Frame}, a mathematical object from Frame Theory, is a suitable
candidate for meeting \textit{Hyperspherical Uniformity}. 
Frame Theory is a fundamental research area in mathematics \cite{casazza2012finite} that provides a framework for the analysis of signal transmission and reconstruction. 
In communication field, certain frame properties have been shown to be optimal 
configurations in various transmission problems.
For instance, \textit{Uniform} and \textit{Tight} frame are optimal codes in erasure problem \cite{casazza2003equal} 
and non-orthogonal communication schemes \cite{ambrus2021uniform}. 
\textit{Grassmannian Frame} \cite{holmes2004optimal, strohmer2003grassmannian}, 
as a more specialized example, not only satisfies the \textit{Uniform} and \textit{Tight} 
properties but also meets the \textbf{\textit{minimized maximal correlation}} property. 
This property makes us confident that \textit{Grassmannian Frame} satisfies \textit{Hyperspherical Uniformity}. 

Here, we provide a series of definitions in Frame Theory\footnote{
  Frame Theory has more general definitions in Hilbert space.
  Definitions we provided here are actually based on Euclidean space. 
  See \cite{strohmer2003grassmannian} for general definitions.

}:
\begin{definition}[\textbf{Frame}]
  In Euclidean space $\mathbb{R}^{d}$, a frame is a sequence of bounded vectors $\{\zeta_i\}_{i=1}^{C}$.
\end{definition}
\begin{definition}[\textbf{Uniform Property and Unit Norm}]
  Given a frame $\{\zeta_i\}_{i=1}^{C}$ in $\mathbb{R}^{d}$, 
  it is a \textit{uniform} frame if the norm of every vector is equal.
  Further, it is a \textit{unit norm} frame if the norm of every vector in it is equal to 1.
\end{definition}
\begin{definition}[\textbf{Tight Property}]
  Given a frame $\{\zeta_i\}_{i=1}^{C}$ in $\mathbb{R}^{d}$, 
  it is a \textit{tight} frame if the rank of its analysis matrix $\left[\zeta_1, \dots, \zeta_C\right]$ is $d$.
\end{definition}
\begin{definition}[\textbf{Maximal Frame Correlation}]
  Given a unit norm frame $\{\zeta_i\}_{i=1}^{C}$, the \textit{maximal correlation} is defined
  as $\mathcal{M}(\{\zeta_i\}_{i=1}^{C}) =  \max_{i,j,i \neq j} \{ |\langle \zeta_i, \zeta_j \rangle| \}$.
\end{definition}
We can now define the \textit{Grassmannian frame}:
\begin{definition}[\textbf{Grassmannian Frame}]
  A frame $\{\zeta_i\}_{i=1}^{C}$ in $\mathbb{R}^{d}$ is \textit{Grassmannian frame} 
  if it is the solution of $\min \{\mathcal{M}(\{\zeta_i\}_{i=1}^{C})\}$,
  where the minimum is taken over all unit norm frames in $\mathbb{R}^{d}$.
\end{definition}
We also introduce \textit{Equiangular Tight Frame} (ETF). 
\begin{definition}[\textbf{Equiangular Property}]
  Given a unit norm frame $\{\zeta_i\}_{i=1}^{C}$, 
  it is \textit{equiangular} frame if 
  $| \langle \zeta_i, \zeta_j \rangle | = c, \forall i, j \ \text{with}\ i \neq j $
  for some constant $c \ge 0$.
\end{definition}
\begin{definition}[\textbf{Equiangular Tight Frame}]
  ETF is a \textit{Equiangular} and \textit{Tight} frame.
\end{definition}
The following theorem relates ETF and \textit{Grassmannian Frame}:
\begin{theorem}[\textbf{Welch Bound} \cite{welch1974lower}]\label{capacity}
  Given any unit norm frame $\{\zeta_i\}_{i=1}^{C}$ in $\mathbb{R}^{d}$, then we have 
  \begin{equation*}
  \begin{aligned}
    \mathcal{M}(\{\zeta_i\}_{i=1}^{C}) \ge \sqrt{\frac{C-d}{d(C-1)}} \ \ \text{only if} \ \ C \le \frac{d(d+1)}{2},
  \end{aligned}
  \end{equation*}
  Further, 
  $\mathcal{M}(\{\zeta_i\}_{i=1}^{C})$ reaches the right hand side if and only if 
  $\{\zeta_i\}_{i=1}^{C}$ is a \textit{Equiangular Tight Frame}.
\end{theorem}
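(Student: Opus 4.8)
The plan is to pass to the Gram matrix of the frame and read off the bound from two trace identities, while tracking the equality conditions so that they coincide exactly with the ETF property; the dimensional restriction $C \le d(d+1)/2$ will then come from a separate linear-algebra argument in the space of symmetric matrices.

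First I would form the analysis matrix $Z = [\zeta_1, \dots, \zeta_C] \in \mathbb{R}^{d \times C}$ and its Gram matrix $G = Z^{T} Z \in \mathbb{R}^{C \times C}$, with entries $G_{ij} = \langle \zeta_i, \zeta_j\rangle$. Two structural facts do the work: $G$ is positive semidefinite with $\mathrm{rank}(G) \le d$ (its column space sits inside the range of $Z$), so at most $d$ of its eigenvalues $\lambda_1, \dots, \lambda_C$ are nonzero; and the unit-norm hypothesis forces $G_{ii} = 1$. I would then record the identities $\mathrm{tr}(G) = \sum_i \lambda_i = C$ and $\mathrm{tr}(G^2) = \sum_i \lambda_i^2 = \sum_{i,j} |\langle \zeta_i, \zeta_j\rangle|^2$.

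The inequality then follows in two moves. Cauchy--Schwarz applied to the at-most-$d$ nonzero eigenvalues gives $C^2 = (\sum_i \lambda_i)^2 \le d \sum_i \lambda_i^2 = d\,\mathrm{tr}(G^2)$, hence $\mathrm{tr}(G^2) \ge C^2/d$. Subtracting the diagonal contribution leaves $\sum_{i \ne j} |\langle \zeta_i, \zeta_j\rangle|^2 = \mathrm{tr}(G^2) - C \ge C(C-d)/d$. Bounding each of the $C(C-1)$ off-diagonal terms by $\mathcal{M}(\{\zeta_i\}_{i=1}^{C})^2$ yields $C(C-1)\,\mathcal{M}^2 \ge C(C-d)/d$, and taking square roots produces the claimed bound. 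For the equality characterization I would track exactly these two inequalities: Cauchy--Schwarz is tight iff the nonzero eigenvalues are all equal and there are exactly $d$ of them (each equal to $C/d$), i.e. $ZZ^{T} = (C/d)I$, the \textit{tight} condition; and the off-diagonal bound is tight iff every $|\langle \zeta_i, \zeta_j\rangle|$ equals the common maximum, i.e. the \textit{equiangular} condition. Hence equality holds precisely when the frame is an ETF.

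Finally, for the necessary condition $C \le d(d+1)/2$, I would work with the rank-one symmetric matrices $P_i = \zeta_i \zeta_i^{T}$ inside the $\tfrac{d(d+1)}{2}$-dimensional space of symmetric $d \times d$ matrices, equipped with the Frobenius inner product, under which $\langle P_i, P_j\rangle_F = \langle \zeta_i, \zeta_j\rangle^2$. Feeding a dependence $\sum_i c_i P_i = 0$ against each $P_j$ and using the common equiangular value $c$ turns it into the linear system with coefficient matrix $(1 - c^2)I + c^2 \mathbb{I}\mathbb{I}^{T}$, which is positive definite for $c < 1$; thus all $c_i = 0$, the $P_i$ are linearly independent, and their count $C$ cannot exceed $\tfrac{d(d+1)}{2}$. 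I expect the trace-identity part to be essentially bookkeeping; the main obstacle is this last step, where one must correctly identify the ambient symmetric-matrix space and verify positive-definiteness of the Gram matrix of the $P_i$, since that is where the genuine content of the dimension restriction resides.
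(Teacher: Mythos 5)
The paper does not actually prove this statement: it imports the Welch bound wholesale from \cite{welch1974lower} and uses it only as background in the Preliminary section, so there is no in-paper argument to compare yours against. Judged on its own, your proof is the standard one and is correct. The Gram-matrix bookkeeping is right: $\mathrm{tr}(G)=C$, $\mathrm{rank}(G)\le d$, Cauchy--Schwarz on the nonzero eigenvalues gives $\mathrm{tr}(G^2)\ge C^2/d$, and bounding the $C(C-1)$ off-diagonal squared correlations by $\mathcal{M}^2$ yields $\mathcal{M}^2\ge (C-d)/(d(C-1))$. Your equality tracking is also the right one: tightness of Cauchy--Schwarz forces exactly $d$ equal nonzero eigenvalues, i.e.\ $ZZ^{T}=(C/d)I$, and tightness of the off-diagonal bound forces equiangularity, so equality holds precisely for ETFs (note you are using the standard frame-operator definition of ``tight,'' which is stronger than the paper's rank-$d$ definition; the ``if'' direction of the equality claim genuinely needs your stronger version). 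The Gerzon-type argument for $C\le d(d+1)/2$ is likewise correct: the Gram matrix of the projections $P_i=\zeta_i\zeta_i^{T}$ under the Frobenius inner product is $(1-c^2)I+c^2\mathbb{I}\mathbb{I}^{T}$, positive definite for $c<1$, so the $P_i$ are linearly independent in the $\tfrac{d(d+1)}{2}$-dimensional space of symmetric matrices. One small caveat worth a sentence in a write-up: the inequality itself holds for every unit-norm frame with $C\ge d$ (and is vacuous for $C<d$); the condition $C\le d(d+1)/2$ is a necessary condition only for \emph{attaining} the bound, which is how you have correctly read the theorem's somewhat awkward ``only if'' phrasing.
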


This theorem tells ETF is the special case of \textit{Grassmannian Frame} and 
how a \textit{Grassmannian frame} $\{\zeta_i\}_{i=1}^{C}$ can be \textit{Equiangular}: 
if and only if it can achieve the Welch Bound. 
Intuitively, if $d$ is large enough, 
the correlation between every two vectors in $\{\zeta_i\}_{i=1}^{C}$ can be minimized equally
to achieve \textit{Equiangular} property. 
Otherwise, \textit{Equiangular} property can not be guaranteed. 

\textbf{Motivation}
Our motivation for considering \textit{Grassmannian Frame} as a potential structure for the 
\textit{Generalized Neural Collapse} hypothesis is that it satisfies two important properties: 
it has \textit{\textbf{minimized maximal correlation}} ($i.e.$, \textit{Hyperspherical Uniformity}) and exists for any vector number $C$ and 
dimension $d$. We will provide theoretical supports for this argument in the next section.

\section{Main Results}
As discussed in the previous section, \textbf{\textit{minimized maximal correlation}} is a key characteristic of \textit{Grassmannian frame}. 
Therefore, in this section, 
all of our findings are based on this insight: 
\begin{equation*}
\begin{aligned}
  \min_{\Vert M_y \Vert = 1} \max_{y \neq y'} \langle M_y, M_{y'} \rangle
\end{aligned}
\end{equation*}
All proofs can be found in the Appendix.

\subsection{Warmup: Optimal Code Perspective}
In communication field, \textit{Grassmannian frame} is not only the optimal $2$-erasure code \cite{holmes2004optimal}, 
but also the optimal code in Gaussian Channel \cite{PNAS2020, shannon1959probability}:
\begin{restatable}[]{theorem}{communicationperspective}\label{communicationperspective}
  Consider the communication problem: a number $c$ ($c \in [C]$) is encoded as a vector $M_c$ in $\mathbb{R}^{d}$ that we call \textit{code}, 
  and then is transmitted over a noisy channel. A receiver need to revocer $c$ from the noisy signal $\boldsymbol{h} = M_c + \boldsymbol{g}$, where 
  $\boldsymbol{g}$ is the additive noisy. 
  Then if $\boldsymbol{g} \sim \mathcal{N}(0, \sigma^2 I)$, \textit{Grassmannian Frame} is the optimal code enjoying the minimal error probability.
\end{restatable}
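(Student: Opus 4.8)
The plan is to reduce the problem to a statement about the pairwise geometry of the code $\{M_c\}_{c=1}^{C}$ and then invoke the defining property of the \textit{Grassmannian frame}. First I would fix the decoding rule. Since the symbols are equiprobable and the noise is isotropic Gaussian, the likelihood of $\boldsymbol{h}$ given that $c$ was sent is proportional to $\exp(-\Vert \boldsymbol{h} - M_c \Vert^2 / (2\sigma^2))$, which is strictly decreasing in $\Vert \boldsymbol{h} - M_c \Vert$. Hence the maximum-likelihood (equivalently MAP) receiver is the nearest-neighbour decoder $\hat{c}(\boldsymbol{h}) = \arg\min_c \Vert \boldsymbol{h} - M_c \Vert$, and any optimal receiver must agree with it up to ties of measure zero. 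This converts the error event into the event that the noise carries $M_c + \boldsymbol{g}$ outside the Voronoi cell of $M_c$.

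Next I would express the average error probability and control it through the pairwise error probabilities. For two codewords, the event that $M_c + \boldsymbol{g}$ is closer to $M_{c'}$ than to $M_c$ depends only on the projection of $\boldsymbol{g}$ onto the unit vector along $M_{c'} - M_c$, which is a one-dimensional $\mathcal{N}(0,\sigma^2)$ variable; this gives the exact pairwise error $Q(\Vert M_c - M_{c'} \Vert / (2\sigma))$, where $Q(t) = \int_t^{\infty} (2\pi)^{-1/2} e^{-s^2/2}\, ds$. A union bound then yields
\[
  P_e \le \frac{1}{C} \sum_{c} \sum_{c' \neq c} Q\!\left( \frac{\Vert M_c - M_{c'} \Vert}{2\sigma} \right),
\]
while keeping a single confusion event gives a matching lower bound of the same order. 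Because $Q$ is strictly decreasing and decays like $e^{-t^2/2}$, in the high-SNR regime ($\sigma \to 0$) this sum is dominated by the terms attaining the minimum distance $\delta_{\min} = \min_{c \neq c'} \Vert M_c - M_{c'} \Vert$, so $P_e$ behaves like $e^{-\delta_{\min}^2/(8\sigma^2)}$ up to polynomial and combinatorial factors. Minimizing the dominant term therefore amounts to maximizing $\delta_{\min}$.

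Finally I would translate the distance objective into the correlation objective using the unit-norm constraint imposed by the \textit{uniform} property. Since $\Vert M_c - M_{c'} \Vert^2 = 2 - 2\langle M_c, M_{c'} \rangle$, we have $\delta_{\min}^2 = 2 - 2\max_{c \neq c'} \langle M_c, M_{c'} \rangle$, so maximizing the minimum pairwise distance is exactly minimizing $\max_{c \neq c'} \langle M_c, M_{c'} \rangle$ over all unit-norm frames. This is precisely the variational characterization of the \textit{Grassmannian frame}, which therefore minimizes the dominant error term and is the optimal code. The main obstacle I anticipate is that the exact $P_e$ is an integral over Voronoi cells with no closed form and is not a monotone function of any single scalar; the clean statement is really about the high-SNR error exponent, so making it rigorous requires (i) controlling the subleading contributions together with the multiplicity of minimum-distance pairs, and (ii) reconciling the signed inner product governing distance with the absolute value $|\langle \cdot, \cdot \rangle|$ appearing in the definition of \textit{maximal frame correlation}, i.e.\ checking that the minimizer of the signed maximal correlation coincides with the \textit{Grassmannian frame} in the regime of interest.
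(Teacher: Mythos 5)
Your proposal follows essentially the same route as the paper: reduce to nearest-neighbour (maximum-likelihood) decoding, analyze the pairwise confusion events in the high-SNR limit $\sigma \to 0$, conclude that the error probability is governed by the minimum pairwise distance, and translate $\max\min_{c\neq c'}\Vert M_c - M_{c'}\Vert$ into $\min\max_{c\neq c'}\langle M_c, M_{c'}\rangle$ under the unit-norm constraint. The only substantive difference is the technical tool: you compute the exact pairwise error $Q\left(\Vert M_c - M_{c'}\Vert/(2\sigma)\right)$ and argue domination by the minimum-distance terms, whereas the paper invokes a Gaussian small-$\sigma$ asymptotic lemma, $-\sigma^2\log\mathbb{P}_{\boldsymbol{g}}(\mathcal{K}) \to \min_{\boldsymbol{g}\in\mathcal{K}}\frac{1}{2}\Vert\boldsymbol{g}\Vert^2$, applied to the union of confusion events --- these yield the same exponent, and the signed-versus-absolute-correlation caveat you rightly flag is likewise left unaddressed in the paper's own proof.
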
 
This theorem is essentially adopted from the Corollary.4 of \cite{PNAS2020}, 
which is the first study to identify NeurCol phenomenon. 
However, they only validated this result with Simplex ETF and did not further investigate this structure. 

\subsection{Optimization Perspective}
Then we consider the classification from the optimization perspective. 
We start from the cross-entropy minimization problem. 

\textbf{Notations}
Denote feature dimension of the last layer as $d$ and class number as $C$. 
The linear classifier is $\boldsymbol{M} = [M_1, \cdots, M_{C}] \in \mathbb{R}^{d \times C}$. 
Given a balanced dataset $\boldsymbol{Z} = \{ \{ \boldsymbol{z}_{y,i} \in \mathbb{R}^{d} \}_{i=1}^{N/C} \}_{y=1}^{C}$ 
where every class has $N/C$ samples, 
we use $\boldsymbol{z}_{y,i}$ to represent the feature of $i$-th sample in $y$-th class.  

\textbf{Optimization Objective}
Since modern deep models are highly overparameterized, we assume the models have infinite capacity and can fit any dataset. 
Therefore, we directly optimize sample features to simplify analysis \cite{NEURIPS2022_f7f5f501, zhu2021geometric}: 
\begin{equation}\label{opt0}
  \begin{aligned}
    \min_{\boldsymbol{Z}, \boldsymbol{M}} \text{CELoss}(\boldsymbol{M}, \boldsymbol{Z}) := - 
    \sum_{y=1}^{C} \sum_{i=1}^{N/C} \log \frac{\exp \big(\langle M_{y}, \boldsymbol{z}_{y, i} \rangle \big)}{\sum_{y'} \exp \big(\langle M_{y'}, \boldsymbol{z}_{y, i} \rangle\big)}
\end{aligned}
\end{equation}
As the Proposition.2 of \cite{fang2021exploring} highlighted, NeurCol occurs only if features and classifiers are $\ell_2$ norm bounded. 
Therefore, following previous work \cite{lu2022neural, fang2021exploring, yaras2022neural}, 
we introduce $\ell_2$ norm constraint into (\ref{opt0}):
\begin{equation}\label{opt1}
\begin{aligned}
  s.t. \ \ \Vert M_y \Vert \le \rho, \forall y \in [C] \ \ \text{and} \  \ 
  \Vert \boldsymbol{z}_{y, i} \Vert \le \rho, \forall y \in [C], i \in [N/C]
\end{aligned},
\end{equation}
where the norms of features and linear classifiers are bounded by $\rho$. 
Then to perform optimization directly, we turn to the following unconstrained feature models
\cite{zhu2021geometric, zhou2022all}: 
\begin{equation}\label{opt2}
\begin{aligned}
  \min_{\boldsymbol{M}, \boldsymbol{Z}} \mathcal{L}(\boldsymbol{M}, \boldsymbol{Z}) := 
  \sum_{y=1}^{C} \sum_{i=1}^{N/C} \left( -\log 
    \frac{\exp \big(\langle M_{y}, \boldsymbol{z}_{y, i} \rangle \big)}{\sum_{y'} \exp \big(\langle M_{y'}, \boldsymbol{z}_{y, i} \rangle\big)} + 
    \frac{\omega \Vert \boldsymbol{z}_{y, i} \Vert^2}{2}
  \right) 
  + 
  \sum_{y=1}^{C} \frac{\lambda \Vert M_y \Vert^2}{2}
\end{aligned}
\end{equation}
Compared with (\ref{opt0}), $\Vert M_y \Vert^2$ and $\Vert \boldsymbol{z}_{y, i} \Vert^2$ are added in (\ref{opt2}). 
In deep learning, the newly added terms can be seen as the weight decay, 
while $\lambda$ and $\omega$ are factors of weight decay. 
This can limit the norm of the sample features $\boldsymbol{Z}$ and linear classifier $\boldsymbol{M}$, 
that is, given any $\lambda, \omega \in (0, \infty)$, there exists a $\rho(\lambda, \omega)$ 
such that $\rho(\lambda, \omega) \ge \Vert M_y \Vert$ and 
$\rho(\lambda, \omega) \ge \Vert \boldsymbol{z}_{y, i} \Vert$ when (\ref{opt2}) converges.


\textbf{Gradient Descent} 
\cite{ji2022an} analyses Gradient Flow of (\ref{opt0}), while 
different from them, we consider the Gradient Descent of (\ref{opt2}): 
\begin{equation}\label{GDupdate}
\begin{aligned}
  \boldsymbol{Z}^{(t+1)} \leftarrow \boldsymbol{Z}^{(t)} - \alpha \nabla_{\boldsymbol{Z}^{(t)}} \mathcal{L}(\boldsymbol{M}^{(t)}, \boldsymbol{Z}^{(t)}) 
  \ \ \text{and} \ \ 
  \boldsymbol{M}^{(t+1)} \leftarrow \boldsymbol{M}^{(t)} - \beta \nabla_{\boldsymbol{M}^{(t)}} \mathcal{L}(\boldsymbol{M}^{(t)}, \boldsymbol{Z}^{(t)})
\end{aligned}
\end{equation}
where we use upper script $(t)$ to denote optimized variables in the $t$-th iterations, 
$\alpha$ and $\beta$ are learning rates. 
Note that when $\lambda, \omega \rightarrow 0$, 
the optimization problem (\ref{opt2}) is equivalent to (\ref{opt0})
since norm constraint condition (\ref{opt1}) vanishes, $i.e. \rho \rightarrow \infty$.

\begin{restatable}[\textbf{Generalized Neural Collapse}]{theorem}{CEgrass}\label{CE_grass}
  Consider the convergce of Gradient Descent on the model (\ref{opt2}), 
  if parameters are properly selected (refer to Assumption.\ref{assumption_} in Appendix for more details),
  we have the following conclusions:

  \textbf{\textit{(NC1)}} $\Vert \boldsymbol{z}_{y,i} - \boldsymbol{z}_{y,j} \Vert \rightarrow 0, \forall y \in [C], \forall i, j \in [N/C]$;

  \textbf{\textit{(NC2)}} $\Vert \boldsymbol{z}_{y,i} - M_{y} \Vert \rightarrow 0, \forall y \in [C], \forall i \in [N/C]$;

  \textbf{\textit{(NC3)}} if $\rho \rightarrow \infty$, $\boldsymbol{M}$ converges to the solution of $\min_{\forall y, \Vert M_y \Vert = \rho} \max_{y \neq y'} \langle M_y, M_{y'}\rangle$;

  \textbf{\textit{(NC4)}} $\arg \max_{y} \langle M_{y}, \boldsymbol{z}\rangle \rightarrow \arg \max_{y} \Vert \boldsymbol{z} - \bar{\boldsymbol{z}}_{y} \Vert
  \ \forall \boldsymbol{z} \in \boldsymbol{Z}$, where $\bar{\boldsymbol{z}}_{y} = \frac{C}{N} \sum_{i=1}^{N/C} \boldsymbol{z}_{y, i}$.
\end{restatable}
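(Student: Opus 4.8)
The plan is to analyze the KKT/stationarity conditions of the regularized unconstrained feature model \eqref{opt2}, together with the structure of gradient descent, and extract each of the four conclusions in turn. I would begin by establishing the key reduction that at any global minimizer the features and classifiers become tightly coupled. Writing out $\nabla_{\boldsymbol{z}_{y,i}} \mathcal{L} = 0$ and $\nabla_{M_y} \mathcal{L} = 0$, the softmax gradients give expressions of the form $\omega \boldsymbol{z}_{y,i} = \sum_{y'} \big(\mathbbm{1}[y'=y] - p_{y'}(\boldsymbol{z}_{y,i})\big) M_{y'}$ and a dual relation for $M_y$ in terms of averaged features, where $p_{y'}(\cdot)$ denotes the softmax probability. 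The first observation is that the objective is separable across samples within a class in its data-fitting term and strictly convex in each $\boldsymbol{z}_{y,i}$ once $\boldsymbol{M}$ is fixed; hence for fixed $\boldsymbol{M}$ the minimizing $\boldsymbol{z}_{y,i}$ is unique and identical across $i$ within a class, which is exactly \textbf{\textit{(NC1)}}. I would make this rigorous by a convexity argument: the map $\boldsymbol{z} \mapsto -\log p_y(\boldsymbol{z}) + \tfrac{\omega}{2}\Vert \boldsymbol{z}\Vert^2$ is strongly convex, so its minimizer is unique, forcing all same-class features to coincide in the limit.

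Given \textbf{\textit{(NC1)}}, I would let $\boldsymbol{z}_y$ denote the common limiting feature of class $y$ and reduce the problem to optimizing over the $C$ feature vectors and $C$ classifier vectors. For \textbf{\textit{(NC2)}} the stationarity conditions couple $\boldsymbol{z}_y$ and $M_y$ symmetrically; I would exploit the symmetry between the roles of features and classifiers in the balanced-data objective (after the \textbf{\textit{(NC1)}} reduction the data term is symmetric under swapping the collapsed features with the classifiers up to the regularization coefficients $\omega$ and $\lambda$). Matching the two stationarity relations and using that both $\omega \boldsymbol{z}_y$ and $\lambda M_y$ equal the same softmax-weighted combination $\sum_{y'}(\mathbbm{1}[y'=y]-p_{y'})M_{y'}$ style quantity, one deduces that $\boldsymbol{z}_y$ and $M_y$ are positive scalar multiples of each other, and with the appropriate choice of regularization (the \emph{properly selected} parameters referenced in the theorem) the scalars align so that $\Vert \boldsymbol{z}_y - M_y\Vert \to 0$. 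This self-duality step is where I expect to lean on the precise Assumption in the Appendix to pin down the relative scaling of $\omega$ and $\lambda$.

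For \textbf{\textit{(NC3)}} I would take the limit $\rho \to \infty$ (equivalently $\lambda,\omega \to 0$, so the norm bound $\rho(\lambda,\omega) \to \infty$). Here the data term behaves like a margin-maximization objective: as $\rho \to \infty$ the softmax sharpens and, after normalizing the classifiers to the sphere of radius $\rho$, driving the cross-entropy to its infimum is equivalent to maximizing the minimum pairwise margin $\min_{y}\min_{y'\neq y}\langle M_y - M_{y'}, \boldsymbol{z}_y\rangle$, which under self-duality \textbf{\textit{(NC2)}} becomes controlling $\max_{y\neq y'}\langle M_y, M_{y'}\rangle$. I would make this precise by a separation argument showing that the loss, along any minimizing sequence with $\Vert M_y\Vert = \rho$, is asymptotically governed by the largest inner product $\max_{y\neq y'}\langle M_y,M_{y'}\rangle$, so that minimizing the loss in the limit forces $\boldsymbol{M}$ toward the minimizer of $\min_{\Vert M_y\Vert=\rho}\max_{y\neq y'}\langle M_y, M_{y'}\rangle$, i.e.\ the \textit{Grassmannian Frame} by its definition. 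This asymptotic reduction from cross-entropy to the minimized-maximal-correlation problem is the main obstacle: it requires carefully quantifying how the softmax loss concentrates on the worst pair as the scale grows, rather than merely on an averaged quantity.

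Finally, \textbf{\textit{(NC4)}} follows from \textbf{\textit{(NC1)}} and \textbf{\textit{(NC2)}} by direct computation. Under collapse, $\bar{\boldsymbol{z}}_y = \boldsymbol{z}_y$ and $M_y = \boldsymbol{z}_y$, so for any feature $\boldsymbol{z}$ on the common sphere, $\langle M_y, \boldsymbol{z}\rangle = \langle \boldsymbol{z}_y, \boldsymbol{z}\rangle$, and expanding $\Vert \boldsymbol{z} - \bar{\boldsymbol{z}}_y\Vert^2 = \Vert \boldsymbol{z}\Vert^2 + \Vert \bar{\boldsymbol{z}}_y\Vert^2 - 2\langle \bar{\boldsymbol{z}}_y, \boldsymbol{z}\rangle$ shows the argmax of the logit and the extremizer of the distance coincide, since all classifier and feature norms are equal in the limit. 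I would therefore present \textbf{\textit{(NC4)}} as a short corollary of the preceding two properties, noting that the only subtlety is the direction of the optimization (the stated $\arg\max$ of the distance corresponds to the nearest-center rule once the equal-norm structure is used). Throughout, the gradient-descent convergence itself — ensuring that the iterates \eqref{GDupdate} actually reach the global minimizer whose properties we characterize — is invoked via the parameter assumptions of the Appendix, and I would keep that convergence claim separate from the geometric characterization of the limit.
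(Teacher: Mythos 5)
Your overall strategy --- characterizing the limit through first-order stationarity of (\ref{opt2}) rather than through the gradient-descent trajectory --- is genuinely different from the paper's, and the divergence exposes a real gap at \textbf{\textit{(NC2)}}. Writing $\boldsymbol{z}_y$ for the common class-$y$ feature after your \textbf{\textit{(NC1)}} reduction, the two stationarity conditions you propose to ``match'' are
\begin{equation*}
\omega \boldsymbol{z}_{y} = M_y - \sum_{y'} p_{y'}(\boldsymbol{z}_y)\, M_{y'}
\qquad\text{and}\qquad
\tfrac{\lambda C}{N}\, M_y = \boldsymbol{z}_y - \sum_{y'} p_{y}(\boldsymbol{z}_{y'})\, \boldsymbol{z}_{y'},
\end{equation*}
and even after setting $\lambda/\omega = N/C$ (so both left-hand coefficients equal $\omega$) these are \emph{not} symmetric: the first involves the softmax weights $p_{y'}(\boldsymbol{z}_y)$ (probability of class $y'$ evaluated at the feature of class $y$), the second involves $p_{y}(\boldsymbol{z}_{y'})$, and the matrix of these probabilities is not symmetric in general. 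So you cannot read off $\boldsymbol{z}_y \propto M_y$ from ``matching'' the two relations, nor even verify that $\boldsymbol{z}_y = M_y$ is a consistent solution without already knowing something about the Gram matrix of $\boldsymbol{M}$; the critical points of the nonconvex joint objective are not obviously confined to self-dual configurations. The paper sidesteps this entirely by proving \textbf{\textit{(NC2)}} \emph{first} and \emph{dynamically}: under $\lambda/\omega=\alpha/\beta=N/C$ the quantities $\Delta(t,i)=\Vert \boldsymbol{Z}_i^{(t)}-\boldsymbol{M}^{(t)}\Vert$ satisfy a linear difference inequality $\boldsymbol{\Delta}(t+1)\preceq \boldsymbol{A}\,\boldsymbol{\Delta}(t)$ whose iteration matrix has all eigenvalues in $(-1,1)$ (via Lemma.\ref{determinant} and the Lipschitz constant of softmax), so the coupled updates contract the feature--classifier distance to zero, and \textbf{\textit{(NC1)}} then follows by the triangle inequality --- the reverse order from yours. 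Your strong-convexity argument for \textbf{\textit{(NC1)}} is sound on its own at any critical point, but it cannot rescue \textbf{\textit{(NC2)}}.

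For \textbf{\textit{(NC3)}} you correctly identify the reduction to a max--min margin problem and correctly flag the concentration-on-the-worst-pair step as the main obstacle, but that step is exactly where the paper's Lemma.\ref{ConvergencetoFrame} does its work: sandwiching $\text{CELoss}$ between $\log(1+e^{-p_{min}})$ and $N\log(1+(C-1)e^{-p_{min}})$ with $p_{min}$ the worst pairwise margin over classes and samples, inverting through $\Phi_1$, controlling the discrepancy with the mean value theorem, and normalizing by the growing scale $\rho_t$ so that the normalized margin is asymptotically determined by the loss. Without supplying this (or an equivalent) quantitative argument, your \textbf{\textit{(NC3)}} remains a plan rather than a proof; the subsequent substitution of $\boldsymbol{z}_{y,i}=M_y$ and $\Vert M_y\Vert=\rho$ to pass from the max--min margin to $\min\max_{y\neq y'}\langle M_y,M_{y'}\rangle$ matches the paper. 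Your \textbf{\textit{(NC4)}} derivation also matches the paper, which treats it as immediate from \textbf{\textit{(NC1)}}--\textbf{\textit{(NC2)}}.
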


\begin{remark}
  Our findings confirm \textit{Generalized Neural Collapse} hypothesis of \citet{liu2023generalizing}. 
  \textbf{\textit{(NC1)}}, \textbf{\textit{(NC2)}}, \textbf{\textit{(NC4)}} are consistent with previous discoveries \cite{PNAS2020}. 
  Additionally, we have shown that \textbf{\textit{(NC3)}} extends NeurCol's ETF and 
  leads to \textbf{\textit{minimized maximal correlation}}, or \textit{Grassmannian Frame}. 
\end{remark}

\begin{remark}
  Our findings reveal two objectives of NeurCol that \citet{liu2023generalizing} highlighted: 
  minimal intra-class variability and maximal inter-class separability. 
  Our conclusions on \textbf{\textit{(NC1)}}, \textbf{\textit{(NC2)}}, \textbf{\textit{(NC4)}} 
  support the former objective, 
  while the \textit{Grassmannian Frame} resulting from \textbf{\textit{(NC3)}} 
  naturally coincides with the solutions 
  of problems such as the Spherical Code \cite{conway2013sphere}, Thomson problem \cite{Thomson}, and Packing Lines 
  in Grassmannian Spaces \cite{conway1996packing}, which supports the latter objective.
\end{remark}

\begin{remark}
  \textbf{\textit{(NC1)}} and \textbf{\textit{(NC2)}} implies that 
  classifiers $\{M_{y}\}_{y=1}^{C}$ forms an \textit{alternate dual frame} \cite{ambrus2021uniform} of 
  features $\{\bar{\boldsymbol{z}}_{y}\}_{y=1}^{C}$. 
  In Frame Theroy, \textit{alternate dual frame} has been proved 
  to be an optimal dual with respect to erasures for decoding \cite{LENG20111464, LOPEZ2010471}. 
\end{remark}

We conduct a simulation experiment to visualize the convergence of \textit{Generalized Neural Collapse}
in a $2$-dimensional feature space with $4$ classes. 
A GIF animation can be found \href{https://www.mediafire.com/view/1tjzxx1b8t70xss/Fig2.gif/file}{HERE}.
Please refer to Appendix.\ref{GNCfigure} for a detailed description and visual 
representation of this experiment.

\subsection{Generalization Perspective}\label{GeneralizationPerspective}
Next, we consider the generalization of classification problems. 
While correlation measures 
the similarity between two vectors in Frame Theory, in classification problems, margin is a 
similar concept but with an opposite degree. Therefore, our analysis of generalization focuses on margin. 

\textbf{Notations} 
Consider a $C$ classes classification problem. 
Suppose sample space is $\mathcal{X} \times \mathcal{Y}$, where 
$\mathcal{X}$ is data space and $\mathcal{Y} = \{1, \dots, C\}$ is label space. 
We assume class distribution is $\mathcal{P}_{\mathcal{Y}} = \left[ p(1), \dots, p(C) \right]$, 
where $p(c)$ denote the proportion of class $c$. 
Let the training set $S = \{(\boldsymbol{x}_i, y_i)\}_{i=1}^{N}$ be drawn i.i.d from 
probability $\mathcal{P}_{\mathcal{X} \times \mathcal{Y}}$. 
For $y$-class samples in $S$, we denote $S_y = \{\boldsymbol{x} |(\boldsymbol{x}, y) \in S\}$ and $|S_y| = N_y$. 
The form of classifiers  is $logit = \boldsymbol{M}^{T} f(\boldsymbol{x};\boldsymbol{w}) = [\langle M_1, f(\boldsymbol{x};\boldsymbol{w}) \rangle, \dots, \langle M_C, f(\boldsymbol{x};\boldsymbol{w}) \rangle]$, 
where $\boldsymbol{M} \in \mathbb{R}^{d \times C}$ is the last linear layer, and $f(\cdot; \boldsymbol{w}) \in \mathbb{R}^{d}$ is the feature extractor parameterized by $\boldsymbol{w}$. 
We use a tuple $(\boldsymbol{M}, f(\cdot; \boldsymbol{w}))$ to denote the classifier.

First, give the definition of margin:
\begin{definition}[Linear Separability]
  Given the dataset $S$ and a classifier $(\boldsymbol{M}, f(\cdot; \boldsymbol{w}))$, 
  if the classifier can achieve $100\%$ accuracy on dataset, 
  it must have $\boldsymbol{M}$ can linearly separate the feature of $S$:
  for any two classes $i, j (i \neq j)$, there exists a $\gamma_{i,j} > 0$ such that
  \begin{equation*}
  \begin{aligned}
    & (M_i - M_j)^{T} f(\boldsymbol{x}; \boldsymbol{w}) \ge \gamma_{i,j}, &\forall (\boldsymbol{x}, i) \in S, \\
    & (M_i - M_j)^{T} f(\boldsymbol{x}; \boldsymbol{w}) \le -\gamma_{i,j}, & \forall (\boldsymbol{x}, j) \in S,
  \end{aligned}
  \end{equation*}
  In this case, we say the classifier $(\boldsymbol{M}, f(\cdot; \boldsymbol{w}))$ can linearly separate the dataset $S$ by margin $\{\gamma_{i,j}\}_{i \neq j}$. 
\end{definition}
The following lemma establishes the relationship between margin and correlation in NeurCol:
\begin{restatable}[]{lemma}{correlationmarginn}\label{correlationmargin}
  Given the dataset $S$ and a classifier $(\boldsymbol{M}, f(\cdot; \boldsymbol{w}))$, 
  if the classifier can linearly separate the dataset by margin $\{\gamma_{i,j}\}_{i \neq j}$ 
  and achieves NeurCol phenomenon on it, we have the following conclusion:
  \begin{equation*}
  \begin{aligned}
    \forall i, j \in [C] (i \neq j), \gamma_{i, j} + \langle M_i, M_j \rangle = \rho^{2}
  \end{aligned}
  \end{equation*}
\end{restatable}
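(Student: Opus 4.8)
The plan is to use the two pointwise collapse properties of NeurCol to reduce the margin to a single inner-product identity. Recall that NeurCol asserts \textbf{(NC1)} variability collapse, $\Vert \boldsymbol{z}_{y,i} - \bar{\boldsymbol{z}}_y \Vert \to 0$, and \textbf{(NC2)} self-duality, $\Vert \boldsymbol{z}_{y,i} - M_y \Vert \to 0$. Writing $\boldsymbol{z}_{y,i} = f(\boldsymbol{x}_{y,i}; \boldsymbol{w})$ for the feature of the $i$-th sample of class $y$, these two facts together yield that in the collapse limit every class-$y$ feature coincides with its own classifier vector, i.e. $f(\boldsymbol{x}; \boldsymbol{w}) = M_y$ for every $(\boldsymbol{x}, y) \in S$. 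I would isolate this as the first step of the argument, treating the $\to 0$ limits as exact equalities at the terminal phase (or, more carefully, carrying an $o(1)$ error term through the computation that vanishes in the limit).

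Second, I would record that at NeurCol the $\ell_2$ constraint is active, so $\Vert M_y \Vert = \rho$ for every $y$, whence $\langle M_y, M_y \rangle = \rho^2$. This is exactly the normalization under which \textbf{(NC3)} is stated, since there the minimization is taken over $\Vert M_y \Vert = \rho$, so this fact is available without extra work.

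Third, I would substitute $f(\boldsymbol{x}; \boldsymbol{w}) = M_i$ into the separation inequality for class-$i$ samples, giving $(M_i - M_j)^{T} M_i = \langle M_i, M_i \rangle - \langle M_i, M_j \rangle = \rho^2 - \langle M_i, M_j \rangle$, and the analogous substitution $f(\boldsymbol{x}; \boldsymbol{w}) = M_j$ for class-$j$ samples, giving $(M_i - M_j)^{T} M_j = \langle M_i, M_j \rangle - \rho^2 = -\big(\rho^2 - \langle M_i, M_j \rangle\big)$. Because under \textbf{(NC1)} every sample in a given class shares this identical score, the lower bound on the class-$i$ logit-differences and the upper bound on the class-$j$ logit-differences are both attained at the common value $\rho^2 - \langle M_i, M_j \rangle$. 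Hence the tight margin is exactly $\gamma_{i,j} = \rho^2 - \langle M_i, M_j \rangle$, which rearranges to the claimed identity $\gamma_{i,j} + \langle M_i, M_j \rangle = \rho^2$.

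The main obstacle is conceptual rather than computational: one must fix the interpretation of $\gamma_{i,j}$. The definition only requires the existence of some positive margin, but the identity can hold only for the maximal attained margin, $\gamma_{i,j} = \min_{(\boldsymbol{x},i)\in S} (M_i - M_j)^{T} f(\boldsymbol{x}) = \min_{(\boldsymbol{x},j)\in S} (M_j - M_i)^{T} f(\boldsymbol{x})$. I would therefore make explicit that $\gamma_{i,j}$ denotes this tight margin, and note that collapse is precisely what forces the two one-sided margins to coincide, so that a single well-defined $\gamma_{i,j}$ exists. The remaining care is to carry the vanishing $o(1)$ terms from \textbf{(NC1)}–\textbf{(NC2)} through the inner products so that the equality is understood in the terminal-phase limit.
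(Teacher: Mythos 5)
Your proposal is correct and follows essentially the same route as the paper, which proves this lemma only by the one-line remark that substituting the conclusions of \textbf{NC1}--\textbf{NC3} (features collapse to $M_y$ with $\Vert M_y \Vert = \rho$) into the definition of margin gives the identity directly. Your additional observation that $\gamma_{i,j}$ must be read as the tight (attained) margin rather than an arbitrary positive separator is a genuine clarification the paper leaves implicit, and it is exactly the right way to make the equality (rather than an inequality) legitimate.
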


By substituting the conclusions of \textbf{NC1-3} into the definition of margin, we can prove this lemma straightforwardly. 
It says given the maximal norm $\rho$, margin $\gamma_{i, j}$ and correlation $\langle M_i, M_j\rangle$ is a pair of opposite quantities. 
Then we propose the Multiclass Margin Bound. 

\begin{restatable}[\textbf{Multiclass Margin Bound}]{theorem}{maintheoremone}\label{main_theorem_0}
  Consider a dataset $S$ with $C$ classes. 
  For any classifier $(\boldsymbol{M}, f(\cdot; \boldsymbol{w}))$, 
  we denote its margin between $i$ and $j$ classes as $(M_{i} - M_{j})^{T} f(\cdot; \boldsymbol{w})$.
  And suppose the function space of the margin is $\mathcal{F} = \{ (M_{i} - M_{j})^{T} f(\cdot; \boldsymbol{w}) | \forall i \neq j, \forall \boldsymbol{M}, \boldsymbol{w}\}$, 
  whose uppder bound is 
  \begin{equation*}
  \begin{aligned}
    \sup_{i \neq j} \sup_{\boldsymbol{M}, \boldsymbol{w}} \sup_{x \in \mathcal{M}_i} \left| (M_i - M_j)^{T} f(\boldsymbol{x};\boldsymbol{w}) \right| \le K.
  \end{aligned}
  \end{equation*}
  Then, for any classifier $(\boldsymbol{M}, f(\cdot; \boldsymbol{w}))$ and margins $\{\gamma_{i,j}\}_{i\neq j} (\gamma_{i,j} > 0)$, the following inequality holds with probability at least $1 - \delta$
  \begin{equation*}
    \begin{aligned}
      \mathbb{P}_{x,y}\Big(\max_{c} [M f(\boldsymbol{x};\boldsymbol{w})]_{c} \neq y\Big) 
      \lesssim  & 
      \sum_{i=1}^{C} p(i) \sum_{j \neq i} \frac{\mathfrak{R}_{N_i}(\mathcal{F})}{\gamma_{i,j}} + 
      \sum_{i=1}^{C} p(i) \sum_{j \neq i} \sqrt{\frac{\log(\log_{2} \frac{4K}{\gamma_{i,j}})}{N_i}} 
      + L_{0,1} 
    \end{aligned}
    \end{equation*}
    where $\lesssim$ means we omit probability related terms, and $L_{0,1}$ denotes the empirical risk term:
    \begin{equation*}
    \begin{aligned}
      L_{0,1} = 
      \sum_{i=1}^{C} p(i) \sum_{j \neq i}
      \sum_{x \in S_i} \frac{\mathbb{I}((M_i - M_j)^{T}f(x) \le \gamma_{i,j})}{N_i}
    \end{aligned}
    \end{equation*} 
    $\mathfrak{R}_{N_i}(\mathcal{F})$ is the Rademacher complexity \cite{marginBound, Rademacher} of function space $\mathcal{F}$. 
    Refer to Appendix.\ref{ProofofTheorem45} for full version of this theorem. 
\end{restatable}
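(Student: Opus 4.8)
The plan is to reduce this multiclass bound to a family of binary margin bounds, one for each ordered pair of classes, and then aggregate them weighted by the class proportions. First I would condition on the true label, writing $\mathbb{P}_{x,y}(\cdot) = \sum_{i=1}^{C} p(i)\,\mathbb{P}_{x\mid y=i}(\cdot)$, so that it suffices to control the class-conditional error for each $i$. The key observation is that a sample of class $i$ is misclassified only if its winning logit is not $i$, i.e. only if $(M_i - M_j)^{T} f(\boldsymbol{x};\boldsymbol{w}) \le 0$ for some $j \neq i$. A union bound over $j$ then gives
\[
\mathbb{P}_{x\mid y=i}(\text{error}) \le \sum_{j \neq i} \mathbb{P}_{x\mid y=i}\big((M_i - M_j)^{T} f(\boldsymbol{x};\boldsymbol{w}) \le 0\big),
\]
which converts the multiclass problem into $C(C-1)$ binary margin problems for the functions $g_{ij} = (M_i - M_j)^{T} f(\cdot;\boldsymbol{w}) \in \mathcal{F}$.

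Second, for a fixed pair $(i,j)$ and a fixed target margin $\gamma$, I would invoke the standard Rademacher margin argument. Introducing the clipped ramp surrogate $\phi_\gamma$ that equals $1$ on $(-\infty,0]$, equals $0$ on $[\gamma,\infty)$, and is $1/\gamma$-Lipschitz in between, we have the sandwich $\mathbb{I}(g_{ij}\le 0) \le \phi_\gamma(g_{ij}) \le \mathbb{I}(g_{ij}\le \gamma)$. A McDiarmid plus symmetrization argument over the $N_i$ class-$i$ samples bounds the population $\phi_\gamma$-risk by its empirical counterpart plus $2\,\mathfrak{R}_{N_i}(\phi_\gamma \circ \mathcal{F})$ plus a $\sqrt{\log(1/\delta)/N_i}$ deviation term, and Talagrand's contraction lemma peels off $\phi_\gamma$ at the cost of its Lipschitz constant $1/\gamma$. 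This produces the $\mathfrak{R}_{N_i}(\mathcal{F})/\gamma$ shape of the first term, while the upper sandwich by $\mathbb{I}(g_{ij}\le\gamma)$ contributes the empirical margin-violation count that assembles into $L_{0,1}$. Because $\mathcal{F}$ already ranges over all pairs and all $(\boldsymbol{M},\boldsymbol{w})$, one uniform-convergence statement at each $\gamma$ covers every pair simultaneously.

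The main obstacle is that the margins $\gamma_{i,j}$ are data-dependent and not fixed in advance, so the fixed-$\gamma$ bound cannot simply be instantiated at $\gamma = \gamma_{i,j}$. I would resolve this by making the bound uniform over $\gamma$: discretize the admissible range $(0,K]$ on the geometric grid $\gamma_k = K/2^{k}$, prove the fixed-$\gamma$ bound at each $\gamma_k$ with failure probability $\delta_k \propto \delta/k^2$ so that $\sum_k \delta_k \le \delta$, and then for an arbitrary margin $\gamma_{i,j}$ select the nearest grid point from below. The relevant grid index is of order $\log_2(K/\gamma_{i,j})$, so the per-pair penalty $\sqrt{\log(1/\delta_k)/N_i}$ inflates to $\sqrt{\log(\log_2(4K/\gamma_{i,j}))/N_i}$, matching the second term; monotonicity of the surrogate in $\gamma$ guarantees that rounding $\gamma_{i,j}$ to the grid only loosens the bound by a constant absorbed into $\lesssim$.

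Finally I would reassemble the pieces: combine the three per-pair contributions, sum over $j \neq i$, take the $p(i)$-weighted sum over $i$, and carefully track the union bound over classes, pairs, and grid points to confirm the total failure probability remains below $\delta$. The first two sums reproduce the Rademacher and log-log terms verbatim, and the collected empirical surrogate upper bounds yield $L_{0,1}$, which completes the proof.
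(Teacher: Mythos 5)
Your proposal follows the paper's proof essentially step for step: the same Bayes decomposition $\sum_i p(i)\,\mathbb{P}_{x\mid y=i}(\cdot)$, the same union bound over the pairs $j\neq i$ reducing to binary margin events, and the same per-pair margin bound aggregated with a union bound over classes and pairs. The only difference is that you re-derive the binary margin bound from scratch (ramp surrogate, symmetrization, Talagrand contraction, and a geometric grid over $\gamma$ to obtain the $\sqrt{\log\log_2(4K/\gamma)}$ term), whereas the paper simply invokes it as a black box (its Theorem on the Margin Bound, cited from the literature); this is a matter of self-containedness, not of approach.
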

Recall NeurCol occurs when class distribution is uniform. We consider this case. 
\begin{corollary}
  In Theorem.\ref{main_theorem_0}, we assume the the class distribution and train set are both uniform, $i.e.$
  $p(i) = \frac{1}{C}$ and $N_i = \frac{N}{C} \ \forall i \in [C]$. 
  In this case, the generalization bound in Theorem.\ref{main_theorem_0} becomes
  \begin{equation*}
  \begin{aligned}
    \mathbb{P}_{x,y}\Big(\max_{c} [M f(\boldsymbol{x};\boldsymbol{w})]_{c} \neq y \Big) 
    \lesssim 
    \frac{\mathfrak{R}_{N/C}(\mathcal{F})}{C}\sum_{i=1}^{C}  \sum_{j \neq i} \frac{1}{\gamma_{i,j}} + 
    \frac{1}{\sqrt{NC}} \sum_{i=1}^{C} \sum_{j \neq i} \sqrt{\log(\log_{2} \frac{4K}{\gamma_{i,j}})}
    + L_{0,1} 
  \end{aligned}
  \end{equation*}
  Observing both above terms are the form $\sum_{i \neq j} \frac{1}{\gamma_{i, j}}$, 
  we have 
  \begin{equation*}
  \begin{aligned}
    \sum_{c=1}^{C} \sum_{j \neq i} \frac{1}{\gamma_{i,j}} \le C(C-1) \max_{i \neq j} \frac{1}{\gamma_{i,j}}
    \ \ \Leftrightarrow \ \ 
    \frac{1}{C(C-1)} \min_{\{\gamma_{i,j}\}_{i\neq j}} \sum_{c=1}^{C} \sum_{j \neq i} \frac{1}{\gamma_{i,j}} \le 
    \min_{\{\gamma_{i,j}\}_{i\neq j}} \max_{i \neq j} \frac{1}{\gamma_{i,j}}
  \end{aligned}
  \end{equation*}
\end{corollary}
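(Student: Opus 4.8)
The plan is to prove the corollary in two essentially independent pieces: first, a direct substitution of the uniform hypotheses $p(i)=1/C$ and $N_i=N/C$ into the bound of Theorem~\ref{main_theorem_0}, which collapses the class-weighted sums into the stated closed form; and second, an elementary counting-and-bounding argument that converts the resulting $\sum_{i\neq j}1/\gamma_{i,j}$ into the claimed comparison with $\max_{i\neq j}1/\gamma_{i,j}$. Neither piece requires new probabilistic machinery beyond what Theorem~\ref{main_theorem_0} already supplies.

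For the first piece I would handle the two data-dependent terms separately. The Rademacher term already carries $p(i)=1/C$ out front; since $N_i=N/C$ no longer depends on $i$, the complexity $\mathfrak{R}_{N_i}(\mathcal{F})=\mathfrak{R}_{N/C}(\mathcal{F})$ factors out of the double sum and yields $\frac{\mathfrak{R}_{N/C}(\mathcal{F})}{C}\sum_{i}\sum_{j\neq i}\frac{1}{\gamma_{i,j}}$ exactly as stated. The square-root term needs slightly more care because $N_i$ sits inside the radical: substituting $N_i=N/C$ produces a factor $\sqrt{C/N}$, and combining it with the prefactor $1/C$ gives $\frac{1}{C}\sqrt{C/N}=\frac{1}{\sqrt{NC}}$, which is the single place where a constant must be tracked attentively. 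This reproduces the second displayed term; $L_{0,1}$ is untouched, so the first piece is complete.

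For the second piece I would note that the double sum $\sum_{i=1}^{C}\sum_{j\neq i}$ ranges over exactly the $C(C-1)$ ordered pairs with $i\neq j$, so bounding each summand $1/\gamma_{i,j}$ by its maximum immediately gives $\sum_{i\neq j}\frac{1}{\gamma_{i,j}}\le C(C-1)\max_{i\neq j}\frac{1}{\gamma_{i,j}}$. Dividing by $C(C-1)$ turns this into the pointwise bound $\frac{1}{C(C-1)}\sum_{i\neq j}\frac{1}{\gamma_{i,j}}\le\max_{i\neq j}\frac{1}{\gamma_{i,j}}$, valid for every choice of $\{\gamma_{i,j}\}$. To reach the stated $\min$--$\min$ inequality I would take the minimum over $\{\gamma_{i,j}\}$ on both sides, using the standard fact that a pointwise inequality $f\le g$ forces $\min f\le\min g$ (concretely, evaluating at the minimizer $\{\gamma_{i,j}^{\star}\}$ of the right-hand side gives $\frac{1}{C(C-1)}\min\sum\le\frac{1}{C(C-1)}\sum 1/\gamma^{\star}_{i,j}\le\max 1/\gamma^{\star}_{i,j}=\min\max 1/\gamma_{i,j}$).

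There is no serious obstacle here: the corollary is a specialization followed by an averaging-versus-maximum comparison, and the only delicate point is the constant bookkeeping in the square-root term. The point worth emphasizing conceptually, rather than technically, is that by Lemma~\ref{correlationmargin} ($\gamma_{i,j}=\rho^{2}-\langle M_i,M_j\rangle$) minimizing $\max_{i\neq j}1/\gamma_{i,j}$ is equivalent to minimizing $\max_{i\neq j}\langle M_i,M_j\rangle$, which is precisely the Grassmannian-frame objective, so this comparison is exactly what ties the generalization bound back to \textbf{\textit{(NC3)}}.
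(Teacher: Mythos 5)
Your proposal is correct and follows exactly the route the paper intends (the paper treats this corollary as an immediate substitution of $p(i)=1/C$, $N_i=N/C$ into Theorem~\ref{main_theorem_0} followed by the elementary bound of a sum of $C(C-1)$ terms by $C(C-1)$ times their maximum, and gives no separate proof). Your constant bookkeeping $\frac{1}{C}\sqrt{C/N}=\frac{1}{\sqrt{NC}}$ and the minimizer-evaluation argument for passing from the pointwise inequality to the $\min$--$\min$ comparison are both right.
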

\begin{remark}
  Once again, we observe the characteristic of \textit{\textbf{minimized maximal correlation}}. 
  However, this time it is in the form of margin and is obtained by minimizing the margin generalization error.
\end{remark}
\begin{remark}
  The Multiclass Margin Bound can provide an explanation for the steady improvement in test accuracy and 
  adversarial robustness during TPT (as shown in Figure 8 and Table 1 of \cite{PNAS2020}). 
  At the beginning of TPT, the accuracy over the training set reaches $100\%$ and $L_{0,1} = 0$, 
  indicating that generalization performance can no longer improve by reducing $L_{0,1}$. 
  However, if we continue training at this point, the margin $\gamma_{i, j}$
  would still increase. 
  Therefore, better robustness can be achieved by increasing the margin. Furthermore, 
  two terms in our bound related to margin continue to decrease, leading to better generalization performance. 
\end{remark}

\textbf{Minority Collapse} \citet{fang2021exploring} have identified a related phenomenon called \textit{Minority Collapse}, 
which is an imbalanced version of NeurCol. Specifically, they observed that when 
the training set is extremely imbalanced, the features of the minority class tend 
to converge to be parallel. Our Multiclass Margin Bound can offer explanations for the generalization of this phenomenon. 

\begin{corollary}
  Consider imbalanced classification. Given dataset $S$ with $C$ classes, the first $C_{1}$ classes 
  $\mathcal{C}_1 = \{1, \dots, C_1\}$ each 
  contain $N_1$ data, and the remaining $C - C_1$ minority classes 
  $\mathcal{C}_2 = \{C_1+1, \dots, C\}$ each contain $N_2$ data. 
  We denote the imbalanced ratio $N_1 / N_2$ as $R$. Assume class distribution $p(i)$ is the same as dataset $S$'s. 

  Then terms related to the margins between minority classes in Multiclass Margin Bound becomes:
  \begin{equation*}
  \begin{aligned}
    \sum_{i \in \mathcal{C}_2} \sum_{j \in \mathcal{C}_2 \backslash \{i\}} 
    \frac{1}{C_{1} R + C - C_1}
    \left(
      \frac{\mathfrak{R}_{N_2}(\mathcal{F})}{\gamma_{i, j}} 
      +
      \sqrt{\frac{\log\left(\log_{2}\frac{4K}{\gamma_{i, j}}\right)}{N_2}}
    \right)
  \end{aligned}
  \end{equation*}
\end{corollary}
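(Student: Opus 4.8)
The plan is to derive the stated expression by direct specialization of the Multiclass Margin Bound in Theorem~\ref{main_theorem_0} to the two-level imbalanced distribution, so the argument is essentially bookkeeping rather than new analysis. First I would record the total sample count. Since each of the $C_1$ majority classes carries $N_1$ samples and each of the $C-C_1$ minority classes carries $N_2$ samples, and using $N_1 = R N_2$, the training set size is
\begin{equation*}
\begin{aligned}
N = C_1 N_1 + (C - C_1) N_2 = N_2\left(C_1 R + C - C_1\right).
\end{aligned}
\end{equation*}

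Next I would evaluate the class weights. Because $p(i)$ is taken to coincide with the empirical distribution of $S$, we have $p(i) = N_i / N$; hence for any minority class $i \in \mathcal{C}_2$,
\begin{equation*}
\begin{aligned}
p(i) = \frac{N_2}{N} = \frac{N_2}{N_2(C_1 R + C - C_1)} = \frac{1}{C_1 R + C - C_1},
\end{aligned}
\end{equation*}
and moreover $N_i = N_2$ for every such $i$, so that $\mathfrak{R}_{N_i}(\mathcal{F}) = \mathfrak{R}_{N_2}(\mathcal{F})$.

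With these two facts in hand, I would isolate the part of the bound corresponding to pairs of minority classes, i.e. the summands of the first two terms of Theorem~\ref{main_theorem_0} for which \emph{both} indices satisfy $i, j \in \mathcal{C}_2$. Substituting $p(i) = 1/(C_1 R + C - C_1)$ and $N_i = N_2$ and pulling the resulting constant factor inside the double sum yields exactly
\begin{equation*}
\begin{aligned}
\sum_{i \in \mathcal{C}_2} \sum_{j \in \mathcal{C}_2 \backslash \{i\}} \frac{1}{C_{1} R + C - C_1}\left( \frac{\mathfrak{R}_{N_2}(\mathcal{F})}{\gamma_{i, j}} + \sqrt{\frac{\log\left(\log_{2}\frac{4K}{\gamma_{i, j}}\right)}{N_2}} \right).
\end{aligned}
\end{equation*}
Since the steps are pure substitution, no genuine obstacle arises; the only point requiring care is the interpretation of ``terms between minority classes'' as restricting \emph{both} summation indices to $\mathcal{C}_2$ (not merely the outer index), together with the observation that the Rademacher complexity $\mathfrak{R}_{N_i}$ depends on $i$ solely through the per-class sample size $N_i = N_2$.
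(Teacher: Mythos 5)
Your proposal is correct and matches what the paper does implicitly: the corollary is a direct specialization of Theorem~\ref{main_theorem_0}, obtained by computing $N = N_2(C_1 R + C - C_1)$, setting $p(i) = N_i/N = 1/(C_1 R + C - C_1)$ and $N_i = N_2$ for $i \in \mathcal{C}_2$, and restricting both summation indices to the minority classes. The paper provides no separate proof for this corollary, and your substitution argument is precisely the intended one.
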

\begin{remark}
  In these terms, $R$ and $\gamma_{i, j} (i, j \in \mathcal{C}_2)$ are inversely related in terms of their values. 
  This implies that as $R \rightarrow \infty$, if the generalization bound remains constant, then $\gamma_{i, j}$ must approach $0$. 
  Recall $\left( M_i - M_j \right)^{T} f(\boldsymbol{x};\boldsymbol{w}) \le \gamma_{i, j} \ \forall \boldsymbol{x} \in S_{i}$, which means that 
  $\Vert M_i - M_j \Vert \rightarrow 0$. 
\end{remark}

\section{Further Exploration}
In this section, we uncover a new phenomenon in NeurCol, which we refer to as \textbf{\textit{Symmetric Generalization}}. 
\textbf{\textit{Symmetric Generalization}} is linked to two transformation groups over 
\textit{Grassmannian Frame}: namely Permutation and Rotation transformation. 
Briefly, \textit{Grassmannian Frame} in these two transformations can result in different generalization performances. 
Upon observing this intriguing phenomenon in our experiments, we provide a theoretical 
result to explain this phenomenon partially. 

\subsection{Motivation}\label{motivation}
First, we introduce two kinds of equivalence of frame \cite{holmes2004optimal, bodmann2005frames}:
\begin{definition}[\textbf{Equivalent Frame}]
  Given two frames $\{\zeta_{i}\}_{i=1}^{C}, \{\chi_{i}\}_{i=1}^{C}$ in $\mathbb{R}^{d}$, 
  they are: 
  \begin{itemize}
    \item \textit{Type I Equivalent} if there exists a orthogonal matrix $R \in \mathbb{R}^{d}$ such that $[\zeta_i]_{i=1}^{C} = R [\chi_i]_{i=1}^{C}$.
    \item \textit{Type II Equivalent} if there exists a permutation matrix $P \in \mathbb{R}^{C}$ such that $[\zeta_i]_{i=1}^{C} = [\chi_i]_{i=1}^{C} P$.
  \end{itemize}
\end{definition}

The \textit{Grassmannian Frame} is a geometrically symmetrical structure, with its symmetry stemming 
from the invariance of two transformations: rotation and permutation. Specifically, after a 
rotation or permutation (or their combination), the frame still satisfies the \textit{minimized maximal correlation} 
property, as only the frame's direction and order change. We are curious about how 
these equivalences affect the performance of models.
In machine learning, we typically consider two aspects of a model: optimization and 
generalization. Given that the training set and classification model (backbone and capacity) 
are the same, we argue that models with equivalent \textit{Grassmannian Frames} will exhibit the same 
optimization performance. However, there is no reason to believe that these models will also 
have the same generalization performance. As such, we propose the following question: 

\textbf{Is generalization of models invariable to symmetric transformations of \textit{Grassmannian Frame}?}

To explore this question, we conduct a series of experiments. Our experimental results lead to an 
interesting conclusion: 

\textbf{
  Optimization performance of models is not affected by Rotation and Permutation 
  transformation of \textit{Grassmannian Frame}, 
  but generalization performance is. 
}

This newly discovered phenomenon, which we call \textbf{\textit{Symmetric Generalization}}, 
contradicts a recent argument made in \cite{NEURIPS2022_f7f5f501}. The authors of that 
work claimed that since modern neural networks are often overparameterized and 
can learn features with any direction, fixing linear classifiers as a Simplex ETF is sufficient, 
and there is no need to learn it. Our findings challenge this viewpoint. 

\subsection{Experiments}\label{Experiments_main}

To investigate the impact of Rotation and Permutation transformations of Grassmannian Frame 
on the generalization performance of deep neural networks, we conducted a series of experiments. 

\textbf{How to Reveal the Anwser} 
We generate $10$ \textit{Grassmannian Frame} with different Rotation and Permutation. 
Then, we train the same network architecture 10 times. In each time, the linear classifier are loaded from a pre-generated 
equivalent \textit{Grassmannian Frame} and fixed during training. 
To ensure a completely same optimization process (mini-batch, augmentation, and parameter initialization), 
we use the same random seed for each training. 
Once the NeurCol phenomenon occurrs, we know that the $10$ models have learned 
different \textit{Grassmannian Frames}. Finally, we compare their generalization performances by 
evaluating the cross-entropy loss and accuracy on a test set. 

\textbf{Generation of Equivalent \textit{Grassmannian Frame}} 
Our Theorem.\ref{CE_grass} naturally offers a numerical method for generating \textit{Grassmannian Frame}. 
Given class number and feature dimension, we use Gradient Descent (\ref{GDupdate}) on the 
unconstrained feature models (\ref{opt2}) to generate \textit{Grassmannian Frame}. 
Then given a \textit{Grassmannian Frame} $\{M_{y}\}_{y=1}^{C}$ in $\mathbb{R}^{d}$, 
if it is denoted as $\boldsymbol{M} = [M_1, \cdots, M_{C}] \in \mathbb{R}^{d \times C}$,
we can use $R \boldsymbol{M} P$ to denote its equivalent frame. 
where $P \in Permutation(C)$ and $R \in SO(d)$: 
\begin{equation*}
\begin{aligned}
  & Permutation(C) = \left\{ P \bigg| \forall i \in [C], \sum_{j=1}^{C} P_{j, i} = \sum_{j=1}^{C} P_{i, j} = 1 , \forall i,j \in [C], P_{i,j} \in \{0, 1\}, P \in \mathbb{R}^{C} \right\} \\
  & SO(d) = \left\{ R \big| R^{T}R = R R^{T} = I_{d}, |R| = 1, R \in \mathbb{R}^{d} \right\}
\end{aligned}
\end{equation*}
Note that $Permutation(C)$ and $SO(d)$ act on vector orders and directions of $\{M_{y}\}_{y=1}^{C}$ respectively. 
Refer to Appendix.\ref{code} for code implementation. 

\textbf{Models with Different Features} 
\citet{NEURIPS2022_f7f5f501} point out in their Theorem.1: if the linear classifier
is fixed as Simplex ETF, then the final features learned by the model 
would converge to be Simplex ETF with the same direction to classifier. 
Following their work, 
to let models to learn equivalent \textit{Grassmannian Frame}, 
we initialize the linear classifier as equivalent \textit{Grassmannian Frame}
and do not perfrom optimization on it during training. In this way, when NeurCol occurs, 
models have learned equivalent \textit{Grassmannian Frame}. 

\textbf{Network Architecture and Dataset} 
Our experiments involve two image classification datasets: CIFAR10/100 \cite{Krizhevsky2009LearningML}. 
And for every dataset, we use three different convolutional neural networks to verify our finding, 
including ResNet \cite{he2016deep}, Vgg \cite{simonyan2014very}, DenseNet \cite{huang2017densely}. 
Both datasets are balanced with 10 and 100 classes respectively, each having 
$500$ and $5,000$ training images per class.
To meet the larger number of classes than feature dimensions, 
we use $6$ and $64$ as the feature dimensions, respectively.
Then, to obtain different dimensional feature for every backbone, we attach a linear layer after the end of backbone, 
which can transform feature dimensions. 

\textbf{Training}
To make NeurCol appear, we follow \cite{PNAS2020}'s practice. 
More details refer to Appendix.\ref{TrainDetails}. 

\begin{table*}[t]
  \scriptsize
  \centering
  \setlength\tabcolsep{2.5pt}
  \begin{tabular}{c|c|c|c|c|c|c|c|c|c|c}
    \toprule
        Experiment Index & 0 & 1 & 2 & 3 & 4 & 5 & 6 & 7 & 8 & 9 \\ 
        \midrule
        Train CE & 0.0019 & 0.0018 & 0.0019 & 0.0019 & 0.0019 & 0.0018 & 0.0019 & 0.0019 & 0.0019 & 0.0019 \\
        Val CE & 0.586 & 0.6965 & 0.6871 & 0.61 & 0.6585 & 0.576 & 0.5565 & 0.5972 & 0.5892 & 0.6243 \\
        \midrule
        Train ACC & 100.0 & 100.0 & 100.0 & 100.0 & 100.0 & 100.0 & 100.0 & 100.0 & 100.0 & 100.0 \\
        Val ACC & 87.96 & 87.96 & 88.25 & 87.94 & 87.32 & 87.58 & 88.39 & 88.32 & 87.98 & 87.73 \\
        \bottomrule
    \end{tabular}
    \caption{\footnotesize Performance comparison of different Permutation features on CIFAR10 and Vgg11. }
  \label{table1}
\end{table*}

\begin{table*}[t]
  \scriptsize
  \centering
  \setlength\tabcolsep{2.5pt}
  \begin{tabular}{c|c|c|c|c|c|c|c|c}
    \toprule
        Equivalence & Dataset & Model & Std Train CE & Std Val CE & Range Val CE & Std Train ACC & Std Val ACC & Range Val ACC \\ 
        \midrule
        \multirow{6}*{ Permutation }  & \multirow{3}*{ CIFAR10 } 
        & VGG11 & 0.0 & 0.01 & 0.034 & 0.0 & 0.209 & 0.71 \\
        & & Resnet34 & 0.0 & 0.018 & 0.058 & 0.0 & 0.197 & 0.71 \\
        & & DenseNet121 & 0.0 & 0.011 & 0.04 & 0.0 & 0.157 & 0.49 \\
        \cmidrule{2-9}
         & \multirow{3}*{ CIFAR100 } 
        & VGG11 & 0.0 & 0.027 & 0.084 & 0.004 & 0.353 & 1.38 \\
        & & Resnet34 & 0.0 & 0.042 & 0.134 & 0.005 & 0.468 & 1.37 \\
        & & DenseNet121 & 0.0 & 0.012 & 0.037 & 0.003 & 0.232 & 0.84 \\
        \midrule
        \multirow{6}*{ Rotation }  & \multirow{3}*{ CIFAR10 } 
        & VGG11 & 0.0 & 0.045 & 0.14 & 0.0 & 0.317 & 1.07 \\
        & & Resnet34 & 0.0 & 0.078 & 0.23 & 0.0 & 0.538 & 1.73 \\
        & & DenseNet121 & 0.0 & 0.059 & 0.187 & 0.0 & 0.18 & 0.63 \\
        \cmidrule{2-9}
         & \multirow{3}*{ CIFAR100 } 
        & VGG11 & 0.0 & 0.034 & 0.099 & 0.0 & 0.455 & 1.52 \\
        & & Resnet34 & 0.0 & 0.036 & 0.136 & 0.005 & 0.403 & 1.16 \\
        & & DenseNet121 & 0.0 & 0.024 & 0.081 & 0.004 & 0.516 & 1.59 \\
        \bottomrule
    \end{tabular}
    \caption{\footnotesize More comprehensive results on different datasets and models. 
    Std indicates the standard deviation of ten metrics and Range indicates the maximal metric minus minimal. } 
  \label{table2}
\end{table*}

\textbf{Results}
Table.\ref{table1} presents results on CIFAR10 and Vgg11 with different Permutation. 
All metrics are given when the model converges to NeurCol ($100\%$ accuracy and zero loss on training set). 
We observe that, even though all experiments achieved $0$ cross-entropy loss and $100\%$ accuracy, 
they still exhibit significant differences in test loss and accuracy.
This implies although permutation hardly affects optimization, it has a significant impact on generalization.
Table.\ref{table2} provides results of different dataset, backbone and two transformations 
to reveal the same phenomenon.
These experimental results answer the question we posed in Section.\ref{motivation}, 
demonstrating that different feature order and direction can influence generalization performance of models. 

\subsection{Analysis for Permutation} 
We aim to explain why \textit{Grassmannian Frame} with 
different Permutation can lead to different generalization performance theoretically. 
Most of symbol definitions are adopted from Section.\ref{GeneralizationPerspective}.
\begin{restatable}[]{theorem}{maintheoremtwo}\label{main_theorem}
  Given the dataset $S$ and a classifier $(\boldsymbol{M}, f(\cdot;\boldsymbol{w}))$, assume
  the classifier has already achieved NeurCol with maximal norm $\rho$. 
  Suppose $(\boldsymbol{M}, f(\cdot; \boldsymbol{w}))$ can linearly separate $S$ by margin $\{\gamma_{i,j}\}_{i \neq j}$. 
  Besides, we make the following assumptions: 
  \begin{itemize}
    \item $f(\cdot, \boldsymbol{w})$ is $L$-Lipschitz for any $\boldsymbol{w}$, $i.e.$ 
      $\forall \boldsymbol{x}_1, \boldsymbol{x}_2$, $\Vert f(\boldsymbol{x}_1, \boldsymbol{w}) - f(\boldsymbol{x}_2, \boldsymbol{w}) 
      \Vert \le L \Vert \boldsymbol{x}_1 - \boldsymbol{x}_2\Vert$. 
    \item $S$ is large enough such that $N_i \ge \max_{j \neq i} \mathcal{N}(\frac{\gamma_{i,j}}{L\Vert M_i - M_j \Vert}, \mathcal{M}_i)$ for every class $i$. 
    \item label distribution and labels of $S$ are balanced, $i.e.$ $p(i) = \frac{1}{C}$ and $N_i = \frac{N}{C}, \forall i \in [C]$
    \item $S_{i}$ is drawn from probability $\mathcal{P}_{\boldsymbol{x}|y=i}$, where probability's tight support is denoted as $\mathcal{M}_i$. 
  \end{itemize}
  where $\mathcal{N}(\cdot, \mathcal{M}_{i})$ is the covering number of $\mathcal{M}_{i}$, refer to Appendix.\ref{coveringnumberproof} for its definition. 
  Then expected accuracy of $(\boldsymbol{M}, f(\cdot; \boldsymbol{w}))$ over the entire distribution is given by 
  \begin{equation*}
  \begin{aligned}
    \mathbb{P}_{\boldsymbol{x},y}\Big(\max_{c} [M f(\boldsymbol{x};\boldsymbol{w})]_{c} = y\Big) > 
    1 - \frac{1}{2 N} \sum_{i=1}^{C} \max_{j \neq i} \mathcal{N}( \frac{1}{L} \sqrt{\frac{\rho^{2} - M_i^T M_j}{2}}, \mathcal{M}_i)
  \end{aligned}
  \end{equation*}
\end{restatable}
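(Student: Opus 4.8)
The plan is to convert the training-time margin into a local robustness radius via the Lipschitz property of $f$, and then bound the test error by the probability mass of the region of each class support $\mathcal{M}_i$ left uncovered by the training sample. First I would rewrite the radius appearing in the statement in margin language using Lemma~\ref{correlationmargin}. Since the classifier has reached NeurCol with maximal norm $\rho$, every $\Vert M_i\Vert = \rho$, so
\begin{equation*}
  \Vert M_i - M_j\Vert^2 = 2\rho^2 - 2\langle M_i, M_j\rangle = 2\big(\rho^2 - \langle M_i, M_j\rangle\big) = 2\gamma_{i,j},
\end{equation*}
using $\gamma_{i,j} + \langle M_i, M_j\rangle = \rho^2$. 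Hence $\tfrac{1}{L}\sqrt{(\rho^2 - M_i^T M_j)/2} = \gamma_{i,j}/(L\Vert M_i - M_j\Vert) =: r_{i,j}$, which is exactly the radius in the second assumption; I set $r_i := \min_{j\neq i} r_{i,j}$.

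Next I would establish local robustness. For a test point $\boldsymbol{x}$ of true label $i$ and any training point $\boldsymbol{x}'\in S_i$ with $\Vert \boldsymbol{x} - \boldsymbol{x}'\Vert \le r_{i,j}$, linear separability at $\boldsymbol{x}'$ and $L$-Lipschitzness give
\begin{equation*}
  (M_i - M_j)^T f(\boldsymbol{x};\boldsymbol{w}) \ge \gamma_{i,j} - \Vert M_i - M_j\Vert\, L\, \Vert \boldsymbol{x} - \boldsymbol{x}'\Vert \ge \gamma_{i,j} - \Vert M_i - M_j\Vert\, L\, r_{i,j} = 0 .
\end{equation*}
So $\boldsymbol{x}$ is never scored below class $j$, and taking the minimum over $j\neq i$, any $\boldsymbol{x}$ within $r_i$ of some point of $S_i$ is classified correctly. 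The misclassified region of class $i$ is therefore contained in the part of $\mathcal{M}_i$ lying farther than $r_i$ from every training point.

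Then I would run a covering-and-counting argument per class and assemble. Partition $\mathcal{M}_i$ into $K_i = \mathcal{N}(r_i, \mathcal{M}_i)$ cells of diameter at most $r_i$ (this is where the appendix covering-number convention enters: phrasing it by diameter ensures a cell containing a training point lies within $r_i$ of it). A test point is misclassified only if its cell contains none of the $N_i$ samples, so with $q_k$ the $\mathcal{P}_{\boldsymbol{x}|y=i}$-mass of cell $k$, in expectation over $S_i$
\begin{equation*}
  \mathbb{P}_{\boldsymbol{x}}\big(\text{error}\mid y=i\big) \le \sum_{k=1}^{K_i} q_k(1-q_k)^{N_i} \le K_i \max_{0\le q\le 1} q(1-q)^{N_i} \le \frac{K_i}{e N_i},
\end{equation*}
since $q(1-q)^{N_i}$ peaks at $q = 1/(N_i+1)$ below $1/(eN_i)$. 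With $p(i) = 1/C$ and $N_i = N/C$, summing over classes gives $\mathbb{P}(\text{error}) \le \tfrac{1}{eN}\sum_i \mathcal{N}(r_i, \mathcal{M}_i)$; since covering numbers decrease in radius, $\mathcal{N}(r_i, \mathcal{M}_i) = \max_{j\neq i}\mathcal{N}(r_{i,j}, \mathcal{M}_i)$, and loosening $1/e$ to $1/2$ yields the stated bound on $1 - \mathbb{P}(\text{error})$.

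The main obstacle I expect is the covering-and-counting step: reconciling the deterministic, fixed-classifier phrasing of the statement with the randomness of $S$ (the bound is most naturally read as an expectation over the draw of $S_i$), and choosing the covering convention so that co-located train and test points fall within $r_{i,j}$ without losing a spurious factor of two. It is also here that the second assumption $N_i \ge \max_{j\neq i}\mathcal{N}(r_{i,j}, \mathcal{M}_i)$ does its work, keeping each per-class term below $1/2$ so that the final accuracy bound is nontrivial. The remaining steps — the norm identity and the Lipschitz robustness estimate — are routine once Lemma~\ref{correlationmargin} is available.
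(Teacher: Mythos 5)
Your proposal is correct and follows essentially the same route as the paper: per-class decomposition, margin-plus-Lipschitz robustness around the nearest training point of $S_i$, a covering-number bound on the probability of landing far from $S_i$, and the same NeurCol identity converting $\gamma_{i,j}/(L\Vert M_i - M_j\Vert)$ into $\frac{1}{L}\sqrt{(\rho^2 - M_i^T M_j)/2}$. The only divergence is that you re-derive the nearest-neighbor covering estimate yourself (obtaining the $1/(eN_i)$ constant and flagging the diameter-versus-radius convention), whereas the paper imports it as a black-box result of Kulkarni--Posner in the form $\mathbb{P}(\Vert \boldsymbol{x} - \hat{\boldsymbol{x}}\Vert > \epsilon) \le \mathcal{N}(\epsilon,\mathcal{M}_i)/(2N_i)$.
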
 

\begin{remark}
  Permutation transformation auctually changes the order of features of every class. 
  Given the \textit{Grassmannian Frame} $\{M_i\}_{i=1}^{C}$, we denote the \textit{Type II Equivalent} frame of permutation $\pi$
  as $\{M_{\pi(i)}\}_{i=1}^{C}$. 
  Therefore, the covering number in our theorem becomes $\mathcal{N}( \frac{1}{L} \sqrt{\frac{\rho^{2} - M_{\pi(i)}^T M_{\pi(j)}}{2}}, \mathcal{M}_i)$, 
  which leads to different accuracy bound. 
\end{remark}

\subsection{Insight and Discussion} 

\begin{figure}[htbp]	
  \centering
	\subfigure[Dog is close to Cat and Fire Truck is close to Car.]
	{
		\begin{minipage}{6.5cm}\label{Fig1a}
			\centering          
			\includegraphics[scale=0.23]{./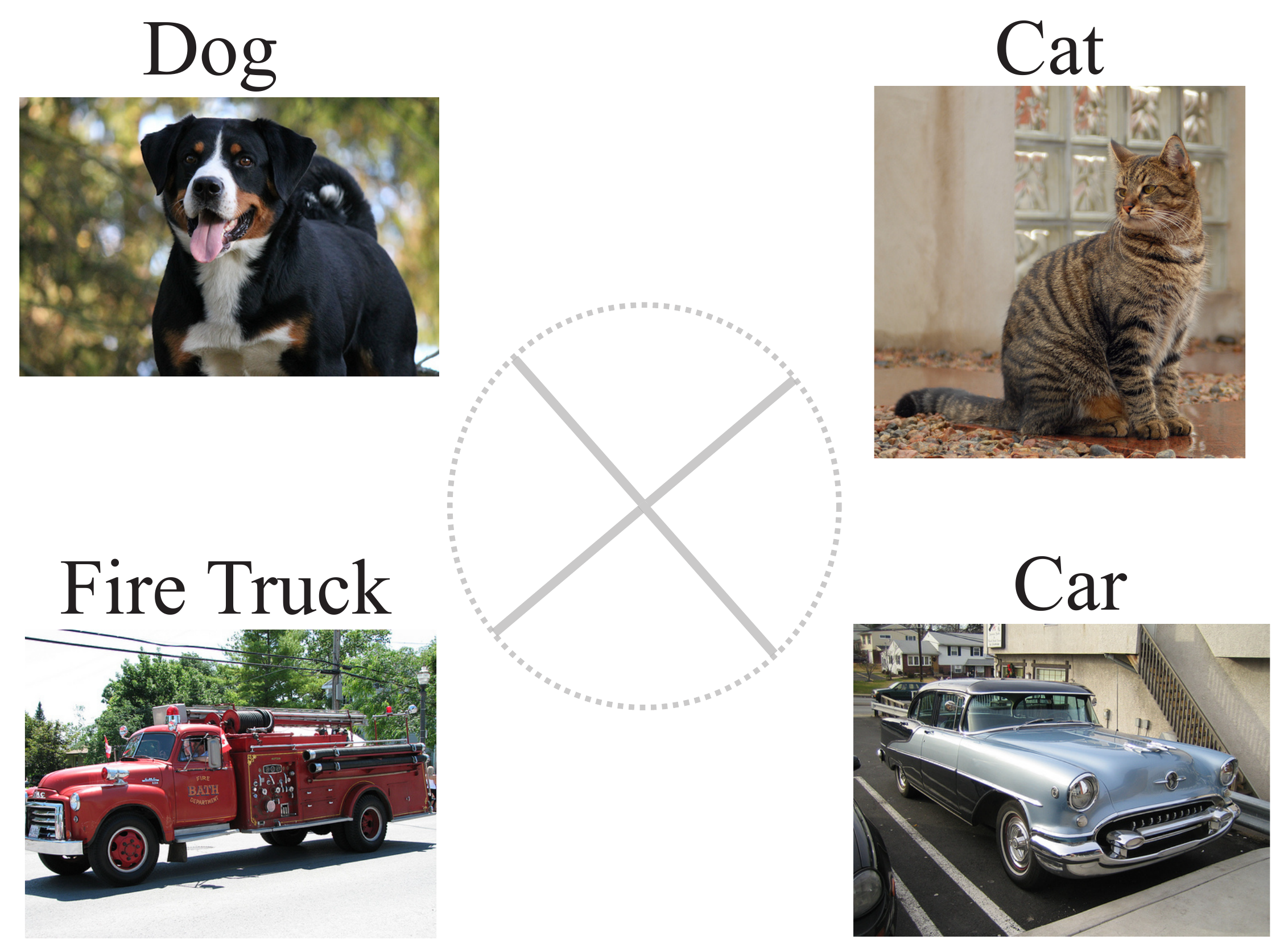}
		\end{minipage}
	}
	\subfigure[Swap Dog and Fire Truck from left.] 
	{
		\begin{minipage}{6.5cm}\label{Fig1b}
			\centering      
			\includegraphics[scale=0.23]{./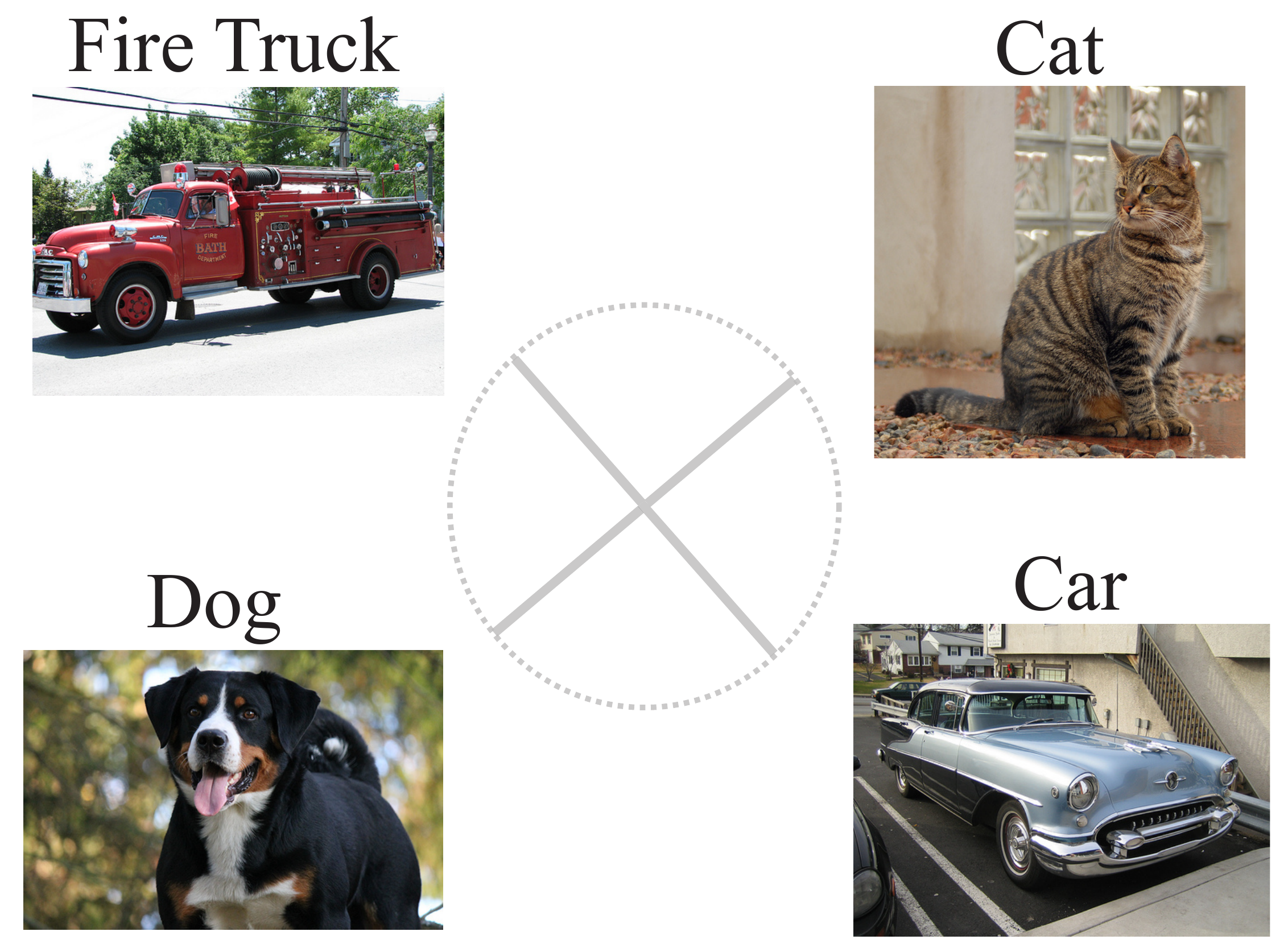}
		\end{minipage}
	}
	\caption{The \textit{Grassmannian Frame} feature alignment of four classes in $\mathbb{R}^{2}$, 
  including Cat, Car, Dog and Fire Truck. 
  All images are from ImageNet \cite{deng2009imagenet}.}
\end{figure}

\textbf{Permutation}
We provide an example to give the insight of \textit{\textbf{Symmetric Generalization}} of Permutation. 
Consider a \textit{Grassmannian Frame} $\{M_i\}_{i=1}^{4}$ living in $\mathbb{R}^2$, 
it resembles a cross (Theorem III.1 of \cite{GeometricPropertiesofGrassmannian}). 
As shown in Figure.\ref{Fig1a}, there are four classes that correspond to different feature $M_i$. 
Obviously, since Dog and Cat look similar to each other, they deserve to have a smaller margin in the feature space (near to each other). 
The same goes for the other two categories. 
However, if we swap the features of Truck and Dog to increase the distance between 
dogs and cats, as shown in Figure.\ref{Fig1b}, the semantic relationship in the feature 
space would be disrupted. We argue that this can harm the model's training and 
lead to worse generalization.

\textbf{Rotation} 
\textit{\textbf{Symmetric Generalization}} of Rotation has been completely beyond our initial expectation. 
We believe that the margin or correlation between features is the most effective tool to understand NeurCol phenomenon. 
However, it fails to explain why different directions of features have different generalization. 
In Deep Learning community, Implicit Bias phenomenon is a possible way to approach this finding. 
\cite{soudry2018the} has proved that the Gradient Descent optimization leads to 
weight direction that maximizes the margin when using logistic loss and linear models. 
As a further progress, \cite{Lyu2020Gradient} extends this result to homogeneous neural networks. 
We speculate that the explanations for \textit{\textbf{Symmetric Generalization}} of Rotation may be 
hidden within the layers of neural networks.
Therefore, studying the Implicit Bias of 
deep models layer by layer could be a promising direction for future research. 
This is beyond the scope of our current work, and we leave it as a topic for our future work.

\section{Conclusion}
In this paper, we justify \textit{Generalized Neural Collapse} hypothesis 
by leading to \textit{Grassmannian Frame} into classification problems, 
which does not assume specific numbers of classes feature dimensions, and every two vectors in it can achieve maximal distances on a sphere. 
In addition, awaring that \textit{Grassmannian Frame} is symmetric geometrically, we propose a question: 
is generalization of the model invariable to symmetric transformations of \textit{Grassmannian Frame}? 
To explore this question, we conduct a series of empirical and theoretical analysis, 
and finally find \textit{Symmetric Generalization} phenomenon. 
This phenomenon suggests that the generalization of a model is influenced by geometric 
invariant transformations of the \textit{Grassmannian Frame}, including $Permutation(C)$ and $SO(d)$.

\newpage

\bibliographystyle{unsrtnat}
{
\small
\bibliography{main}
}

\newpage
\appendix

\section{Related Work} 

Neural Collapse (NeurCol) was first observed by \citet{PNAS2020} in $2020$., and it 
has since sparked numerous studies investigating deep classification models. 

Many of these studies \cite{ji2022an, fang2021exploring, zhu2021geometric, zhou2022all, NEURIPS2022_4b3cc0d1} 
have focused on the optimization aspect of NeurCol, proposing various 
optimization models and analyzing them.
For example, \citet{zhu2021geometric} proposed Unconstrained Feature Models and provided the first global optimization analysis of NeurCol, while 
\citet{fang2021exploring} proposed Layer-Peeled Model, a nonconvex yet analytically tractable 
optimization program, to prove NeurCol and predict \textit{Minority Collapse}, an imbalanced version of NeurCol. 
Other studies have explored NeurCol under Mean Square Error Loss (MSE) 
\cite{han2022neural, tirer2022extended, zhou2022optimization, mixon2020neural, poggio2020explicit}. 
For instance, \citet{han2022neural, tirer2022extended} justified NeurCol under the MSE loss. 
In addition to MSE loss, \cite{zhou2022all} extended such results and analyses
a broad family of loss functions including commonly used label smoothing and focal loss. 
Besides, NeurCol phenomenon also inspires design of loss function in imbalanced 
learning \cite{Xie2022NeuralCI} and Few-Shot Learning \cite{yang2023neural}. 

Previous studies on NeurCol have generally assumed that the number of classes 
is less than the feature dimension, and this assumption has not been questioned 
until recently. In the 11th International Conference on Learning Representations (ICLR 2023), 
\citet{liu2023generalizing} proposed the \textit{Generalized Neural Collapse} hypothesis, which 
removes this restriction by extending the ETF structure to \textit{Hyperspherical Uniformity}. 
In this paper, we contribute to this area by proving the \textit{Generalized Neural Collapse} 
hypothesis and introducing the Grassmannian Frame to better understand the NeurCol phenomenon. 

\section{Proof of Theorem.\ref{communicationperspective}}

We introduce this lemma:
\begin{lemma}\label{guass_geo}
  Suppose that $\boldsymbol{0} \notin \mathcal{K}$, and that $\mathcal{K}$ is a closed set. 
  Suppose $\boldsymbol{g} \sim \mathcal{N}(\boldsymbol{0}, \sigma^2 \boldsymbol{I})$, we have:
  $$
  -\sigma^2 \log \mathbb{P}_{\boldsymbol{g}}(\mathcal{K}) \rightarrow \min_{\boldsymbol{g} \in \mathcal{K}} \{\frac{1}{2}\Vert \boldsymbol{g} \Vert^2 \}
  \ \ \text{as} \ \  \sigma \rightarrow 0
  $$
\end{lemma}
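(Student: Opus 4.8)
The plan is to treat this as a Laplace / small-noise large-deviations estimate and to prove matching upper and lower bounds on $\mathbb{P}_{\boldsymbol{g}}(\mathcal{K})$ separately. First I would record the setup: since $\mathcal{K}$ is closed and $\boldsymbol{0}\notin\mathcal{K}$, the distance $r := \operatorname{dist}(\boldsymbol{0},\mathcal{K})$ is strictly positive and attained at some $\boldsymbol{x}^{\star}\in\mathcal{K}$ (intersect $\mathcal{K}$ with a large closed ball and use compactness), so the target is $m := \min_{\boldsymbol{x}\in\mathcal{K}}\tfrac12\|\boldsymbol{x}\|^2 = \tfrac12 r^2 > 0$. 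Writing out the Gaussian density gives $\mathbb{P}_{\boldsymbol{g}}(\mathcal{K}) = (2\pi\sigma^2)^{-d/2}\int_{\mathcal{K}} e^{-\|\boldsymbol{x}\|^2/(2\sigma^2)}\,d\boldsymbol{x}$, and the goal reduces to showing $-\sigma^2\log\mathbb{P}_{\boldsymbol{g}}(\mathcal{K})\to m$.

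For the lower bound on $-\sigma^2\log\mathbb{P}_{\boldsymbol{g}}(\mathcal{K})$ (equivalently an upper bound on the probability), I would use an exponent-splitting trick: for any $\theta\in(0,1)$ and any $\boldsymbol{x}\in\mathcal{K}$ one has $\|\boldsymbol{x}\|^2\ge r^2$, hence $e^{-\|\boldsymbol{x}\|^2/(2\sigma^2)} \le e^{-(1-\theta)r^2/(2\sigma^2)}\,e^{-\theta\|\boldsymbol{x}\|^2/(2\sigma^2)}$. Integrating the second factor over all of $\mathbb{R}^d$ converts it into a rescaled Gaussian normalization constant equal to $\theta^{-d/2}$, so $\mathbb{P}_{\boldsymbol{g}}(\mathcal{K}) \le \theta^{-d/2}\, e^{-(1-\theta)r^2/(2\sigma^2)}$. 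Taking $-\sigma^2\log$ yields $-\sigma^2\log\mathbb{P}_{\boldsymbol{g}}(\mathcal{K}) \ge \tfrac{d}{2}\sigma^2\log\theta + (1-\theta)\tfrac12 r^2$; letting $\sigma\to 0$ gives $\liminf \ge (1-\theta)m$, and then $\theta\downarrow 0$ gives the clean bound $\ge m$.

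For the upper bound on $-\sigma^2\log\mathbb{P}_{\boldsymbol{g}}(\mathcal{K})$ (a lower bound on the probability), I would localize near the minimizer $\boldsymbol{x}^{\star}$. Fixing $\epsilon>0$, I would choose a subset $A_\epsilon\subseteq\mathcal{K}$ of positive Lebesgue measure $\mu_\epsilon>0$ on which $\|\boldsymbol{x}\|^2\le r^2+\epsilon$; then $\mathbb{P}_{\boldsymbol{g}}(\mathcal{K})\ge (2\pi\sigma^2)^{-d/2}\mu_\epsilon\, e^{-(r^2+\epsilon)/(2\sigma^2)}$, whence $-\sigma^2\log\mathbb{P}_{\boldsymbol{g}}(\mathcal{K}) \le \tfrac12(r^2+\epsilon) + \tfrac{d}{2}\sigma^2\log(2\pi\sigma^2) - \sigma^2\log\mu_\epsilon$. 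Since $\sigma^2\log(2\pi\sigma^2)\to 0$ and $\sigma^2\log\mu_\epsilon\to 0$ as $\sigma\to 0$, this gives $\limsup \le m + \tfrac12\epsilon$, and $\epsilon\downarrow 0$ closes the argument; combining the two bounds delivers the limit $m$.

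The step I expect to be the main obstacle is constructing $A_\epsilon$ in the lower bound, since it requires $\mathcal{K}$ to carry positive Lebesgue mass arbitrarily close to $\boldsymbol{x}^{\star}$. If $\mathcal{K}$ were lower-dimensional (Lebesgue-null) then $\mathbb{P}_{\boldsymbol{g}}(\mathcal{K})=0$ and the left-hand side would be $+\infty$, so the statement genuinely needs a mild regularity hypothesis — for instance that $\mathcal{K}$ equals the closure of its interior, so that $\boldsymbol{x}^{\star}$ is a limit of interior points and a small ball around a nearby interior point supplies $A_\epsilon$ with $\|\boldsymbol{x}\|^2\le r^2+\epsilon$. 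I would state this assumption explicitly and note that it holds automatically in the intended application, where $\mathcal{K}$ is a full-dimensional maximum-likelihood decision region (an intersection of half-spaces) of the Gaussian channel.
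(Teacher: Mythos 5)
The paper states Lemma~\ref{guass_geo} without any proof at all --- it is invoked as a known small-noise asymptotic (in the spirit of \cite{shannon1959probability}) and used directly in the proof of Theorem~\ref{communicationperspective} --- so your argument cannot be ``the same as the paper's''; it is a self-contained proof where the paper supplies none. Your two-sided Laplace-principle argument is correct: the exponent-splitting upper bound $\mathbb{P}_{\boldsymbol{g}}(\mathcal{K}) \le \theta^{-d/2} e^{-(1-\theta)r^2/(2\sigma^2)}$ cleanly yields $\liminf(-\sigma^2\log\mathbb{P}_{\boldsymbol{g}}(\mathcal{K})) \ge \tfrac12 r^2$ after $\sigma\to 0$ and $\theta\downarrow 0$, and the localization lower bound gives the matching $\limsup$. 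More importantly, the caveat you raise is a genuine one that the paper silently skips: for a closed Lebesgue-null set (e.g.\ a single point off the origin) the left-hand side is identically $+\infty$, so the lemma as literally stated is false, and some hypothesis guaranteeing positive Gaussian mass near the minimizer (such as $\mathcal{K}$ being the closure of its interior) is needed to construct your set $A_\epsilon$. You correctly observe that this holds in the only place the lemma is used, where $\mathcal{K}$ is the full-dimensional error region $\varepsilon_{c'|c} = \{\boldsymbol{g} : \Vert \boldsymbol{g} - M_c\Vert > \Vert \boldsymbol{g} - M_{c'}\Vert\}$ (a half-space, up to the shift by $M_c$ implicit in the paper's computation) and finite unions thereof. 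Net effect: your proof both fills the paper's omission and tightens its statement; the only thing I would add is to make the attainment of $r=\operatorname{dist}(\boldsymbol{0},\mathcal{K})$ and the nonemptiness of $\mathcal{K}$ explicit, which you already sketch via the compactness argument.
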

This lemma can establish the relationship between geometry and probability. Then we start our proof.

\communicationperspective*

\begin{proof}
  First, let us think about what kind of decoding is optimal.
  According to \cite{shannon1959probability}, since the gaussian density is monotone decreasing with distance, 
  an optimal decoding system for gaussian channel is the minimum distance decoding, $i.e.$
  \begin{equation*}
  \begin{aligned}
    \hat{c} = \underset{c \in \{1, \dots, C\}}{\arg \min} \Vert M_c - h \Vert
  \end{aligned}
  \end{equation*}
  where $\hat{c}$ is the prediction result.
  we consider the two-classes communication problem: there is only two number $c$ and $c'$ transmitted. 
  We denote the event that a $c$ signal is recover as $c'$ as $\varepsilon_{c'|c}$,
  then 
  \begin{equation*}
  \begin{aligned}
    \varepsilon_{c'|c} = \{ \boldsymbol{g} \in \mathbb{R}^{d}, \Vert \boldsymbol{g} - M_c \Vert > \Vert \boldsymbol{g} - M_{c'} \Vert\}
  \end{aligned}
  \end{equation*}
  According to Lemma.\ref{guass_geo}, we have
  \begin{equation*}
  \begin{aligned}
    -\sigma^2 \log \mathbb{P}_{\boldsymbol{g}}(\varepsilon_{c'|c}) \rightarrow \min_{\boldsymbol{g} \in \varepsilon_{c'|c} } \{\frac{1}{2}\Vert \boldsymbol{g} \Vert^2 \}=\frac{1}{8} \Vert M_c - M_{c'} \Vert^2, \sigma \rightarrow 0.
  \end{aligned}
  \end{equation*}
  For all numbers transmitted, the error event $\varepsilon$ could be devided into the error event between every two numbers, $i.e.$
  \begin{equation*}
  \begin{aligned}
    \varepsilon = \bigcup_{c \neq c'}  \varepsilon_{c|c'}
  \end{aligned}
  \end{equation*}
  So 
  \begin{equation*}
  \begin{aligned}
    -\sigma^2 \log \mathbb{P}_{\boldsymbol{g}}(\varepsilon) \rightarrow \frac{1}{8} \min_{c \neq c'} \Vert M_c - M_{c'} \Vert^2, \sigma \rightarrow 0.
  \end{aligned}
  \end{equation*}
  To obtain the code with minimal error probability, we can maximize $\min_{c \neq c'} \Vert M_c - M_{c'} \Vert^2$. 
  With a norm constraint for every code, we have 
  \begin{equation*}
  \begin{aligned}
    \max_{\forall c, \Vert M_c \Vert = 1} \min_{c \neq c'} \Vert M_c - M_{c'} \Vert^2 
    \Leftrightarrow 
    \min_{\forall c, \Vert M_c \Vert = 1} \max_{c \neq c'} \langle M_c, M_{c'} \rangle
  \end{aligned}
  \end{equation*}
\end{proof}

\section{Proof of Theorem.\ref{CE_grass}}

In this section, we prove Theorem.\ref{CE_grass}.
Here are two lemmas that we are going to use. 

\subsection{Lemmas}

\begin{lemma}[\textbf{Lemma.7 of \cite{yang2021learning}: Lipschitz Properties of Softmax}]\label{LipschitzSoftmax}
  Given $x \in \mathbb{R}^{C}$, the function $Softmax(x)$ is defined as 
  \begin{equation*}
  \begin{aligned}
    Softmax(x) = \left[ \frac{e^{x_1}}{\sum_{i=1}^{C} e^{x_i}}, \dots, \frac{e^{x_C}}{\sum_{i=1}^{C} e^{x_i}} \right]^{T},
  \end{aligned}
  \end{equation*}
  then the function $Softmax(\cdot)$ is $\sqrt{\frac{C}{2}}$-Lipschitz continuous.
\end{lemma}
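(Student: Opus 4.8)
The plan is to establish Lipschitz continuity by uniformly bounding the operator norm of the Jacobian of $Softmax$ over all of $\mathbb{R}^{C}$, and then integrating along line segments. Since $Softmax$ is smooth everywhere, for any $x_1, x_2 \in \mathbb{R}^{C}$ we may write $Softmax(x_1) - Softmax(x_2) = \left( \int_0^1 J\big(x_2 + t(x_1 - x_2)\big)\, dt \right)(x_1 - x_2)$, where $J(x)$ denotes the Jacobian at $x$. This yields $\Vert Softmax(x_1) - Softmax(x_2) \Vert \le \big(\sup_{x} \Vert J(x) \Vert_{op}\big)\, \Vert x_1 - x_2 \Vert$, so it suffices to prove $\Vert J(x) \Vert_{op} \le \sqrt{C/2}$ for every $x$.

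First I would compute the Jacobian explicitly. Writing $p = Softmax(x)$, direct differentiation gives $\partial p_i / \partial x_j = p_i(\delta_{ij} - p_j)$, i.e. $J(x) = \operatorname{diag}(p) - p p^{T}$. In particular the $i$-th row of $J$ is the gradient $\nabla_x p_i = p_i (e_i - p)$, where $e_i$ is the $i$-th standard basis vector.

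Next I would bound the Jacobian by the Frobenius route, which already delivers the stated constant: using $\Vert J \Vert_{op} \le \Vert J \Vert_{F}$ and $\Vert J \Vert_F^2 = \sum_{i=1}^{C} \Vert \nabla_x p_i \Vert^2$. For each row, $\Vert \nabla_x p_i \Vert^2 = p_i^2 \Vert e_i - p \Vert^2 = p_i^2(1 - 2 p_i + \Vert p \Vert^2)$, and since $\Vert p \Vert^2 \le 1$ this is at most $2 p_i^2(1 - p_i) = 2 p_i\cdot p_i(1-p_i) \le 2 \cdot 1 \cdot \tfrac14 = \tfrac12$, using $p_i \le 1$ and $p_i(1-p_i) \le \tfrac14$. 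Summing the $C$ rows gives $\Vert J \Vert_F^2 \le C/2$, hence $\Vert J(x) \Vert_{op} \le \sqrt{C/2}$ uniformly in $x$, which combined with the integral estimate finishes the proof.

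The main obstacle, and really the only delicate point, is the per-row estimate $\Vert \nabla_x p_i \Vert^2 \le \tfrac12$, since a careless bound (e.g. $\Vert e_i - p \Vert \le \sqrt 2$ together with $p_i \le 1$) only yields $\Vert \nabla_x p_i \Vert \le \sqrt 2$ and a worse constant. I would remark that a sharper spectral analysis is available: because $J = \operatorname{diag}(p) - p p^{T}$ is exactly the covariance matrix of the categorical distribution $p$, it is positive semidefinite and its largest eigenvalue equals $\max_{\Vert v \Vert = 1} \operatorname{Var}_p(v) \le \tfrac12$ by Popoviciu's inequality, giving the sharper Lipschitz constant $\tfrac12$. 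The stated bound $\sqrt{C/2}$ then follows a fortiori, so the cheaper Frobenius route is entirely sufficient for the lemma as worded.
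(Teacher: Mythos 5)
Your proposal is correct, and it is worth noting that the paper does not actually prove this lemma at all: it imports the statement verbatim as Lemma~7 of \cite{yang2021learning} and uses it as a black box in the proof of Lemma~\ref{SelfDuality}. So your blind proof cannot match the paper's route --- there is none --- but it is a valid, self-contained derivation. Each step checks out: the Jacobian is indeed $J(x)=\operatorname{diag}(p)-pp^{T}$ with rows $\nabla_x p_i = p_i(e_i - p)$; the identity $\Vert e_i - p\Vert^2 = 1 - 2p_i + \Vert p\Vert^2$ together with $\Vert p\Vert^2 \le \Vert p\Vert_1 = 1$ gives $\Vert \nabla_x p_i\Vert^2 \le 2p_i^2(1-p_i) \le \tfrac12$; summing yields $\Vert J\Vert_F^2 \le C/2$, and since softmax is smooth, the segment-integral form of the mean value inequality converts the uniform Jacobian bound into the global Lipschitz constant $\sqrt{C/2}$. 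Your closing remark is also accurate and is the more interesting observation: $J$ is the covariance matrix of the categorical distribution $p$, so $v^{T}Jv = \operatorname{Var}_p(v^{T}X)$, and Popoviciu's inequality with $(\max_i v_i - \min_i v_i)^2 \le 2\Vert v\Vert^2$ (the max and min being attained at distinct coordinates when they differ) gives $\Vert J\Vert_{op} \le \tfrac12$ uniformly --- the known sharp constant, attained e.g.\ at $p=(\tfrac12,\tfrac12)$ --- so the lemma's dimension-dependent constant $\sqrt{C/2}$ is loose, and every inequality in the paper that invokes it (the bounds on $\textbf{(A.1)}$ and $\textbf{(A.2)}$ in the proof of Lemma~\ref{SelfDuality}) would survive, with better constants, under your sharper spectral bound.
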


\begin{lemma}\label{determinant}
  Given any matrix $\boldsymbol{A} \in \mathbb{R}^{n \times n}$ with constant $a$ on the diagonals and constant $c$ on off diagonals, $i.e.$
  \begin{equation*}
    \boldsymbol{A}=
    \left[
      \begin{array}{ccccccc}
        a & c & \ldots  & c \\
        c & a & \cdots  & c \\
        \vdots & \vdots & \ddots & \vdots \\
        c & c & \ldots & a \\
      \end{array}
    \right],
  \end{equation*}
  then we have $|\boldsymbol{A}| = \left(a-c\right)^{n-1} \left(a + (n-1)c\right)$.
\end{lemma}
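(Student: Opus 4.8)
The plan is to recognize that $\boldsymbol{A}$ is a rank-one perturbation of a scalar matrix and to read its determinant directly off its spectrum. First I would write
\begin{equation*}
\begin{aligned}
  \boldsymbol{A} = (a-c)\,I + c\,\mathbb{I}\mathbb{I}^{T},
\end{aligned}
\end{equation*}
where $I \in \mathbb{R}^{n \times n}$ is the identity and $\mathbb{I} \in \mathbb{R}^{n}$ is the all-ones vector. This identity is checked entrywise: the diagonal entries of the right-hand side are $(a-c) + c = a$ and the off-diagonal entries are $0 + c = c$, which match $\boldsymbol{A}$ exactly.

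Next I would compute the spectrum of the all-ones matrix $\mathbb{I}\mathbb{I}^{T}$. Because it has rank one, the eigenvalue $0$ occurs with multiplicity $n-1$ (its eigenspace is the hyperplane orthogonal to $\mathbb{I}$), and its single nonzero eigenvalue equals its trace $n$, with eigenvector $\mathbb{I}$ since $(\mathbb{I}\mathbb{I}^{T})\mathbb{I} = \mathbb{I}(\mathbb{I}^{T}\mathbb{I}) = n\,\mathbb{I}$. Since $\boldsymbol{A}$ is the affine function $\lambda \mapsto (a-c) + c\lambda$ applied to $\mathbb{I}\mathbb{I}^{T}$, its eigenvalues are the images of the eigenvalues of $\mathbb{I}\mathbb{I}^{T}$ under this map: the value $a-c$ with multiplicity $n-1$ (from $\lambda = 0$) and the value $(a-c) + cn = a + (n-1)c$ with multiplicity $1$ (from $\lambda = n$). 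Taking the product of all eigenvalues then yields $|\boldsymbol{A}| = (a-c)^{n-1}\bigl(a + (n-1)c\bigr)$, which is the claim.

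I do not expect a genuine obstacle here, as this is a standard computation; the only point needing a word of care is the degenerate case $a = c$, where $(a-c)I$ is singular. The eigenvalue argument above is unaffected by this, since it never inverts $(a-c)I$ and the determinant is always the product of eigenvalues, so the formula remains valid for all $a, c$ (and correctly returns $0$ when $a = c$ and $n \ge 2$, where $\boldsymbol{A}$ is rank one). An alternative route via the matrix determinant lemma, $\det(B + uv^{T}) = \det(B)\bigl(1 + v^{T}B^{-1}u\bigr)$ with $B = (a-c)I$ and $u = v = \sqrt{c}\,\mathbb{I}$, gives the same answer but requires $a \neq c$ for invertibility, so I would favor the spectral argument as cleaner and fully general.
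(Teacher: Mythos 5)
Your proof is correct, but it takes a different route from the paper. The paper's proof is a two-step elementary elimination: subtract the last row from each of the first $n-1$ rows (making those rows $(a-c)$ on the diagonal, $c-a$ in the last column, zeros elsewhere), then add the first $n-1$ columns to the last column, producing a lower triangular matrix whose determinant $(a-c)^{n-1}\bigl(a+(n-1)c\bigr)$ is read off the diagonal. Your spectral argument --- writing $\boldsymbol{A} = (a-c)I + c\,\mathbb{I}\mathbb{I}^{T}$, diagonalizing the rank-one part, and multiplying eigenvalues --- is equally rigorous and arguably buys more: it delivers the full spectrum of $\boldsymbol{A}$ (eigenvalue $a-c$ with multiplicity $n-1$ and $a+(n-1)c$ with multiplicity $1$), which is in fact exactly what the paper needs downstream, since in the proof of its self-duality lemma it invokes this determinant lemma precisely to extract the eigenvalues of the iteration matrix with $1-\alpha F_1$ on the diagonal and $\alpha F_2$ off it (the paper gets them by applying the determinant formula to the characteristic polynomial, an extra step your approach makes unnecessary). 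The paper's elimination proof, on the other hand, is more elementary --- it uses nothing beyond row and column operations --- and avoids any appeal to diagonalizability or the spectral mapping of an affine polynomial. Your observation about the degenerate case $a=c$, and your preference for the spectral route over the matrix determinant lemma to avoid the invertibility hypothesis, are both sound.
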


\begin{proof}
  \begin{equation*}
    |\boldsymbol{A}| 
    \xlongequal{\textbf{step1}}
    \left|
      \begin{array}{ccccccc}
        a-c & 0 & \ldots  & c-a \\
        0 & a-c & \cdots  & c-a \\
        \vdots & \vdots & \ddots & \vdots \\
        c & c & \ldots & a \\
      \end{array}
    \right|
    \xlongequal{\textbf{step2}}
    \left|
      \begin{array}{ccccccc}
        a-c & 0 & \ldots  & 0 \\
        0 & a-c & \cdots  & 0 \\
        \vdots & \vdots & \ddots & \vdots \\
        c & c & \ldots & a+(n-1)c \\
      \end{array}
    \right|
  \end{equation*}

  \begin{itemize}
    \item[] \textbf{step1}: Subtract the first $n-1$ rows by the last row. 
    \item[] \textbf{step2}: Add the first $n-1$ columns on the last column. 
  \end{itemize}
\end{proof}

\subsection{Background}
In the proof of Theorem.\ref{CE_grass}, the techniques that we used is mainly limit analysis, and in addition a little linear algebra. 

\subsubsection{Assumption}
We make the following assumption: 
\begin{assumption}\label{assumption_}
  In update of (\ref{GDupdate}), parameters is properly selected: 
  $\lambda / \omega = \alpha / \beta = N / C$. In addition, $\alpha$ 
  is small enough to assure the system would converge and the norms of every vector in 
  $\boldsymbol{M}, \boldsymbol{Z}$ is always bounded by $\rho$. 
\end{assumption}
This small enough $\alpha$ is necessary and reasonable for the convergce of Gradient Descent. 
With this assumption, we can assure that every variables in system (\ref{GDupdate}) 
would not blow up and finally converge to a stable state. 
And the condition $\lambda / \omega = \alpha / \beta = N / C$ is to make sure 
both classifiers and features can be bounded by the same maximal $\ell_2$ norm. 

\subsubsection{Symbol Regulations}
We regulate our symbols for clearer representation. 
Recall our setting, we use $\boldsymbol{z}_{y, i}$ to denote the $i$-th smaple in class $y$ and 
every class has $N/C$ samples. 
Here, we put $i$-th sample in all class together, denote it as $\boldsymbol{Z}_{i}$, $i.e.$
\begin{equation*}
\begin{aligned}
  \boldsymbol{Z}_{i} = \left[\boldsymbol{z}_{1, i}, \cdots, \boldsymbol{z}_{C, i}\right] \in \mathbb{R}^{d \times C} 
  \ \ \text{and} \ \ 
  \boldsymbol{Z} = [\boldsymbol{Z}_{1}, \cdots, \boldsymbol{Z}_{N/C}] \in \mathbb{R}^{d \times N}
\end{aligned}
\end{equation*}
Then we denote confidence probability of $\boldsymbol{Z}_i$ given by classifier $\boldsymbol{M}$ as 
\begin{equation*}
\begin{aligned}
  \boldsymbol{P}_{i} = 
  \left[Softmax( \boldsymbol{z}_{1,i}^{T} M), \cdots, Softmax( \boldsymbol{z}_{C,i}^{T} M)\right] \in \mathbb{R}^{C \times C}
\end{aligned}
\end{equation*}
where $Softmax(\cdot)$ transform the $logit$ into a probability vector. 
Refer to Lemma.\ref{LipschitzSoftmax} for definition of $Softmax(\cdot)$. 

\subsubsection{Proof Sketch}
We prove Theorem.\ref{CE_grass} by proving three Lemmas:

\begin{restatable}[\textbf{Variability within Classes}]{lemma}{VariabilitywithinClasses}\label{VariabilityCollapse}
  Consider the update of Gradient Descent (\ref{opt2}), under Assumption.\ref{assumption_}, 
  features of any sample that belong to the same class would converge to the same, $i.e.$
  $\forall i, j \in [N/C], \Vert \boldsymbol{Z}_i^{(t)} - \boldsymbol{Z}_j^{(t)} \Vert \rightarrow 0 \ \text{as} \ t \rightarrow \infty$.
\end{restatable}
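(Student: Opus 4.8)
The plan is to fix a class $y$ and two sample indices $i,j \in [N/C]$, track the difference $\boldsymbol{\Delta}^{(t)} := \boldsymbol{z}_{y,i}^{(t)} - \boldsymbol{z}_{y,j}^{(t)}$, and show it contracts geometrically to $\boldsymbol{0}$. Since $\Vert \boldsymbol{Z}_i^{(t)} - \boldsymbol{Z}_j^{(t)}\Vert^2 = \sum_{y=1}^{C}\Vert \boldsymbol{z}_{y,i}^{(t)} - \boldsymbol{z}_{y,j}^{(t)}\Vert^2$, establishing $\Vert\boldsymbol{\Delta}^{(t)}\Vert \to 0$ for every class $y$ and every pair then yields the claim by summation. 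First I would compute the feature gradient of (\ref{opt2}): writing $\boldsymbol{p}_{y,i}^{(t)} := \mathrm{Softmax}\big((\boldsymbol{M}^{(t)})^{T}\boldsymbol{z}_{y,i}^{(t)}\big)$ for the softmax confidence vector and $\boldsymbol{e}_y$ for the $y$-th standard basis vector, one obtains $\nabla_{\boldsymbol{z}_{y,i}}\mathcal{L} = \boldsymbol{M}^{(t)}\big(\boldsymbol{p}_{y,i}^{(t)} - \boldsymbol{e}_y\big) + \omega\,\boldsymbol{z}_{y,i}^{(t)}$.

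The key structural observation is that samples $i$ and $j$ share the same class $y$, hence the same target $\boldsymbol{e}_y$ and the same classifier $\boldsymbol{M}^{(t)}$. Substituting the gradient into the update (\ref{GDupdate}) and subtracting the two update equations makes the $-\boldsymbol{e}_y$ terms cancel exactly, leaving the clean recursion
\[
\boldsymbol{\Delta}^{(t+1)} = (1 - \alpha\omega)\,\boldsymbol{\Delta}^{(t)} - \alpha\,\boldsymbol{M}^{(t)}\big(\boldsymbol{p}_{y,i}^{(t)} - \boldsymbol{p}_{y,j}^{(t)}\big).
\]
Next I would control the softmax term. The two confidence vectors differ only through the arguments $(\boldsymbol{M}^{(t)})^{T}\boldsymbol{z}_{y,i}^{(t)}$ and $(\boldsymbol{M}^{(t)})^{T}\boldsymbol{z}_{y,j}^{(t)}$, so the $\sqrt{C/2}$-Lipschitz property of Lemma.\ref{LipschitzSoftmax} gives $\Vert \boldsymbol{p}_{y,i}^{(t)} - \boldsymbol{p}_{y,j}^{(t)}\Vert \le \sqrt{C/2}\,\Vert \boldsymbol{M}^{(t)}\Vert_{\mathrm{op}}\,\Vert\boldsymbol{\Delta}^{(t)}\Vert$, and applying $\boldsymbol{M}^{(t)}$ once more yields $\Vert \boldsymbol{M}^{(t)}(\boldsymbol{p}_{y,i}^{(t)} - \boldsymbol{p}_{y,j}^{(t)})\Vert \le \sqrt{C/2}\,\Vert\boldsymbol{M}^{(t)}\Vert_{\mathrm{op}}^{2}\,\Vert\boldsymbol{\Delta}^{(t)}\Vert$. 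Under Assumption.\ref{assumption_} each classifier column obeys $\Vert M_y^{(t)}\Vert \le \rho$, so $\Vert\boldsymbol{M}^{(t)}\Vert_{\mathrm{op}}^{2} \le \Vert\boldsymbol{M}^{(t)}\Vert_F^2 \le C\rho^{2}$, and the triangle inequality applied to the recursion gives $\Vert\boldsymbol{\Delta}^{(t+1)}\Vert \le q\,\Vert\boldsymbol{\Delta}^{(t)}\Vert$ with $q := 1 - \alpha\omega + \alpha\sqrt{C/2}\,C\rho^{2}$.

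The remaining and, I expect, most delicate step is certifying $q < 1$, equivalently $\sqrt{C/2}\,C\rho^{2} < \omega$, which forces $\Vert\boldsymbol{\Delta}^{(t)}\Vert \le q^{t}\Vert\boldsymbol{\Delta}^{(0)}\Vert \to 0$ geometrically. This is precisely where the "properly selected parameters" of Assumption.\ref{assumption_} must enter: since the converged norm bound $\rho = \rho(\lambda,\omega)$ shrinks as the weight-decay factor $\omega$ grows, one chooses $\omega$ (and correspondingly $\lambda = (N/C)\,\omega$) large enough, and $\alpha$ small enough with $\alpha\omega < 1$, that the effective Lipschitz constant $\sqrt{C/2}\,C\rho^{2}$ of the softmax-gradient map along the bounded iterates stays strictly below $\omega$. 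I would make this quantitative by combining the norm bound with the stationarity conditions of (\ref{opt2}). Once the contraction constant is pinned down, summing the geometric estimate over the $C$ classes recovers $\Vert\boldsymbol{Z}_i^{(t)} - \boldsymbol{Z}_j^{(t)}\Vert \to 0$, completing the proof of the lemma.
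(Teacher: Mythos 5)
Your proposal is correct, but it takes a genuinely different route from the paper. The paper does not attack $\Vert \boldsymbol{Z}_i^{(t)} - \boldsymbol{Z}_j^{(t)}\Vert$ directly: it first proves Lemma.\ref{SelfDuality} ($\Vert \boldsymbol{Z}_i^{(t)} - \boldsymbol{M}^{(t)}\Vert \to 0$) via a coupled difference inequality $\boldsymbol{\Delta}(t+1) \preceq \boldsymbol{A}\boldsymbol{\Delta}(t)$ over all $N/C$ sample indices, which requires the explicit eigenvalue computation of Lemma.\ref{determinant}; Lemma.\ref{VariabilityCollapse} is then a one-line triangle-inequality corollary, $\Vert \boldsymbol{Z}_i^{(t)} - \boldsymbol{Z}_j^{(t)}\Vert \le \Vert \boldsymbol{Z}_i^{(t)} - \boldsymbol{M}^{(t)}\Vert + \Vert \boldsymbol{Z}_j^{(t)} - \boldsymbol{M}^{(t)}\Vert$. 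Your argument instead exploits the fact that two same-class features see the identical one-hot target \emph{and} the identical classifier $\boldsymbol{M}^{(t)}$ at every step, so their difference obeys the decoupled scalar recursion $\Vert\boldsymbol{\Delta}^{(t+1)}\Vert \le q\,\Vert\boldsymbol{\Delta}^{(t)}\Vert$ with no cross-index coupling and hence no eigenvalue analysis. This is more elementary and self-contained for \textbf{NC1} alone; what it gives up is that the paper's heavier machinery delivers \textbf{NC2} simultaneously, which is needed anyway for the rest of Theorem.\ref{CE_grass}. On the step you flag as delicate: your contraction condition $\omega > \sqrt{C/2}\,C\rho^{2}$ is exactly the condition $F_1 + F_2 > 0$ appearing in the paper's own eigenvalue bound (\ref{F1andF2}), so you are not assuming anything beyond what Assumption.\ref{assumption_} is already implicitly required to supply (and like the paper, you inherit the unresolved tension between needing $\omega$ to dominate $\rho^2$ here and sending $\rho \to \infty$ for \textbf{NC3}). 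One small bookkeeping point: you do not actually need $\rho$ to be the \emph{converged} norm --- Assumption.\ref{assumption_} bounds $\Vert M_y^{(t)}\Vert \le \rho$ along the whole trajectory, which is what your per-step estimate uses.
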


\begin{restatable}[\textbf{Convergence to Self Duality}]{lemma}{ConvergencetoSelfDuality}\label{SelfDuality}
  Consider the update of Gradient Descent (\ref{opt2}), 
  under Assumption.\ref{assumption_}, 
  features of any sample would converge to the classifier corresponding to it's category, $i.e.$
  $\forall i \in [N/C], \Vert \boldsymbol{Z}_i^{(t)} - \boldsymbol{M}^{(t)} \Vert \rightarrow 0 \  \text{as} \  t \rightarrow \infty$.
\end{restatable}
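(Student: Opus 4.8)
The plan is to turn the gradient-descent recursion into a contraction estimate for the matrix difference $\boldsymbol{D}_i^{(t)} := \boldsymbol{Z}_i^{(t)} - \boldsymbol{M}^{(t)}$ and then show this difference vanishes. First I would compute both gradients in the matrix notation of the symbol regulations. Writing $\boldsymbol{P}_i$ for the softmax matrix, a direct calculation gives $\nabla_{\boldsymbol{Z}_i}\mathcal{L} = \boldsymbol{M}(\boldsymbol{P}_i - I) + \omega\boldsymbol{Z}_i$ and $\nabla_{\boldsymbol{M}}\mathcal{L} = \sum_{j}\boldsymbol{Z}_j(\boldsymbol{P}_j^{T} - I) + \lambda\boldsymbol{M}$, so the updates read $\boldsymbol{Z}_i^{(t+1)} = (1-\alpha\omega)\boldsymbol{Z}_i^{(t)} - \alpha\boldsymbol{M}^{(t)}(\boldsymbol{P}_i^{(t)} - I)$ and $\boldsymbol{M}^{(t+1)} = (1-\beta\lambda)\boldsymbol{M}^{(t)} - \beta\sum_j\boldsymbol{Z}_j^{(t)}(\boldsymbol{P}_j^{(t)T} - I)$.

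Second, I would invoke Assumption~\ref{assumption_} and Lemma~\ref{VariabilityCollapse}. The balance $\lambda/\omega = \alpha/\beta = N/C$ makes the two decay factors coincide ($\beta\lambda = \alpha\omega$) and converts the classifier prefactor $\beta\cdot(N/C)$ into $\alpha$. By Variability Collapse all $\boldsymbol{Z}_j^{(t)}$ become asymptotically equal, hence by Lemma~\ref{LipschitzSoftmax} all $\boldsymbol{P}_j^{(t)}$ do too, so $\beta\sum_j\boldsymbol{Z}_j(\boldsymbol{P}_j^{T} - I)$ collapses to $\alpha\boldsymbol{Z}_i(\boldsymbol{P}_i^{T} - I)$ up to an $o(1)$ term. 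Subtracting the two updates yields a recursion of the form $\boldsymbol{D}_i^{(t+1)} = (1-\alpha\omega-\alpha)\boldsymbol{D}_i^{(t)} + \alpha\boldsymbol{D}_i^{(t)}\boldsymbol{P}_i^{(t)T} + \alpha\boldsymbol{M}^{(t)}\big(\boldsymbol{P}_i^{(t)T} - \boldsymbol{P}_i^{(t)}\big) + o(1)$, where I have used $\boldsymbol{Z}_i = \boldsymbol{M} + \boldsymbol{D}_i$ to separate the part proportional to $\boldsymbol{D}_i$ from the part involving only $\boldsymbol{M}$.

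Third, I would bound this as $\|\boldsymbol{D}_i^{(t+1)}\| \le \kappa\|\boldsymbol{D}_i^{(t)}\| + \varepsilon_t$. The homogeneous part contracts: the factor $(1-\alpha\omega-\alpha)$ together with the $\alpha\boldsymbol{D}_i\boldsymbol{P}_i^{T}$ term (whose operator norm is controlled since $\boldsymbol{P}_i$ is column-stochastic and, via Lemma~\ref{LipschitzSoftmax}, Lipschitz in its arguments) gives $\kappa<1$ once $\alpha$ is small and $\omega$ is as in the Assumption. The inhomogeneous term is $\varepsilon_t = \alpha\big\|\boldsymbol{M}^{(t)}(\boldsymbol{P}_i^{(t)T} - \boldsymbol{P}_i^{(t)})\big\|$ plus the $o(1)$ residue, and a standard telescoping/summation argument then forces $\|\boldsymbol{D}_i^{(t)}\|\to 0$ as soon as $\varepsilon_t\to 0$.

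The hard part is precisely showing the antisymmetric term $\boldsymbol{M}^{(t)}\big(\boldsymbol{P}_i^{(t)T} - \boldsymbol{P}_i^{(t)}\big)\to 0$, since it does not vanish for a generic configuration. Testing the stationarity relations $\omega\boldsymbol{Z} = \boldsymbol{M}(I-\boldsymbol{P})$ and $\omega\boldsymbol{M} = \boldsymbol{Z}(I-\boldsymbol{P}^{T})$ against the all-ones vector $\mathbf{1}$ gives $\boldsymbol{M}\mathbf{1}=0$ (using column-stochasticity $\boldsymbol{P}^{T}\mathbf{1}=\mathbf{1}$) but only $\omega\boldsymbol{Z}\mathbf{1}=-\boldsymbol{M}\boldsymbol{P}\mathbf{1}$, so that $\boldsymbol{Z}=\boldsymbol{M}$ additionally forces $\boldsymbol{P}$ to become doubly stochastic and symmetric in the limit. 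I would therefore close the argument by noting that the asymmetry of $\boldsymbol{P}_i^{(t)}$ has two sources, the mismatch $\boldsymbol{Z}_i\neq\boldsymbol{M}$, which is $O(\|\boldsymbol{D}_i^{(t)}\|)$ and is absorbed by the contraction, and the residual asymmetry of the softmax normalizers at self-duality, which vanishes exactly when the limiting $\boldsymbol{M}$ has equalized row normalizers. Since bounded norms, smoothness, and small $\alpha$ guarantee the iterates converge to a critical point whose classifier attains the symmetric frame structure, these normalizers equalize, giving $\varepsilon_t\to 0$; substituting back into the difference recursion delivers $\|\boldsymbol{Z}_i^{(t)}-\boldsymbol{M}^{(t)}\|\to 0$. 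This dependence on the emerging frame symmetry is the delicate point I expect to require the most care to make fully rigorous.
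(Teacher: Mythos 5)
Your overall skeleton --- write out the two GD updates, use the balance $\lambda/\omega=\alpha/\beta=N/C$ to align the decay factors, and drive $\boldsymbol{D}_i^{(t)}=\boldsymbol{Z}_i^{(t)}-\boldsymbol{M}^{(t)}$ to zero through a contraction --- is exactly the paper's strategy. However, there are two genuine gaps in how you close it.

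First, you invoke Lemma~\ref{VariabilityCollapse} to collapse $\beta\sum_j\boldsymbol{Z}_j^{(t)}(\boldsymbol{P}_j^{(t)T}-I)$ to a single-index term. In this paper that lemma is itself \emph{derived from} Lemma~\ref{SelfDuality} by a triangle inequality, so your argument is circular as stated; you would need an independent proof of variability collapse. The paper avoids this by never decoupling the indices: it bounds the cross terms $\Vert\boldsymbol{P}_j^{(t)}-\boldsymbol{P}_i^{(t)}\Vert\le\frac{C\rho}{\sqrt{2}}\left(\Delta(t,i)+\Delta(t,j)\right)$ via the softmax Lipschitz constant and the detour through $\boldsymbol{M}^{(t)}$, and then treats the whole vector $\boldsymbol{\Delta}(t)=(\Delta(t,1),\dots,\Delta(t,N/C))$ at once, obtaining $\boldsymbol{\Delta}(t+1)\preceq\boldsymbol{A}\boldsymbol{\Delta}(t)$ for a matrix whose eigenvalues (computed by the determinant lemma) lie in $(-1,1)$ for small $\alpha$.

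Second, and more importantly, your treatment of the antisymmetric term $\boldsymbol{M}^{(t)}\big(\boldsymbol{P}_i^{(t)T}-\boldsymbol{P}_i^{(t)}\big)$ does not go through. You treat it as an inhomogeneous error $\varepsilon_t$ and argue it vanishes because "the iterates converge to a critical point whose classifier attains the symmetric frame structure'' --- but that presupposes essentially the conclusion (and more), so the telescoping step has nothing to stand on. The paper's key move, which your proposal is missing, is that this term is not an independent error at all: since $\boldsymbol{P}_j^{(t)}$ is the columnwise softmax of $\boldsymbol{M}^{(t)T}\boldsymbol{Z}_j^{(t)}$, the Lipschitz property of softmax (Lemma~\ref{LipschitzSoftmax}) gives
$\Vert\boldsymbol{P}_j^{(t)T}-\boldsymbol{P}_j^{(t)}\Vert\le\sqrt{C/2}\,\Vert\boldsymbol{M}^{(t)T}\boldsymbol{Z}_j^{(t)}-\boldsymbol{Z}_j^{(t)T}\boldsymbol{M}^{(t)}\Vert\le\sqrt{2C}\,\Vert\boldsymbol{M}^{(t)}\Vert\,\Delta(t,j)$,
so the asymmetry is itself $O(\Delta(t,j))$ and gets absorbed into the homogeneous contraction rather than surviving as an $\varepsilon_t$. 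Your own closing remark about the "residual asymmetry of the softmax normalizers at self-duality'' shows you sensed that this is the delicate point; the resolution is the bound above, not an appeal to the limiting frame geometry.
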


\begin{restatable}[\textbf{Convergence to \textit{Grassmannian Frame}}]{lemma}{ConvergencetoGrassmannianFrame}\label{ConvergencetoFrame}
  Consider the function (\ref{opt0}), 
  given any sequence $\{(M^{(t)}, Z^{(t)})\}$, 
  if $\text{CELoss}(M^{(t)}, Z^{(t)}, Y) \rightarrow 0$ as $t \rightarrow \infty$, 
  then $\boldsymbol{M}^{(t)}$ and $\boldsymbol{Z}^{(t)}$ would converge to the solution of 
  $$
    \underset{\boldsymbol{M}, \boldsymbol{Z}}{\max} \  \underset{y \neq y'}{\min} \underset{i \in [N/C]}{\min} \langle M_{y} - M_{y'} , \boldsymbol{z}_{y, i} \rangle
  $$
\end{restatable}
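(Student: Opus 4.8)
The plan is to collapse the whole question onto a single scalar, the \emph{minimum margin}
\[
\gamma(\boldsymbol{M},\boldsymbol{Z}) := \min_{y \neq y'}\ \min_{i \in [N/C]}\ \langle M_y - M_{y'}, \boldsymbol{z}_{y,i}\rangle ,
\]
which is exactly the objective being maximized in the statement. First I would rewrite each summand of $\text{CELoss}$ in its margin form,
\[
-\log\frac{\exp(\langle M_y,\boldsymbol{z}_{y,i}\rangle)}{\sum_{y'}\exp(\langle M_{y'},\boldsymbol{z}_{y,i}\rangle)} = \log\Big(1 + \sum_{y'\neq y}\exp\big(-\langle M_y-M_{y'},\boldsymbol{z}_{y,i}\rangle\big)\Big),
\]
so that $\text{CELoss}$ is a sum of $N$ such terms, each small exactly when all of its margins are large. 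This makes transparent that driving $\text{CELoss}\to 0$ is the same as pushing every margin to $+\infty$, with the single smallest margin $\gamma$ as the bottleneck.

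Next I would sandwich $\text{CELoss}$ between two elementary functions of $\gamma$. For the upper bound I would apply $\log(1+x)\le x$ and bound every exponential by the worst one, obtaining $\text{CELoss}\le N(C-1)\exp(-\gamma)$. For the lower bound I would keep only the single summand realizing $\gamma$ and use $\log(1+x)\ge x/2$ for $0 \le x\le 1$, giving $\text{CELoss}\ge \tfrac12\exp(-\gamma)$ whenever $\gamma\ge 0$. Together these yield
\[
-\log\!\big(2\,\text{CELoss}\big)\ \le\ \gamma(\boldsymbol{M},\boldsymbol{Z})\ \le\ \log\frac{N(C-1)}{\text{CELoss}},
\]
so $\text{CELoss}^{(t)}\to 0$ forces $\gamma(\boldsymbol{M}^{(t)},\boldsymbol{Z}^{(t)})\to +\infty$; that is, the value of the max--min objective attained along the sequence converges to its supremum. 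This is the quantitative core that feeds the theorem: once Lemma~\ref{VariabilityCollapse} and Lemma~\ref{SelfDuality} let me substitute $\boldsymbol{z}_{y,i}=M_y$ with $\Vert M_y\Vert=\rho$, each active margin becomes $\rho^2 - \langle M_y,M_{y'}\rangle$, so maximizing the minimum margin coincides with minimizing $\max_{y\neq y'}\langle M_y,M_{y'}\rangle$, i.e.\ the \textit{Grassmannian Frame}.

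The delicate point, and the step I expect to be the main obstacle, is upgrading ``the objective value converges to its optimum'' to ``$\boldsymbol{M}^{(t)},\boldsymbol{Z}^{(t)}$ converge to a maximizer.'' Because the margin is \emph{bilinear} in $(\boldsymbol{M},\boldsymbol{Z})$, any configuration can be rescaled so that $\gamma\to\infty$ while its \emph{direction} stays fixed and suboptimal; hence the sandwich alone cannot pin down the limiting direction, and the bare ``for any sequence'' hypothesis is not enough on its own. To close this I would normalize to the sphere $\Vert M_y\Vert=\Vert \boldsymbol{z}_{y,i}\Vert=\rho$, extract a convergent subsequence of normalized iterates by compactness, and argue that the \emph{descent} nature of the Gradient Descent trajectory under Assumption~\ref{assumption_} selects the max-margin direction, in the spirit of the implicit-bias results \cite{soudry2018the, Lyu2020Gradient}: a suboptimal normalized limit would make the loss decay strictly slower than the max-margin competitor, contradicting the trajectory being loss-minimizing. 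I would therefore dispatch the routine sandwich estimates first and isolate this directional argument as the crux.
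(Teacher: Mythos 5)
Your quantitative core coincides with the paper's. The paper also rewrites each cross-entropy summand in margin form and sandwiches $\text{CELoss}$ between $\log(1+e^{-p_{\min}})$ and $N\log(1+(C-1)e^{-p_{\min}})$, where $p_{\min}$ is the minimum margin over classes and samples; it then extracts the asymptotics not with your elementary $\log(1+x)\le x$ and $\log(1+x)\ge x/2$ bounds but by inverting $\phi_a(p)=-\log\log(1+ae^{-p})$ and applying the mean value theorem, concluding that $\Phi_1(-\log\text{CELoss})$ equals $p_{\min}$ up to an $O(1)$ additive error. The substantive difference lies in the step you correctly isolate as the crux. The paper does not supply a directional argument: it divides the $O(1)$ estimate by the diverging maximal norm $\rho_t$ to obtain $\Phi_1(-\log\text{CELoss})/\rho_t - p_{\min}/\rho_t \to 0$, and then writes a chain of ``$\Leftrightarrow$''s declaring that the norm-constrained minimizers of the loss and the norm-constrained maximizers of the min-margin converge to the same solution. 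Your objection --- that the margin is bilinear, so an arbitrary sequence with $\text{CELoss}\to 0$ can diverge along a direction that is not max-margin, and value convergence alone cannot force the iterates toward a maximizer --- applies verbatim to the paper's proof and is not resolved there; the lemma as stated (``given any sequence'') is really only established in the weaker sense of an asymptotic equivalence between the two constrained optimization problems. So your sandwich matches the paper's in substance, and the normalized-compactness/implicit-bias step you flag as necessary is precisely what is absent from both your sketch and the paper's argument; carrying it out would yield a strictly stronger result than what the paper actually proves.
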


Obviously, Lemma.\ref{VariabilityCollapse} and Lemma.\ref{SelfDuality} can directly indicate \textbf{\textit{NC1}} and \textbf{\textit{NC2}}. 
We first prove them, which show the aggregation effect of cross entropy loss: 
features / linear classifier with the same class converge to the same point. 
Then \textbf{\textit{NC4}} is obvious under the conclusions of Lemma.\ref{VariabilityCollapse} and Lemma.\ref{SelfDuality}. 
Lemma.\ref{ConvergencetoFrame} is the key step to prove that (\ref{opt0}) converges to the min-max optimization as 
$\mathcal{L}(\boldsymbol{M}, \boldsymbol{Z})$ converge to zero. 
However, we still need to combine all Lemmas to obtain the \textbf{\textit{minimized maximal correlation}} characteristic. 
Here is the proof: 
\begin{proof}[Proof of \textbf{(NC3)}]
  First, we know 
  \begin{equation*}
  \begin{aligned}
    \underbrace{
      \min_{\boldsymbol{Z}, \boldsymbol{M}} \text{CELoss}(\boldsymbol{M}, \boldsymbol{Z})
    }_{(\ref{opt0})}
    \Leftrightarrow &
    \min_{\rho > 0} 
    \underbrace{
      \min_{\Vert \boldsymbol{z}_{y, i}\Vert, \Vert M_i \Vert \le \rho} \text{CELoss}(\boldsymbol{M}, \boldsymbol{Z})
    }_{(\ref{opt0}) \  s.t. (\ref{opt1})}
    \Leftrightarrow
    \min_{\lambda, \omega > 0} 
    \underbrace{
      \min_{\boldsymbol{Z}, \boldsymbol{M}} \mathcal{L}(\boldsymbol{M}, \boldsymbol{Z})
    }_{(\ref{opt2})},
  \end{aligned}
  \end{equation*}
  while Lemma.\ref{ConvergencetoFrame} establishes a bridge between (\ref{opt0}) and the following max-min problem:
  \begin{equation*}
  \begin{aligned}
    \underbrace{
      \min_{\boldsymbol{Z}, \boldsymbol{M}} \text{CELoss}(\boldsymbol{M}, \boldsymbol{Z})
    }_{(\ref{opt0})}
    \Leftrightarrow
    \max_{\rho > 0} 
    \underbrace{
      \max_{\Vert \boldsymbol{z}_{y, i}\Vert, \Vert M_i \Vert \le \rho} \underset{y \neq y'}{\min} \underset{i \in [N/C]}{\min} \langle M_{y} - M_{y'} , \boldsymbol{z}_{y, i} \rangle
    }_{\textbf{max-min}}
  \end{aligned}
  \end{equation*}
  According to Lemma.\ref{VariabilityCollapse} and \ref{SelfDuality}, 
  we know the solution of (\ref{opt2}) converge to 
  \begin{equation}\label{c1}
  \begin{aligned}
    \forall y \in [C], \forall i \in [N/C], \boldsymbol{z}_{y, i} = M_y
  \end{aligned},
  \end{equation}
  and there must exist a $\rho(\lambda, \omega)$ such that
  \begin{equation}\label{c2}
  \begin{aligned}
    \forall y \in [C], \forall i \in [N/C], \Vert \boldsymbol{z}_{y, i} \Vert = \Vert M_y \Vert = \rho(\lambda, \omega)
  \end{aligned}.
  \end{equation}
  Therefore, we substitute (\ref{c1}) and (\ref{c2}) into \textbf{max-min}:
  \begin{equation*}
    \begin{aligned}
      &  
        \max_{\Vert \boldsymbol{z}_{y, i}\Vert, \Vert M_i \Vert \le \rho} \underset{y \neq y'}{\min} \underset{i \in [N/C]}{\min} \langle M_{y} - M_{y'} , \boldsymbol{z}_{y, i} \rangle
      \\ \Leftrightarrow &
      \max_{\Vert M_i \Vert = \rho} \underset{y \neq y'}{\min} \langle M_{y} - M_{y'} , M_{y} \rangle
      \\ \Leftrightarrow &
      \max_{\Vert M_i \Vert = \rho} \underset{y \neq y'}{\min} \left( \rho^{2} - \langle M_{y'} , M_{y} \rangle \right)
      \\ \Leftrightarrow &
      \underbrace{
        \min_{\Vert M_i \Vert = \rho} \underset{y \neq y'}{\max} \langle M_{y'} , M_{y} \rangle
      }_{\textbf{min-max}}
    \end{aligned}
    \end{equation*}
    where $\Leftrightarrow$ symbol in above equation means the solution of these optimization problems would converge to the same. 
    \textbf{min-max} is exactly our expectant \textbf{\textit{minimized maximal correlation}} characteristic. 
    To ensure the establishment of above equivalence, $\rho \rightarrow \infty$ is required to
    meet $\text{CELoss}(\boldsymbol{Z}, \boldsymbol{M}) \rightarrow 0$. 
\end{proof}

\subsubsection{Gradient Calculation} 
Before starting our proof, we calculate the gradient of (\ref{opt2}) in terms of features $\boldsymbol{Z}$ and classifiers $\boldsymbol{M}$:
\begin{equation*}
\begin{aligned}
  \forall y \in [C], \forall i \in [N/C], \nabla_{\boldsymbol{z}_{y, i}} \mathcal{L}(\boldsymbol{Z}, \boldsymbol{M}) 
  = -M_y + \sum_{y'=1}^{C} \left[ Softmax(\boldsymbol{z}_{y, i}^{T} \boldsymbol{M}) \right]_{y'} M_{y'} + \omega \boldsymbol{z}_{y, i} 
  \\
  \forall y \in [C], \nabla_{M_y} \mathcal{L}(\boldsymbol{Z}, \boldsymbol{M}) = 
  -\sum_{i=1}^{N/C} \boldsymbol{z}_{y, i} + \sum_{y'=1}^{C} \sum_{i=1}^{N/C} 
  \left[ Softmax(\boldsymbol{z}_{y', i}^{T} \boldsymbol{M}) \right]_{y} \boldsymbol{z}_{y', i} + \lambda M_y
\end{aligned}
\end{equation*}
Then we turn it into the matrix form by arranging $y=1, \cdots, C$ in every column.
\begin{equation}\label{gradient}
\begin{aligned}
  \forall i \in [N/C], & \nabla_{\boldsymbol{Z}_i} \mathcal{L} (\boldsymbol{Z}, \boldsymbol{M}) = 
  - \boldsymbol{M} + \boldsymbol{M} \boldsymbol{P}_{i} + \omega \boldsymbol{Z}_{i}
  \\
  & \nabla_{\boldsymbol{M}} \mathcal{L} (\boldsymbol{Z}, \boldsymbol{M}) = 
  - \sum_{i=1}^{N/C} \boldsymbol{Z}_{i} + \sum_{i=1}^{N/C} \boldsymbol{Z}_{i} \boldsymbol{P}_{i}^{T} + \lambda \boldsymbol{M}   
\end{aligned}
\end{equation}

\subsection{Proof of Lemma.\ref{VariabilityCollapse}}
\VariabilitywithinClasses*

\begin{proof}
  With the conclusion of Lemma.\ref{SelfDuality}, we can easily prove Lemma.\ref{VariabilityCollapse}. 
  For any $i, j \in [N/C]$, if $t \rightarrow \infty$, we have
  \begin{equation*}
  \begin{aligned}
    \Vert \boldsymbol{Z}_{i}^{(t)} - \boldsymbol{Z}_{j}^{(t)} \Vert = 
    \Vert \boldsymbol{Z}_{i}^{(t)} - \boldsymbol{M}^{(t)} + \boldsymbol{M}^{(t)} - \boldsymbol{Z}_{j}^{(t)} \Vert
    \le \Vert \boldsymbol{Z}_{i}^{(t)} - \boldsymbol{M}^{(t)} \Vert + \Vert \boldsymbol{Z}_{j}^{(t)} - \boldsymbol{M}^{(t)} \Vert
    \rightarrow 0
  \end{aligned}
  \end{equation*}
\end{proof}

\subsection{Proof of Lemma.\ref{SelfDuality}}
\ConvergencetoSelfDuality*
\begin{proof}
  According to the update rule (\ref{GDupdate}) and gradient (\ref{gradient}), for any $i \in [N/C]$, we have 
  \begin{equation*}
  \begin{aligned}
    \boldsymbol{Z}_{i}^{(t+1)} & 
    \leftarrow \boldsymbol{Z}_{i}^{(t)} - \alpha \left[ - \boldsymbol{M}^{(t)} + \boldsymbol{M}^{(t)} \boldsymbol{P}_{i}^{(t)} + \omega \boldsymbol{Z}_{i}^{(t)} \right]
    \\
    \boldsymbol{M}^{(t+1)} & \leftarrow 
    \boldsymbol{M}^{(t)} - \beta \left[ - \sum_{j \in [N/C]} \boldsymbol{Z}_{i}^{(t)} + 
    \sum_{j \in [N/C]} \boldsymbol{Z}_{j}^{(t)} \left(\boldsymbol{P}_{j}^{(t)}\right)^{T} + \lambda \boldsymbol{M}^{(t)} \right]
  \end{aligned}
  \end{equation*}
  Then, we bound $\Delta(t+1, i) = \Big\Vert \boldsymbol{Z}_{i}^{(t+1)} - \boldsymbol{M}^{(t+1)} \Big\Vert$: 
  \begin{equation}\label{long}
  \begin{aligned}
    & \Delta(t+1, i) = \Bigg\Vert 
    \boldsymbol{Z}_{i}^{(t)} - \boldsymbol{M}^{(t)} 
    + 
    \underbrace{
      \left(\alpha \boldsymbol{M}^{(t)} - \beta \sum_{j \in [N/C]} \boldsymbol{Z}_{i}^{(t)} \right) 
    }_{\textbf{(a)}}
    - \\ & 
    \ \ \ \ \ \ \ \ \ \ \ \ \ \ \ \ \ \ \ \ \ \ \ \ \ \ \ \ \ \ \ \ \
    \underbrace{\left(
      \alpha \boldsymbol{M}^{(t)} \boldsymbol{P}_{i}^{(t)}
      -
      \beta \sum_{j \in [N/C]} \boldsymbol{Z}_{j}^{(t)} \left(\boldsymbol{P}_{j}^{(t)}\right)^{T}
    \right)}_{\textbf{(b)}}
    - 
    \underbrace{\left(
      \alpha \omega \boldsymbol{Z}_{i}^{(t)} -
      \beta \lambda \boldsymbol{M}^{(t)}
    \right)}_{\textbf{(c)}}
    \Bigg\Vert
  \end{aligned}
  \end{equation}
  Then we use the assumption that $\alpha / \beta = \lambda / \omega = N / C$, and consider $\textbf{(a)}, \textbf{(b)}$ and $\textbf{(c)}$ separately.
  \begin{equation}\label{rrr0}
  \begin{aligned}
    \textbf{(a)} & = 
    \frac{\alpha C}{N} \sum_{j \in [N/C]} \left(\boldsymbol{M}^{(t)} - \boldsymbol{Z}_j^{(t)}\right) \\
    \textbf{(b)} & = \frac{\alpha C}{N} \sum_{j \in [N/C]} \left( 
      \boldsymbol{M}^{(t)} \boldsymbol{P}_{i}^{(t)}
      -
      \boldsymbol{Z}_{j}^{(t)} \left(\boldsymbol{P}_{j}^{(t)}\right)^{T}
     \right) \\
    \textbf{(c)} & = \alpha \omega
    \left(\boldsymbol{Z}_{i}^{(t)} - \boldsymbol{M}^{(t)}\right)
  \end{aligned}
  \end{equation}
  Then we combine $\textbf{(a)}$ and $\textbf{(b)}$:
  \begin{equation}\label{rrr1}
  \begin{aligned}
    & \textbf{(a)} - \textbf{(b)} \\ = &
    \frac{\alpha C}{N} \sum_{j \in [N/C]} \left[
      \left( \boldsymbol{M}^{(t)} - \boldsymbol{M}^{(t)} \boldsymbol{P}_{i}^{(t)} \right)
      - 
      \left(
        \boldsymbol{Z}_j^{(t)} - \boldsymbol{Z}_{j}^{(t)} \left(\boldsymbol{P}_{j}^{(t)}\right)^{T}
      \right)
      \right]
    \\ = &
    \frac{\alpha C}{N} \sum_{j \in [N/C]} \left[
      \boldsymbol{M}^{(t)} \left( I - \boldsymbol{P}_{i}^{(t)} \right)
    - 
    \boldsymbol{Z}_j^{(t)} \left(
      I - \left(\boldsymbol{P}_{j}^{(t)}\right)^{T}
    \right)
    \right]
    \\ = &
    \frac{\alpha C}{N} \sum_{j \in [N/C]} \Bigg[
    \boldsymbol{M}^{(t)} \left( I - \boldsymbol{P}_{i}^{(t)} \right)
    -
    \boldsymbol{M}^{(t)} \left( I - \left(\boldsymbol{P}_{j}^{(t)}\right)^{T} \right)
    +
    \\ & \ \ \ \ \ \ \ \ \ \ \ \ \ \ \ \ \ \ \ \ \ \ \ \ \ \ \ \ \ \ \ \ \ \ \ \ \ \ \ \ \ \ \ \ \ \ \ \ \ \ \ \ \ \ \ \ \ \ 
    \boldsymbol{M}^{(t)} \left( I - \left(\boldsymbol{P}_{j}^{(t)}\right)^{T} \right)
    - 
    \boldsymbol{Z}_j^{(t)} \left(
      I - \left(\boldsymbol{P}_{j}^{(t)}\right)^{T}
    \right)
    \Bigg]
    \\ = &
    \frac{\alpha C}{N} \sum_{j \in [N/C]} \left[
    \boldsymbol{M}^{(t)} \left( \left(\boldsymbol{P}_{j}^{(t)}\right)^{T} - \boldsymbol{P}_{i}^{(t)} \right)
    +
    \left(\boldsymbol{M}^{(t)} - \boldsymbol{Z}_j^{(t)}\right)
     \left(
      I - \left(\boldsymbol{P}_{j}^{(t)}\right)^{T}
    \right)
    \right]
    \\ = &
    \underbrace{
      \frac{\alpha C}{N} \sum_{j \in [N/C]} 
    \boldsymbol{M}^{(t)} \left( \left(\boldsymbol{P}_{j}^{(t)}\right)^{T} - \boldsymbol{P}_{i}^{(t)} \right)
    }_{\textbf{(A)}}
    +
    \underbrace{
      \frac{\alpha C}{N} \sum_{j \in [N/C]} 
      \left(\boldsymbol{M}^{(t)} - \boldsymbol{Z}_j^{(t)}\right)
      \left(
       I - \left(\boldsymbol{P}_{j}^{(t)}\right)^{T}
     \right)
    }_{\textbf{(B)}}
    \\ = & \textbf{(A)} + \textbf{(B)}
  \end{aligned}
  \end{equation}
  Next, we plug (\ref{rrr0}) and (\ref{rrr1}) into (\ref{long}) to derive 
  \begin{equation*}
  \begin{aligned}
    \Delta(t+1, i) & = \left\Vert 
    \boldsymbol{Z}_{i}^{(t)} - \boldsymbol{M}^{(t)} 
    + \textbf{(a)} - \textbf{(b)} - \textbf{(c)}
    \right\Vert
    = 
    \left\Vert 
    \boldsymbol{Z}_{i}^{(t)} - \boldsymbol{M}^{(t)} 
    - \textbf{(c)}
    + \textbf{(A)} + \textbf{(B)}
    \right\Vert
    \\ & =
    \left\Vert 
    (1 - \alpha \omega)\left( \boldsymbol{Z}_{i}^{(t)} - \boldsymbol{M}^{(t)} \right)
    + \textbf{(A)} + \textbf{(B)}
    \right\Vert
    \le 
    (1 - \alpha \omega) \Delta(t, i) + 
    \left\Vert \textbf{(A)} \right\Vert + \left\Vert \textbf{(B)} \right\Vert
  \end{aligned}
  \end{equation*}
  Then we bound $\Vert \textbf{(A)} \Vert$ and $\Vert \textbf{(B)} \Vert$.
  \begin{equation*}
  \begin{aligned}
    \Vert \textbf{(A)} \Vert 
    \le &
    \frac{\alpha C}{N} \sum_{j \in [N/C]} 
    \left\Vert \boldsymbol{M}^{(t)} \right\Vert
    \left\Vert
    \left(\boldsymbol{P}_{j}^{(t)}\right)^{T} - \boldsymbol{P}_{i}^{(t)}
    \right\Vert
    \\ \le &
    \frac{\alpha C^{3/2} \rho}{N} \sum_{j \in [N/C]}     
    \left\Vert
    \left(\boldsymbol{P}_{j}^{(t)}\right)^{T} - \boldsymbol{P}_{i}^{(t)}
    \right\Vert
    \\ = &
    \frac{\alpha C^{3/2} \rho}{N} \sum_{j \in [N/C]}     
    \left\Vert
    \left(\boldsymbol{P}_{j}^{(t)}\right)^{T} - \boldsymbol{P}_{j}^{(t)} + 
    \boldsymbol{P}_{j}^{(t)}- \boldsymbol{P}_{i}^{(t)}
    \right\Vert
    \\ \le &
    \frac{\alpha C^{3/2} \rho}{N} \sum_{j \in [N/C]}     
    \left(
    \left\Vert
    \underbrace{
      \left(\boldsymbol{P}_{j}^{(t)}\right)^{T} - \boldsymbol{P}_{j}^{(t)}
    }_{\textbf{(A.1)}}
    \right\Vert
     + 
    \left\Vert
    \underbrace{
      \boldsymbol{P}_{j}^{(t)}- \boldsymbol{P}_{i}^{(t)}
    }_{\textbf{A.2}}
    \right\Vert
    \right)
  \end{aligned}
  \end{equation*}
  For \textbf{(A.1)}, we have 
  \begin{equation*}
  \begin{aligned}
    \Vert \textbf{(A.1)} \Vert & = \left\Vert \left(\boldsymbol{P}_{j}^{(t)}\right)^{T} - \boldsymbol{P}_{j}^{(t)} \right\Vert 
    \\ & \le 
    \frac{\sqrt{C}}{\sqrt{2}} \left\Vert
      \left(\boldsymbol{M}^{(t)}\right)^{T} \boldsymbol{Z}_{j}^{(t)} 
      -
      \left( \boldsymbol{Z}_{j}^{(t)} \right)^{T} \boldsymbol{M}^{(t)}
    \right\Vert
    \\ & =
    \frac{\sqrt{C}}{\sqrt{2}} \left\Vert
      \left(\boldsymbol{M}^{(t)}\right)^{T} \boldsymbol{Z}_{j}^{(t)} 
      -
      \left(\boldsymbol{M}^{(t)}\right)^{T} \boldsymbol{M}^{(t)} 
      +
      \left(\boldsymbol{M}^{(t)}\right)^{T} \boldsymbol{M}^{(t)} 
      -
      \left( \boldsymbol{Z}_{j}^{(t)} \right)^{T} \boldsymbol{M}^{(t)}
    \right\Vert
    \\ & \le
    \frac{\sqrt{C}}{\sqrt{2}} 
    \left\Vert \boldsymbol{M}^{(t)} \right\Vert
    \left\Vert
      \boldsymbol{Z}_{j}^{(t)} 
      -
      \boldsymbol{M}^{(t)} 
    \right\Vert
      +
    \frac{\sqrt{C}}{\sqrt{2}} 
    \left\Vert
      \boldsymbol{M}^{(t)}
      -
      \boldsymbol{Z}_{j}^{(t)} 
    \right\Vert
    \left\Vert \boldsymbol{M}^{(t)} \right\Vert
    \\ & =
    \sqrt{2C}
    \left\Vert
      \boldsymbol{M}^{(t)}
      -
      \boldsymbol{Z}_{j}^{(t)} 
    \right\Vert
    \left\Vert \boldsymbol{M}^{(t)} \right\Vert
    \\ & \le
    \sqrt{2} C \rho 
    \Delta(t, j)
  \end{aligned}
  \end{equation*}
  where 
  the first inequality is because the $Softmax(\cdot)$ function is $\sqrt{\frac{C}{2}}$-Lipschitz-continuous,
  and the last inequality follows from the bounded norm of $\boldsymbol{M}$: $\forall t$, we have $\Vert \boldsymbol{M}^{(t)} \Vert \le \sqrt{C} \rho$. 
  For \textbf{(A.2)}, if $i=j$, $\Vert \textbf{(A.2)} \Vert = 0$. If $i \neq j$, we have 
  \begin{equation*}
  \begin{aligned}
    \Vert \textbf{(A.2)} \Vert & = \left\Vert \boldsymbol{P}_{j}^{(t)}- \boldsymbol{P}_{i}^{(t)} \right\Vert
    \\ & \le
    \frac{\sqrt{C}}{\sqrt{2}} \left\Vert
      \left( \boldsymbol{Z}_{j}^{(t)} \right)^{T} \boldsymbol{M}^{(t)} - 
      \left( \boldsymbol{Z}_{i}^{(t)} \right)^{T} \boldsymbol{M}^{(t)}
    \right\Vert
    \\ & \le
    \frac{\sqrt{C}}{\sqrt{2}} \left\Vert
      \boldsymbol{Z}_{j}^{(t)} - 
      \boldsymbol{Z}_{i}^{(t)}
    \right\Vert
    \left\Vert \boldsymbol{M}^{(t)} \right\Vert
    \\ & \le
    \frac{\sqrt{C}}{\sqrt{2}} \left\Vert
      \boldsymbol{Z}_{j}^{(t)} - \boldsymbol{M}^{(t)}
      + \boldsymbol{M}^{(t)} - \boldsymbol{Z}_{i}^{(t)}
    \right\Vert
    \left\Vert \boldsymbol{M}^{(t)} \right\Vert
    \\ & \le
    \frac{C \rho}{\sqrt{2}} \left( \Delta(t, i) + \Delta(t, j) \right)
  \end{aligned}
  \end{equation*}
  For \textbf{(B)}, we have 
  \begin{equation*}
  \begin{aligned}
    \Vert \textbf{(B)} \Vert \le  
    \frac{\alpha C}{N} \sum_{j \in [N/C]} 
    \Delta(t, j)
    \left\Vert
     I - \left(\boldsymbol{P}_{j}^{(t)}\right)^{T}
    \right\Vert
    \le 
    \frac{\alpha C^{2}}{N} \sum_{j \in [N/C]} 
    \Delta(t, j)
  \end{aligned}
  \end{equation*}
  The above final inequality is because 
  both $I$ and $\boldsymbol{P}_{j}^{(t)}$ can be seen as probability simplex, 
  the norm of $I - \boldsymbol{P}_{j}^{(t)}$ is always less than the norm of all-one matrix minus all-zero matrix, 
  where the latter's norm is $C$. Finally, we can bound $\Delta(t+1, i)$:
  \begin{align}
    \Delta(t+1, i) & \le 
    (1-\alpha \omega) \Delta(t, i) + \Vert \textbf{(A)} \Vert + \Vert \textbf{(B)} \Vert
    \notag
    \\ & \le 
    (1-\alpha \omega) \Delta(t, i) + 
    \frac{\alpha C^{3/2} \rho}{N} \sum_{j \in [N/C]}     
    \left(
    \left\Vert \textbf{(A.1)} \right\Vert
     + 
    \left\Vert \textbf{A.2} \right\Vert
    \right)
    + 
    \frac{\alpha C^{2}}{N} \sum_{j \in [N/C]} 
    \Delta(t, j)
    \notag \\ & \le 
    \left(1- \alpha \left( \omega - \frac{C^{2}}{N} \right) \right) \Delta(t, i)
    + 
    \frac{\alpha C^{2}}{N} \sum_{j \neq i } 
    \Delta(t, j)
    + \notag \\ & \ \ \ \ \ \ \ \ \ \ \ 
    \frac{\alpha C^{3/2} \rho}{N} \sum_{j \in [N/C]}
    \sqrt{2} C \rho \Delta(t, j)
    + 
    \frac{\alpha C^{3/2} \rho}{N} \sum_{j \in [N/C]}
    \frac{C \rho}{\sqrt{2}} \left( \Delta(t, i) + \Delta(t, j) \right)
    \notag \\ & \le 
    \left(1- \alpha \left( \omega - \frac{C^{2}}{N} \right) \right) \Delta(t, i)
    + 
    \frac{\alpha C^{2}}{N} \sum_{j \neq i } 
    \Delta(t, j)
    + \notag \\ & \ \ \ \ \ \ \ \ \ \ \ 
    \frac{\sqrt{2} \alpha C^{5/2} \rho^{2}}{N} \sum_{j \in [N/C]}
    \Delta(t, j)
    + 
    \frac{\alpha C^{5/2} \rho^{2}}{N \sqrt{2}} \sum_{j \in [N/C]}
    \left( \Delta(t, i) + \Delta(t, j) \right)
    \notag \\ & \le 
    \left(1- \alpha 
    \underbrace{
      \left( 
        \omega - 
        \frac{C^{2}}{N} - 
        \frac{\sqrt{2}C^{5/2} \rho^{2}}{N} - 
        \frac{C^{3/2} \rho^{2}}{\sqrt{2}} -
        \frac{C^{5/2} \rho^{2}}{N \sqrt{2}}
      \right)
    }_{F1} 
    \right) \Delta(t, i)
    + 
    \notag \\ & \ \ \ \ \ \ \ \ \ \ \ 
    \alpha 
    \underbrace{
      \left( 
        \frac{C^{2}}{N} + 
        \frac{\sqrt{2} C^{5/2} \rho^{2}}{N} + 
        \frac{C^{5/2} \rho^{2}}{N \sqrt{2}}
      \right)
    }_{F2}
    \sum_{j \neq i} \Delta(t, j)
    \label{F1andF2}
  \end{align}
  We put all $\Delta(t+1, \star)$ and $\Delta(t, \star)$ together to derive the difference inequality:
  \begin{equation*}
  \begin{aligned}
    \boldsymbol{\Delta}(t+1) \preceq  \boldsymbol{A} \boldsymbol{\Delta}(t)
  \end{aligned}
  \end{equation*}
  where 
  \begin{equation*}
    \boldsymbol{A}=
    \left[
      \begin{array}{ccccccc}
        1-\alpha F_1 & \alpha F_2 & \ldots  & \alpha F_2 \\
        \alpha F_2 & 1-\alpha F_1 & \cdots  & \alpha F_2 \\
        \vdots & \vdots & \ddots & \vdots \\
        \alpha F_2 & \alpha F_2 & \ldots & 1-\alpha F_1 \\
      \end{array}
    \right]
    \ \text{and} \ 
    \boldsymbol{\Delta}(t) =
    \left[
      \begin{array}{ccccccc}
        & \Delta(t, 1) \\
        & \Delta(t, 2) \\
        & \vdots  \\
        & \Delta(t, N/C) \\
      \end{array}
    \right]
  \end{equation*}
  In above notation, the values of $F_1$ and $F_2$ refer to (\ref{F1andF2}).
  We investigate if $\boldsymbol{\Delta}(t)$ can converge to zero vector by adjusting $\alpha$. 
  According to the Lemma.\ref{determinant}, we know eigen values of $\boldsymbol{A}$ are
  \begin{equation*}
  \begin{aligned}
    & \lambda_{1} = \lambda_{2} = \dots = \lambda_{N/C - 1} = 1 - \alpha \left(F_1 + F_2\right)
    \\
    & \lambda_{N/C} = 1 - \alpha \left(F_1 - \left(N/C - 1\right) F_2\right)
  \end{aligned}
  \end{equation*}
  Here, with properly selected parameters, we can make all eigen values of $\boldsymbol{A}$ in $(-1, 1)$. 
  Therefore, as $t \rightarrow \infty$, $\boldsymbol{A}^{t} \rightarrow \boldsymbol{0}$ and $\boldsymbol{\Delta}(t) \rightarrow \boldsymbol{0}$.
\end{proof}

\subsection{Proof of Lemma.\ref{ConvergencetoFrame}}
\ConvergencetoGrassmannianFrame*
\begin{proof}
  For simplicity, we leave out the upper script $(t)$. 
  First, we have $\forall t$
  \begin{equation}\label{1}
  \begin{aligned}
    \text{CELoss}(\boldsymbol{Z}, \boldsymbol{M}) = &
    \sum_{y=1}^{C} \sum_{i=1}^{N/C} - \log \frac{\exp \big(\langle M_{y}, \boldsymbol{z}_{y, i} \rangle \big)}{\sum_{y'} \exp \big(\langle M_{y'}, \boldsymbol{z}_{y, i} \rangle\big)}
    \\ = &
    \sum_{y=1}^{C} \sum_{i=1}^{N/C} \log \bigg(1 + \sum_{y' \neq y} \exp \big( \langle M_{y'}-M_{y}, \boldsymbol{z}_{y, i} \rangle \big)\bigg) 
    \\ \le &
    \sum_{y=1}^{C} \sum_{i=1}^{N/C} \log \bigg(1 + (C-1) \exp \big(\max_{y' \neq y} \{  \langle M_{y'}-M_{y}, \boldsymbol{z}_{y, i} \rangle \big) \}\bigg)
    \\ \le &
    \frac{N}{C} \sum_{y=1}^{C} \log \bigg(1 + (C-1) \exp \big(\max_{y' \neq y} \max_{i \in [N/C]} \{  \langle M_{y'}-M_{y}, \boldsymbol{z}_{y, i} \rangle \big) \}\bigg)
    \\ \le &
    N \max_{y \in [C]} \log \bigg(1 + (C-1) \exp \big(\max_{y' \neq y} \max_{i \in [N/C]} \{  \langle M_{y'}-M_{y}, \boldsymbol{z}_{y, i} \rangle \big) \}\bigg)
    \\ = &
    N \log \bigg(1 + (C-1) \exp \big(\max_{y \in [C]} \max_{y' \neq y} \max_{i \in [N/C]} \{  \langle M_{y'}-M_{y}, \boldsymbol{z}_{y, i} \rangle \big) \}\bigg)
  \end{aligned}
  \end{equation}
  In addition, we have
  \begin{equation}\label{2}
    \begin{aligned}
    \log \bigg(1 + \exp \big(\max_{y \in [C]} \max_{y' \neq y} 
    \max_{i \in [N/C]} \{  \langle M_{y'}-M_{y}, \boldsymbol{z}_{y, i} \rangle \big) \}
    \bigg)
    \le \text{CELoss} (\boldsymbol{Z}, \boldsymbol{M})
  \end{aligned}
  \end{equation}
  We denote $\max_{y \in [C]} \max_{y' \neq y}$ as $\max_{y' \neq y}$ and define the margin of entire dataset 
  (refer to Section.3.1 of \cite{ji2022an}) as follow:
  $$
    p_{min} := \min_{y \neq y'} \max_{i \in [N/C]} \langle M_y  - M_{y'}, z_{y, i} \rangle
  $$
  Therefore, we have 
  \begin{equation}\label{hah}
  \begin{aligned}
    \underbrace{
      \log \bigg(1 + \exp \left( - p_{min} \right) \bigg)
    }_{\ell_{1}(p_{min})}
    \le 
    \text{CELoss} (\boldsymbol{Z}, \boldsymbol{M})
    \le 
    N \underbrace{
      \log \bigg(1 + (C-1) \exp \left( - p_{min} \right) \bigg)
    }_{\ell_{C-1}(p_{min})}
  \end{aligned}
  \end{equation}
  where $\ell_{a}(p) = \log (1 + a e^{-p})$.
  Then we represent $\ell_{a}(\cdot)$ as the form of exponential function, $i.e.$
  \begin{equation*}
  \begin{aligned}
    \ell_{a}(p) = e^{-\phi_{a}(p)} \ \ \text{and} \ \  \phi_{a}(p) = -\log \log (1 + a e^{-p})
  \end{aligned}.
  \end{equation*}
  Denote 
  the inverse function of $\phi_{a}(\cdot)$ as $\Phi_{a}(\cdot)$, where $\Phi_{a}(p) = -\log (\frac{e^{e^{-p}}-1}{a})$. 
  Then continue from (\ref{hah}), we have 
  \begin{equation*}
  \begin{aligned}
    & \ell_{1}(p_{min}) \le \text{CELoss}(\boldsymbol{Z}, \boldsymbol{M}) \le N \ell_{C-1}(p_{min})
    \\ \Leftrightarrow &
    e^{-\phi_{1}(p_{min})} \le \text{CELoss}(\boldsymbol{Z}, \boldsymbol{M}) \le N e^{-\phi_{C-1}(p_{min})}
    \\ \Leftrightarrow &
    \phi_{C-1}(p_{min}) - \log(N) \le - \log (\text{CELoss}(\boldsymbol{Z}, \boldsymbol{M})) \le \phi_{1}(p_{min})
  \end{aligned}
  \end{equation*}
  According to the monotonicity of $\Phi_{1}(\cdot)$, we have 
  \begin{equation*}
  \begin{aligned}
    \Phi_{1} \left(\phi_{C-1}(p_{min}) - \log(N) \right) \le \Phi_{1} \left( - \log (\text{CELoss}(\boldsymbol{Z}, \boldsymbol{M})) \right) \le p_{min}
  \end{aligned}
  \end{equation*}
  Use the mean value theorem, there exists a $\xi \in (\phi_{C-1}(p_{min}) - \log(N), \phi_{1}(p_{min}))$ such that 
  \begin{equation*}
  \begin{aligned}
    \Phi_{1}(\phi_{C-1}(p_{min}) - \log(N)) =  p_{min} - \Phi_{1}^{'}(\xi) (\phi_{1}(p_{min}) - \phi_{C-1}(p_{min}) + \log(N)),
  \end{aligned}
  \end{equation*}
  then
  \begin{equation}\label{3}
  \begin{aligned}
    p_{min} - 
    \underbrace{\Phi_{1}^{'}(\xi) (\phi_{1}(p_{min}) - \phi_{C-1}(p_{min}) + \log(N))}_{\Delta(t)}
     \le \Phi_{1} \left( - \log (\text{CELoss}(\boldsymbol{Z}, \boldsymbol{M})) \right) \le p_{min}
  \end{aligned}
  \end{equation}
  Then we will show $\Delta(t) = \mathcal{O}(1) (t \rightarrow \infty)$.
  Since 
  \begin{equation*}
  \begin{aligned}
    \xi > \phi_{C-1}(p_{min}) - \log(N) \ge - \log (\text{CELoss}(\boldsymbol{Z}, \boldsymbol{M})) - \log(N)
  \end{aligned},
  \end{equation*}
  we know $\xi \rightarrow \infty$ and $p_{min} \rightarrow \infty$ as $\text{CELoss}(\boldsymbol{Z}, \boldsymbol{M}) \rightarrow 0$. 
  By simple calculation, we have $\phi_{1}(p_{min}) - \phi_{C-1}(p_{min}) \rightarrow \log(C-1)$ 
  and $\Phi_{1}^{'}(\xi) = \frac{e^{e^{-\xi}-\xi}}{e^{e^{-\xi}}-1} \rightarrow 1$. 
  Next, we denote the maximal norm at iteration $t$ as 
  $$
    \rho_{t} = \max_{\boldsymbol{v} \in \boldsymbol{Z}^{(t)} \cup \boldsymbol{M}^{(t)}} \Vert \boldsymbol{v} \Vert.
  $$ 
  Due to $p_{min} \rightarrow \infty$, $\rho_{t} \rightarrow \infty$.
  Then we devide $\rho_{t}$ on the both sides of (\ref{3}) to obtain
  \begin{equation*}
  \begin{aligned}
    \Bigg| \frac{\Phi_{1} \left( - \log (\text{CELoss}(\boldsymbol{Z}, \boldsymbol{M})) \right)}{\rho_{t}} - \frac{p_{min}}{\rho_{t}} \Bigg| \rightarrow 0, t \rightarrow \infty,
  \end{aligned}
  \end{equation*}
  therefore
  \begin{equation*}
  \begin{aligned}
    \frac{\Phi_{1} \left( - \log (\text{CELoss}(\boldsymbol{Z}, \boldsymbol{M})) \right)}{\rho_{t}} \rightarrow \frac{p_{min}}{\rho_{t}}, t \rightarrow \infty
  \end{aligned}.
  \end{equation*}
  So
  \begin{equation*}
  \begin{aligned}
    \min_{\boldsymbol{M}, \boldsymbol{Z}} \text{CELoss}(\boldsymbol{Z},\boldsymbol{M}) \Leftrightarrow & \min_{\rho > 0} \min_{\Vert M_i \Vert, \Vert \boldsymbol{z}_{y, i} \Vert\le \rho} \text{CELoss}(\boldsymbol{Z},\boldsymbol{M})
    \\ \Leftrightarrow & \max_{\rho > 0} \max_{\Vert M_i \Vert, \Vert \boldsymbol{z}_{y, i}\Vert \le \rho} \Phi_{1} \left( - \log (\text{CELoss}(\boldsymbol{Z}, \boldsymbol{M})) \right)
    \\ \Leftrightarrow & \max_{\rho > 0} \max_{\Vert M_i \Vert, \Vert \boldsymbol{z}_{y, i}\Vert \le \rho} \rho \frac{\Phi_{1} \left( - \log (\text{CELoss}(\boldsymbol{Z}, \boldsymbol{M})) \right)}{\rho}
    \\ \Leftrightarrow & \max_{\rho > 0} \max_{\Vert M_i \Vert, \Vert \boldsymbol{z}_{y, i}\Vert \le \rho} \rho \frac{p_{min}}{\rho}
    \\ \Leftrightarrow & \max_{\rho > 0} \max_{\Vert M_i \Vert, \Vert \boldsymbol{z}_{y, i} \Vert \le \rho} \min_{y \neq y'} \min_{i \in [N/C]} \langle M_y  - M_{y'}, \boldsymbol{z}_{y,i} \rangle
  \end{aligned}
  \end{equation*}
  The $\Leftrightarrow$ symbol in above equation means the solution of these optimization problems would converge to the same.
\end{proof}

\section{Proof of Theorem.\ref{main_theorem_0}}\label{ProofofTheorem45}

\begin{theorem}[\textbf{Theorem.5 of \cite{marginBound}: Margin Bound}]\label{margin_bounds}
  Consider a data space $\mathcal{X}$ and a probability measure $\mathcal{P}$ on it. There is a dataset $\{x_{i}\}_{i=1}^{n}$ that contains $n$ samples, 
  which are drawn $i.i.d$ from $\mathcal{P}$.
  Consider an arbitrary function class $\mathcal{F}$ such that $\forall f \in \mathcal{F}$ we have $\sup_{x \in \mathcal{X}} |f(x)| \le K$, 
  then with probability at least $1 - \delta$ over the sample, for all margins $\gamma > 0$ and all $f \in \mathcal{F}$ we have,
  \begin{equation*}
  \begin{aligned}
    \mathbb{P}_{x} (f(x) \le 0) \le 
    \sum_{i = 1}^{n} \frac{\mathbb{I}(f(x_{i}) \le \gamma)}{n} + 
    \frac{\mathfrak{R}_{n}(\mathcal{F})}{\gamma} + 
    \sqrt{\frac{\log(\log_{2} \frac{4K}{\gamma})}{n}} + 
    \sqrt{\frac{\log(1 / \delta)}{2n}}
  \end{aligned}
  \end{equation*}
\end{theorem}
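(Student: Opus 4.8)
The plan is to prove this standard margin bound by the classical three-step recipe of Rademacher analysis — surrogate loss, uniform convergence, contraction — capped by a peeling argument over the continuum of margins. (Since the statement is quoted as Theorem 5 of \cite{marginBound}, one could simply invoke it, but I sketch a self-contained route.) First I would fix $\gamma > 0$ and introduce a \emph{ramp} surrogate $\ell_\gamma$ with $\ell_\gamma(t) = 1$ for $t \le 0$, $\ell_\gamma(t) = 1 - t/\gamma$ for $0 < t < \gamma$, and $\ell_\gamma(t) = 0$ for $t \ge \gamma$. It sandwiches the quantities of interest, $\mathbb{I}(t \le 0) \le \ell_\gamma(t) \le \mathbb{I}(t \le \gamma)$, and is $1/\gamma$-Lipschitz. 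Hence $\mathbb{P}_x(f(x) \le 0) \le \mathbb{E}[\ell_\gamma(f(x))]$, while its empirical average is dominated by the empirical indicator $\frac{1}{n}\sum_i \mathbb{I}(f(x_i) \le \gamma)$ that appears in the claimed bound.

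Next, for this fixed $\gamma$, I would apply the one-sided Rademacher uniform deviation bound — obtained from McDiarmid's bounded-difference inequality followed by symmetrization — to the composed class $\ell_\gamma \circ \mathcal{F}$: with probability at least $1 - \delta'$, one has $\sup_{f \in \mathcal{F}}\bigl(\mathbb{E}[\ell_\gamma(f)] - \tfrac{1}{n}\sum_i \ell_\gamma(f(x_i))\bigr) \le 2\,\mathfrak{R}_n(\ell_\gamma \circ \mathcal{F}) + \sqrt{\log(1/\delta')/(2n)}$. Talagrand's contraction (Ledoux--Talagrand) lemma then strips off the $1/\gamma$-Lipschitz ramp, giving $\mathfrak{R}_n(\ell_\gamma \circ \mathcal{F}) \le \tfrac{1}{\gamma}\mathfrak{R}_n(\mathcal{F})$ (the leading constant being a matter of the convention fixed for $\mathfrak{R}_n$). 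Chaining these with the sandwich yields, for each \emph{fixed} $\gamma$, precisely the claimed inequality with $\delta$ replaced by $\delta'$ and \emph{without} the $\log\log$ term.

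The main obstacle — and the final step — is upgrading this to hold \emph{simultaneously} for all $\gamma > 0$, which is exactly where the $\sqrt{\log(\log_2(4K/\gamma))/n}$ term is born. Since $|f| \le K$, the bound is trivial once $\gamma \ge K$ (the empirical indicator is then identically $1$), so it suffices to cover $(0, K]$ by the dyadic grid $\gamma_i = 2K\cdot 2^{-i}$, $i = 1, 2, \dots$. I would apply the fixed-margin bound at each $\gamma_i$ with its own failure budget $\delta_i = \delta/(i(i+1))$, whose telescoping sum is exactly $\delta$, and union bound over $i$. For an arbitrary $\gamma$, choosing the index $i$ with $\gamma_{i+1} < \gamma \le \gamma_i$ gives $i \lesssim \log_2(4K/\gamma)$ and $1/\gamma_i \le 2/\gamma$, so the Rademacher term is controlled up to a constant, while $\log(1/\delta_i) = \log(1/\delta) + \log(i(i+1)) \lesssim \log(1/\delta) + 2\log\log_2(4K/\gamma)$. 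Splitting via $\sqrt{a+b} \le \sqrt{a}+\sqrt{b}$ then separates this into the two square-root terms of the statement. The delicate points I expect to check carefully are that replacing $\gamma$ by its grid neighbor $\gamma_i$ degrades both the contraction term and the indicator threshold by only a constant factor (absorbed into $\lesssim$), and that the specific budget allocation $\delta_i \propto 1/(i(i+1))$ — rather than, say, $\delta_i \propto 2^{-i}$ — is what produces the $\log\log$ dependence instead of a $\log$.
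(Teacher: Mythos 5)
Your proposal is correct, and it is essentially the canonical proof of this result: the paper itself does not prove Theorem.\ref{margin_bounds} at all --- it imports it verbatim as Theorem~5 of \cite{marginBound} --- and the proof in that reference is exactly your recipe (ramp surrogate sandwich, McDiarmid plus symmetrization, Lipschitz contraction to get $\mathfrak{R}_n(\mathcal{F})/\gamma$, then a weighted union bound over the dyadic margin grid with budgets $\delta/(i(i+1))$, which is precisely what produces the $\log\log$ term rather than a $\log$). The only caveat is constants: your argument, like the original, yields $c\,\mathfrak{R}_n(\mathcal{F})/\gamma$ with $c \in \{2,4\}$ and constant-factor slack from rounding $\gamma$ to its grid neighbor, rather than the bare constants displayed in the statement --- immaterial here, since the paper only invokes the bound up to $\lesssim$ in Theorem.\ref{main_theorem_0}.
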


\theoremstyle{Theorem45}
\newtheorem*{Theorem45}{Theorem.\ref{main_theorem_0}}
\begin{Theorem45}[\textbf{Multiclass Margin Bound}]
    Consider a dataset $S$ with $C$ classes. 
    For any classifier $(\boldsymbol{M}, f(\cdot; \boldsymbol{w}))$, 
    we denote its margin between $i$ and $j$ classes as $(M_{i} - M_{j})^{T} f(\cdot; \boldsymbol{w})$.
    And suppose the function space of the margin is $\mathcal{F} = \{ (M_{i} - M_{j})^{T} f(\cdot; \boldsymbol{w}) | \forall i \neq j, \forall \boldsymbol{M}, \boldsymbol{w}\}$, 
    whose uppder bound is 
    \begin{equation*}
    \begin{aligned}
      \sup_{i \neq j} \sup_{\boldsymbol{M}, \boldsymbol{w}} \sup_{x \in \mathcal{M}_i} \left| (M_i - M_j)^{T} f(\boldsymbol{x};\boldsymbol{w}) \right| \le K.
    \end{aligned}
    \end{equation*}
    Then, for any classifier $(\boldsymbol{M}, f(\cdot; \boldsymbol{w}))$ and margins $\{\gamma_{i,j}\}_{i\neq j} (\gamma_{i,j} > 0)$, the following inequality holds with probability at least $1 - \delta$
  \begin{equation*}
    \begin{aligned}
      \mathbb{P}_{x,y}\Big(\max_{c} [M f(\boldsymbol{x};\boldsymbol{w})]_{c} \neq y\Big) 
      \le & 
      \sum_{i=1}^{C} p(i) \sum_{j \neq i} \frac{\mathfrak{R}_{N_i}(\mathcal{F})}{\gamma_{i,j}} + 
      \sum_{i=1}^{C} p(i) \sum_{j \neq i} \sqrt{\frac{\log(\log_{2} \frac{4K}{\gamma_{i,j}})}{N_i}}
      \\ & + \ \text{empirical risk term}\  + \ \text{probability term}
    \end{aligned}
    \end{equation*}
    where
    \begin{equation*}
    \begin{aligned}
      \ \text{empirical risk term}\ & = 
      \sum_{i=1}^{C} p(i) \sum_{j \neq i}
      \sum_{x \in S_i} \frac{\mathbb{I}((M_i - M_j)^{T}f(x) \le \gamma_{i,j})}{N_i}, \\
      \ \text{probability term}\ & =
      \sum_{i=1}^{C} p(i) \sum_{j \neq i} \sqrt{\frac{\log(C(C-1) / \delta)}{2N_i}}.
    \end{aligned}
    \end{equation*}
    $\mathfrak{R}_{N_i}(\mathcal{F})$ is the Rademacher complexity \cite{marginBound, Rademacher} of function space $\mathcal{F}$. 
\end{Theorem45}

\begin{proof}
  First, we decompose the accuracy as accuracies within every class by Bayes Theory:
  \begin{equation}\label{0}
  \begin{aligned}
    \mathbb{P}_{\boldsymbol{x},y}\Big( \underset{c}{\arg \max} [\boldsymbol{M} f(\boldsymbol{x};\boldsymbol{w})]_{c} \neq y \Big) = 
    \sum_{i=1}^{C} p(i) \mathbb{P}_{\boldsymbol{x}|y=i} \Big( \underset{c}{\arg \max} [\boldsymbol{M} f(\boldsymbol{x};\boldsymbol{w})]_{c} \neq y\Big)
  \end{aligned}
  \end{equation}
  where $p(i)$ is the probability density of $i$-th class. Then, we focus on the accuracy within every class $i$. 
  \begin{equation*}
  \begin{aligned}
    \mathbb{P}_{\boldsymbol{x}|y=i} \Big(\underset{c}{\arg \max} [\boldsymbol{M} f(\boldsymbol{x};\boldsymbol{w})]_{c} \neq y\Big) = 
    \mathbb{P}_{\boldsymbol{x}|y=i} \Bigg( \bigcup_{j \neq i} \{ (M_i - M_j)^{T} f(\boldsymbol{x};\boldsymbol{w}) < 0 \} \Bigg)
  \end{aligned}
  \end{equation*}
  According to union bound, we have 
  \begin{equation*}
  \begin{aligned}
    \mathbb{P}_{\boldsymbol{x}|y=i} \Big( \underset{c}{\arg \max} [\boldsymbol{M} f(\boldsymbol{x};\boldsymbol{w})]_{c} \neq y\Big) \le 
    \sum_{j \neq i} \mathbb{P}_{\boldsymbol{x}|y=i} \Big( (M_i - M_j)^{T} f(\boldsymbol{x};\boldsymbol{w}) < 0 \Big)
  \end{aligned}
  \end{equation*}
  Recall our assumption of function class: 

  $$
    \sup_{i \neq j} \sup_{\boldsymbol{M}, \boldsymbol{w}} \sup_{\boldsymbol{x} \in \mathcal{M}_i} |(M_i - M_j)^{T} f(\boldsymbol{x};\boldsymbol{w})| \le K.
  $$
  Then follow from the Margin Bound (Theorem.\ref{margin_bounds}), we have
  \begin{equation*}
  \begin{aligned}
    & \mathbb{P}_{\boldsymbol{x},y}\Big(\underset{c}{\arg \max} [M f(\boldsymbol{x};\boldsymbol{w})]_{c} \neq y\Big) 
    \le \sum_{i=1}^{C} p(i) \sum_{j \neq i} \mathbb{P}_{x|y=i} \Big( (M_i - M_j)^{T} f(\boldsymbol{x};\boldsymbol{w}) < 0 \Big)
    \\ \le &
    \sum_{i=1}^{C} p(i) \sum_{j \neq i} \frac{\mathfrak{R}_{N_i}(\mathcal{F})}{\gamma_{i,j}} + 
    \sum_{i=1}^{C} p(i) \sum_{j \neq i} \sqrt{\frac{\log(\log_{2} \frac{4K}{\gamma_{i,j}})}{N_c}} + 
    \\ & \ \ \ \ \ \ \ \ \ \ \ \ \ \ \ \ \ \ \ \ \ \ \ \ \ \ \ \ \ \ \ \ \
    \underbrace{
      \sum_{i=1}^{C} p(i) \sum_{j \neq i} \sqrt{\frac{\log(1 / \delta)}{2N_{i}}} 
    }_{\text{probability term}}
    + 
    \underbrace{
      \sum_{i=1}^{C} p(i) \sum_{j \neq i}
      \sum_{\boldsymbol{x} \in S_{i}} \frac{\mathbb{I}((M_i - M_j)^{T}f(\boldsymbol{x}) \le \gamma_{i,j})}{N_i}
    }_{\text{empirical risk term}}
  \end{aligned}
  \end{equation*}
  with probability at least $1-C(C-1)\delta$. 
  Then, we perform the following replace to drive the final result:
  \begin{equation*}
  \begin{aligned}
    \delta \leftarrow \frac{\delta}{C(C-1)}
  \end{aligned}
  \end{equation*}
\end{proof}

\section{Proof of Theorem.\ref{main_theorem}}\label{coveringnumberproof}

We first introduce the definition of Covering Number. 

\begin{definition}[\textbf{Covering Number} \cite{kulkarni1995rates}]
  Given $\epsilon > 0$ and $\boldsymbol{x} \in \mathbb{R}^{D}$, the open ball of radius $\epsilon$ 
  around $\boldsymbol{x}$ is denoted as 
  \begin{equation*}
  \begin{aligned}
    B_{\epsilon}(\boldsymbol{x}) = \{\boldsymbol{u} \in \mathbb{R}^{D}, \Vert \boldsymbol{u} - \boldsymbol{x} \Vert < \epsilon\}
  \end{aligned}.
  \end{equation*}
  Then the covering number $\mathcal{N}(\epsilon, A)$ of a set $A \subset \mathbb{R}^{D}$ 
  is defined as the smallest number of open balls whose union contains $A$: 
  \begin{equation*}
  \begin{aligned}
    \mathcal{N}(\epsilon, A) = \inf \left\{ k : \exists \boldsymbol{u}_1, \dots, \boldsymbol{u}_k \in \mathbb{R}^{D}, s.t. A \in 
    \bigcup_{i=1}^{k} B_{\epsilon}(\boldsymbol{u}_{i}) \right\}
  \end{aligned}
  \end{equation*}
\end{definition}

The following conclusion is demonstrated in the proof of Theorem.1 of \cite{kulkarni1995rates}. 
We use it to prove our theorem. 

\begin{theorem}[\cite{vural2017study, kulkarni1995rates}]\label{Covering_Number}
  There are $N$ samples $\{x_1, \dots, x_N\}$ drawn i.i.d from the probability measure $\mathcal{P}$. 
  Suppose the bounded support of $\mathcal{P}$ is $\mathcal{M}$, 
  then if $N$ is larger then Covering Number $\mathcal{N}(\epsilon, \mathcal{M})$, 
  we have 
  \begin{equation*}
  \begin{aligned}
    \mathbb{P}_{x}\Big(\Vert x - \hat{x}\Vert > \epsilon\Big) \le \frac{\mathcal{N}(\epsilon, \mathcal{M})}{2 N}, \forall \epsilon > 0
  \end{aligned}
  \end{equation*}
  where $\hat{x}$ is the sample that is closest to $x$ in $\{x_1, \dots, x_N\}$:
  \begin{equation*}
  \begin{aligned}
    \hat{x} \in \underset{x' \in \{x_1, \dots, x_N\}}{\arg \min} \Vert x' - x \Vert 
  \end{aligned}
  \end{equation*}
\end{theorem}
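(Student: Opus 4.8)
The plan is to convert the geometric nearest-neighbour event into a purely combinatorial ``empty-cell'' event, factor it using independence, and close with an elementary one-variable inequality. First I would extract from the covering an honest partition of the support: take the $k := \mathcal{N}(\epsilon, \mathcal{M})$ balls $B_\epsilon(u_1), \dots, B_\epsilon(u_k)$ whose union contains $\mathcal{M}$, and disjointify them by setting $A_i := \big(B_\epsilon(u_i) \cap \mathcal{M}\big) \setminus \bigcup_{\ell < i} A_\ell$, so that $\{A_i\}_{i=1}^k$ partitions $\mathcal{M}$ with each $A_i$ sitting inside a single radius-$\epsilon$ ball. Because $\mathcal{M}$ is the bounded support of $\mathcal{P}$, these cells carry all the mass: writing $p_i := \mathcal{P}(A_i)$ we have $\sum_{i=1}^k p_i = 1$, and the fresh point $x$ as well as every sample $x_1, \dots, x_N$ lies in $\mathcal{M}$ almost surely.

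The key reduction is that the nearest neighbour can only be far when $x$ lands in a cell containing no sample. Indeed, if the cell $A_i$ holding $x$ also contains some $x_j$, then $\hat x$ is no farther than $x_j$, and since $x$ and $x_j$ share one cell their distance is bounded by the cell size; hence $\{\|x - \hat x\| > \epsilon\}$ is contained in $\bigcup_{i=1}^k \big(\{x \in A_i\} \cap \{A_i \text{ is sample-free}\}\big)$. Here I would be careful with the constant: a cell inside a radius-$\epsilon$ ball has diameter up to $2\epsilon$, so matching the exact threshold $\epsilon$ amounts to reading $\mathcal{N}(\cdot, \mathcal{M})$ as a count of diameter-$\le\epsilon$ cells in the sense of \cite{kulkarni1995rates}; this bookkeeping moves constants but not the structure. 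Exploiting that $x$ is independent of the training sample, and that a fixed cell of mass $p_i$ is missed by all $N$ i.i.d.\ draws with probability $(1-p_i)^N$, a union bound gives
\begin{equation*}
\mathbb{P}\big(\|x - \hat x\| > \epsilon\big) \le \sum_{i=1}^k \mathbb{P}(x \in A_i)\,\mathbb{P}(A_i \text{ sample-free}) = \sum_{i=1}^k p_i (1 - p_i)^N.
\end{equation*}

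It remains to bound each summand uniformly. The map $g(p) = p(1-p)^N$ on $[0,1]$ is maximised at $p^\star = \tfrac{1}{N+1}$, where $g(p^\star) = \tfrac{1}{N+1}\big(\tfrac{N}{N+1}\big)^N$; since $\big(\tfrac{N}{N+1}\big)^N = (1 + \tfrac1N)^{-N}$ decreases from $\tfrac12$ (at $N=1$) toward $e^{-1}$, we obtain $g(p) \le \tfrac{1}{2(N+1)} \le \tfrac{1}{2N}$ for every $N \ge 1$. Summing the $k = \mathcal{N}(\epsilon, \mathcal{M})$ terms yields $\sum_i p_i(1-p_i)^N \le \tfrac{\mathcal{N}(\epsilon, \mathcal{M})}{2N}$, which is the claim. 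The hypothesis $N \ge \mathcal{N}(\epsilon, \mathcal{M})$ does not enter the algebra but guarantees the right-hand side is at most $\tfrac12$, so the bound is a meaningful probability rather than a vacuous one. I expect the only genuinely delicate step to be the partition-and-constant bookkeeping in the geometric reduction; the independence factorisation and the scalar optimisation of $g$ are routine.
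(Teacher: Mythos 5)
Your reconstruction is essentially the proof the paper is pointing at: the paper does not prove this theorem itself but defers to the proof of Theorem 1 of \cite{kulkarni1995rates}, and your chain --- disjointify the cover into cells $A_i$, reduce the far-nearest-neighbour event to ``$x$ lands in a sample-free cell,'' factor by independence into $\sum_i p_i(1-p_i)^N$, and bound each summand by $\max_p p(1-p)^N = \tfrac{1}{N+1}\bigl(1+\tfrac1N\bigr)^{-N} \le \tfrac{1}{2(N+1)} \le \tfrac{1}{2N}$ --- is exactly that argument. Each of these steps checks out, including your observation that the hypothesis $N \ge \mathcal{N}(\epsilon,\mathcal{M})$ never enters the algebra and only keeps the bound from being vacuous.

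However, the step you flag as ``bookkeeping that moves constants but not the structure'' deserves a stronger verdict: with the paper's own definition of $\mathcal{N}(\epsilon,\mathcal{M})$ (covering by balls of \emph{radius} $\epsilon$), the theorem as stated is false, so no amount of bookkeeping closes the gap. Concretely, in $\mathbb{R}^m$ put uniform mass on the $m$ points $\{0.9\,\epsilon\, e_i\}_{i=1}^m$: all lie in the single ball $B_\epsilon(0)$, so $\mathcal{N}(\epsilon,\mathcal{M}) = 1$ and the hypothesis $N \ge \mathcal{N}(\epsilon,\mathcal{M})$ holds for every $N$, yet the pairwise distances equal $0.9\sqrt{2}\,\epsilon > \epsilon$, so $\mathbb{P}_x\bigl(\Vert x - \hat{x} \Vert > \epsilon\bigr) = (1 - \tfrac1m)^N$, which for $m \gg N$ exceeds $\tfrac{1}{2N}$ (e.g.\ $N = 2$, $m = 100$ gives $0.98 > 0.25$). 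What your argument honestly proves is $\mathbb{P}_x\bigl(\Vert x - \hat{x}\Vert > 2\epsilon\bigr) \le \mathcal{N}(\epsilon,\mathcal{M})/(2N)$, equivalently the stated inequality with the covering number taken at scale $\epsilon/2$, which is the standard form of this lemma; the radius-versus-diameter slip is the paper's, not yours. It propagates into the paper's downstream use only as a factor of two inside the covering-number radius of Theorem \ref{main_theorem} (i.e.\ $\mathcal{N}\bigl(\tfrac{1}{2L}\sqrt{(\rho^2 - M_i^T M_j)/2}, \mathcal{M}_i\bigr)$ in place of the stated radius), so none of the qualitative conclusions are affected --- but in your write-up you should state the corrected theorem rather than present the scale change as cosmetic, since here it is the difference between a true statement and a refutable one.
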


Then we provide the proof of Theorem.\ref{main_theorem}. 

\maintheoremtwo*

\begin{proof}
  We decompose the accuracy: 
  \begin{equation}\label{0}
  \begin{aligned}
    \mathbb{P}_{\boldsymbol{x},y}\Big(\max_{c} [M f(\boldsymbol{x};\boldsymbol{w})]_{c} = y\Big) = \sum_{i=1}^{C} p(i) \mathbb{P}_{\boldsymbol{x}|y=i}\Big(\max_{c} [M f(\boldsymbol{x};\boldsymbol{w})]_{c} = y\Big)
  \end{aligned}
  \end{equation}
  where $p(i)$ is the class distribution. Then, we focus on the accuracy within every class $i$.
  \begin{equation*}
  \begin{aligned}
    \mathbb{P}_{\boldsymbol{x}|y=i}(\max_{c} [M f(\boldsymbol{x};\boldsymbol{w})]_{c} = y) = \mathbb{P}_{x|y=i}( \{ (M_i - M_j)^{T} f(\boldsymbol{x};\boldsymbol{w}) > 0 \ \  \text{for any} \ \ j \neq i \} )
  \end{aligned}
  \end{equation*}
  We select the data that is closest to $\boldsymbol{x}$ in $i$ class samples $S_i$, and denote it as
  $$
  \hat{\boldsymbol{x}}(S_i) = \underset{x_1 \in S_i}{\arg \min} \Vert x_1 - x \Vert
  $$
  According to the linear separability, 
  \begin{equation*}
  \begin{aligned}
    (M_i - M_j)^{T}f(\hat{\boldsymbol{x}}(S_i);\boldsymbol{w}) \ge \gamma_{i,j}, \forall j \neq i
  \end{aligned}
  \end{equation*}
  For any $j \neq i$, we have 
  \begin{equation}\label{a}
  \begin{aligned}
    (M_i - M_j)^{T}f(\boldsymbol{x};\boldsymbol{w}) & = (M_i - M_j)^{T} (f(\boldsymbol{x};\boldsymbol{w}) + f(\hat{\boldsymbol{x}}(S_i);w) - f(\hat{\boldsymbol{x}}(S_i);w))
    \\ & = (M_i - M_j)^{T} f(\hat{\boldsymbol{x}}(S_i);w) + (M_i - M_j)^{T} (f(\boldsymbol{x};\boldsymbol{w}) - f(\hat{\boldsymbol{x}}(S_i);w))
    \\ & \ge \gamma_{i,j} - \Vert M_i - M_j \Vert \Vert f(\boldsymbol{x};\boldsymbol{w}) - f(\hat{\boldsymbol{x}}(S_i);w) \Vert
    \\ & \ge \gamma_{i,j} - L \Vert M_i - M_j \Vert \Vert \boldsymbol{x} - \hat{\boldsymbol{x}}(S_i) \Vert
  \end{aligned}
  \end{equation}
  The prediction result is related to the distance between $\boldsymbol{x}$ and $\hat{\boldsymbol{x}}(S_i)$. 
  According to Theorem.\ref{Covering_Number}, we know 
  \begin{equation*}
  \begin{aligned}
    \mathbb{P}_{\boldsymbol{x}|y=i}\Big(\Vert \boldsymbol{x} - \hat{\boldsymbol{x}}(S_i)\Vert > \epsilon\Big) \le \frac{\mathcal{N}(\epsilon, \mathcal{M}_i)}{2 N_i}
  \end{aligned}
  \end{equation*}
  To obtain the correct prediction result, $i.e.$, assure $(\ref{a}) > 0$ for all $j \neq i$, we choose $\epsilon < \min_{j \neq i} \frac{\gamma_{i,j}}{L\Vert M_i - M_j \Vert}$.
  Therefore, we have 
  \begin{equation}\label{b}
  \begin{aligned}
    \mathbb{P}_{\boldsymbol{x}|y=i} \Bigg( \left\{ (M_i - M_j)^{T}f(\boldsymbol{x};\boldsymbol{w}) > 0, \forall j \neq i \right\} \Bigg) 
    & \ge
    \mathbb{P}_{\boldsymbol{x}|y=i} \Bigg(\Vert \boldsymbol{x} - \hat{\boldsymbol{x}}(S_i)\Vert < \min_{j \neq i} \frac{\gamma_{i,j}}{L\Vert M_i - M_j \Vert} \Bigg) 
    \\ & > 1 - \frac{\mathcal{N}(\min_{j \neq i} \frac{\gamma_{i,j}}{L\Vert M_i - M_j \Vert}, \mathcal{M}_i)}{2 N_i}
  \end{aligned}
  \end{equation}
  Plug (\ref{b}) into (\ref{0}) to derive
  \begin{equation*}
  \begin{aligned}
    \mathbb{P}_{\boldsymbol{x},y}\Big(\max_{c} [M f(\boldsymbol{x};\boldsymbol{w})]_{c} = y\Big) 
    & > 1 - \sum_{i=1}^{C} p(i) \frac{\mathcal{N}(\min_{j \neq i} \frac{\gamma_{i,j}}{L\Vert M_i - M_j \Vert}, \mathcal{M}_i)}{2 N_i}
    \\ & = 1 - \sum_{i=1}^{C} p(i) \frac{\max_{j \neq i} \mathcal{N}(\frac{\gamma_{i,j}}{L\Vert M_i - M_j \Vert}, \mathcal{M}_i)}{2 N_i}
  \end{aligned}
  \end{equation*}
  Then using the conlcusions of NeurCol, if maximal norm is denoted as $\rho$, we have 
  \begin{equation*}
  \begin{aligned}
    \frac{\gamma_{i,j}}{L\Vert M_i - M_j \Vert} 
    = 
    \frac{\rho^{2} - M_i^T M_j}{L \sqrt{2\rho^{2} - 2M_i^T M_j}} 
    = 
    \frac{1}{L} \sqrt{\frac{\rho^{2} - M_i^T M_j}{2}}
  \end{aligned}
  \end{equation*}
\end{proof}

\section{Some Details of Experiments}

\subsection{Training Detail}\label{TrainDetails}

In the experiments of Section.\ref{Experiments_main}, we follow \cite{PNAS2020}'s practice. 
For all experiments, we minimize cross entropy loss using 
stochastic gradient descent with 
epoch $200$, momentum $0.9$, batch size $256$ and weight decay $5\times 10^{-4}$. 
Besides, the learning rate is set as $5\times 10^{-2}$ and annealed by ten-fold at $120$ and $160$ epoch for every dataset. 
As for data preprocess, we only perform standard pixel-wise mean subtracting and deviation dividing on images. 
To achieve $100\%$ accuracy on training set, we only use RandomFilp augmentation. 

\subsection{Generation of Equivalent \textit{Grassmannian Frame}}\label{code}

We use Code.\ref{code_} to generate Equivalent \textit{Grassmannian Frame} with 
different directions and orders. Note that the generation of rotation matrix uses
the method of \cite{lezcano2019trivializations, lezcano2019cheap}. 

\begin{codeblock}
  \begin{minted}[
    fontsize=\scriptsize
  ]{Python}
  import torch
  SEED = 1000
  torch.manual_seed(SEED)
  torch.backends.cudnn.deterministic = True
  torch.backends.cudnn.benchmark = False

  def generate_permutation_matrix(dimension):
      return torch.eye(dimension)[torch.randperm(dimension), :]

  def generate_rotation_matrix(dimension):
      A = torch.randn(dimension, dimension)
      return torch.linalg.matrix_exp(A - A.T)

  def generate_grass_frame(class_num, feature_num):
      feature = torch.nn.Linear(class_num, feature_num)
      classifier = torch.nn.Linear(feature_num, class_num)
      labels = torch.range(0, class_num-1).long()
      softmax = torch.nn.Softmax(dim=0)
      optimizer = torch.optim.SGD([
          {"params": feature.parameters(), "lr":0.1},
          {"params": classifier.parameters(), "lr":0.1}
      ])
      for i in range(1000):
          pred = torch.mm(classifier.weight , feature.weight)
          loss_ce = torch.nn.functional.cross_entropy(input=pred.T, target=labels)
          loss_l2 = 1e-1 * (torch.norm(classifier.weight) + torch.norm(feature.weight))
          loss = loss_l2 + loss_ce
          feature.zero_grad()
          classifier.zero_grad()
          loss.backward()
          optimizer.step()
          print("index: {} loss: {}".format(i, loss.item()))

      print(feature.weight.shape) # [feature_num, class_num]
      return feature.weight

  if __name__ == "__main__":
      class_num = 4
      feature_num = 2
      grass_frame = generate_grass_frame(class_num, feature_num)
      permutation = generate_permutation_matrix(class_num)
      rotation = generate_rotation_matrix(feature_num)
      grass_frame = rotation @ grass_frame @ permutation
      torch.save(grass_frame, "save_path")
\end{minted}
  \caption{Generation of Equivalent \textit{Grassmannian Frame}}\label{code_}
\end{codeblock}

\section{Numerical Simulation and Visualization of \textit{Generalized Neural Collapse}}\label{GNCfigure} 
Figure.\ref{Fig2} shows the results of a numerical simulation experiment conducted to illustrate 
the convergence of \textit{Generalized Neural Collapse}. A GIF version of Figure.\ref{Fig2} can be found \href{https://www.mediafire.com/view/1tjzxx1b8t70xss/Fig2.gif/file}{HERE}.
During the simulation, we discovered that the condition $\rho \rightarrow \infty$, which is
believed to be necessary for the occurrence of \textit{Grassmannian Frame} (in the \textbf{NC3} of Theorem.\ref{CE_grass}), 
may not be required. 
This suggests that there may be other ways to prove \textit{Generalized Neural Collapse} with fewer assumptions.

\begin{figure*}[htbp]
  
  \begin{minipage}[t]{1.0\linewidth}
  \centering
    \begin{minipage}[t]{0.3\linewidth}
      \centering  
      \includegraphics[width=4cm]{./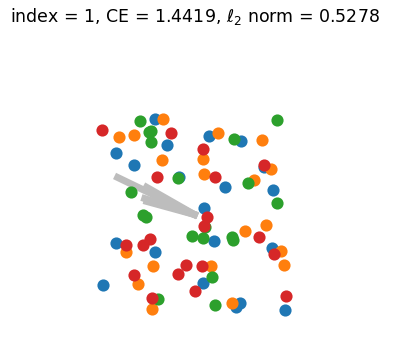}
    \end{minipage}
    \begin{minipage}[t]{0.3\linewidth}
      \centering
      \includegraphics[width=4cm]{./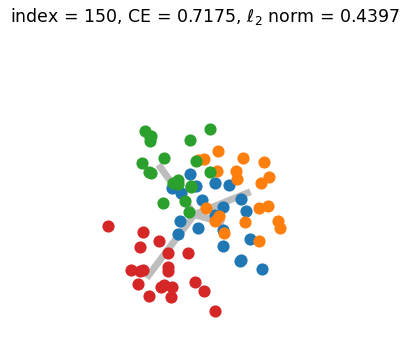}
    \end{minipage}
    \begin{minipage}[t]{0.3\linewidth}
      \centering
      \includegraphics[width=4cm]{./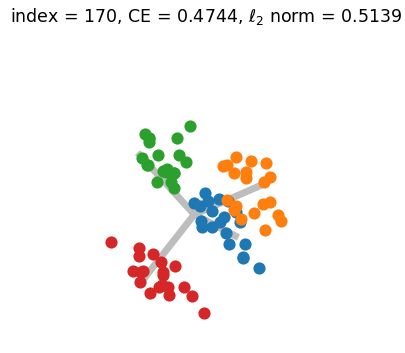}
    \end{minipage}
    \\
    \begin{minipage}[t]{0.3\linewidth}
      \centering
      \includegraphics[width=4cm]{./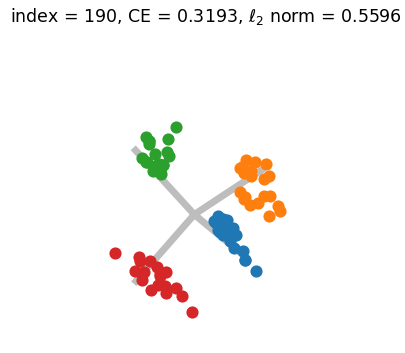}
    \end{minipage}
    \begin{minipage}[t]{0.3\linewidth} 
      \centering
      \includegraphics[width=4cm]{./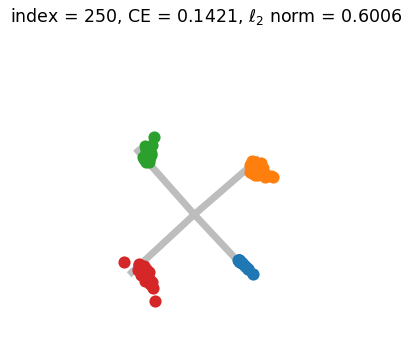}
    \end{minipage}
    \begin{minipage}[t]{0.3\linewidth}
      \centering
      \includegraphics[width=4cm]{./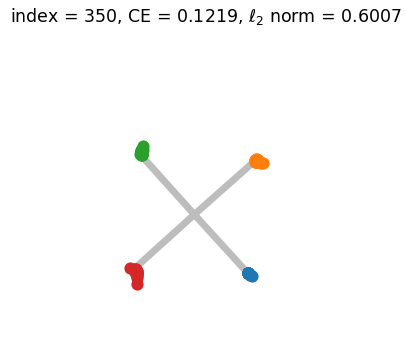}
    \end{minipage}
  \end{minipage}
   \caption{\footnotesize 
      Visualization of \textit{Generalized Neural Collapse}. 
      There are $4$ classes, and the feature space is $2$-dimensional. 
      Every class has $20$ samples. 
      In the figures, the points with different colors represent the features of samples from different classes, 
      and the lines indicate the linear classifier. 
   }\label{Fig2}
 \end{figure*}

\end{document}